\documentclass[11pt]{article}
\usepackage[english]{babel}
\usepackage[left=1in,right=1in,top=1in,bottom=1.5in]{geometry} 
\usepackage[skip=5pt, indent=10pt]{parskip}
\usepackage{amssymb,amsmath, amsthm}
\usepackage{hyperref}
\usepackage{xcolor}
\usepackage{nicefrac}
\usepackage[margin=1cm]{caption}
\usepackage{natbib}
\usepackage[makeroom]{cancel}
\usepackage{bbm}
\usepackage{mathrsfs}
\usepackage{times}
\usepackage{graphicx}
\usepackage{subcaption}
\usepackage{multirow}
\usepackage{diagbox}
\usepackage{siunitx}
\usepackage{cleveref}
\usepackage{xargs}
\usepackage{algorithmic}
\usepackage{caption}

\usepackage{url}

\RequirePackage{cleveref}
\usepackage{algorithm}
\usepackage{algorithmic}
\usepackage{caption}
\usepackage{enumitem}

\definecolor{darkred}{HTML}{880000}
\definecolor{darkblue}{HTML}{000088}

\hypersetup{
  colorlinks,
  linkcolor={darkred},
  citecolor={darkblue}
}

\newtheorem{proposition}{Proposition}[section] % numbered within sections
\newtheorem{lemma}[proposition]{Lemma}         % share counter with proposition
\newtheorem{theorem}[proposition]{Theorem}
\theoremstyle{remark}
\newtheorem{remark}[proposition]{Remark}
\newtheorem{corollary}[proposition]{Corollary}

\numberwithin{equation}{section}

\newcommand{\C}{\mathbb{C}}
\newcommand{\R}{\mathbb{R}}
\newcommand{\N}{\mathbb{N}}

\def\RefLaw{\mathsf{Q}}
\newcommand{\E}{\mathbb{E}}

\renewcommand{\Re}{\operatorname{Re}}

\DeclareMathOperator*{\supp}{sup}

\def\amin{a_{-}}
\def\amax{a_{+}}
\def\bmin{b_T^{-}}
\def\bmax{b_T^{+}}
\def\cmin{c_{-}}
\def\cmax{c_{+}}
\def\cTmin{c_T^{-}}
\def\cTmax{c_T^{+}}
\def\rhomin{\rho_{0,-}}
\def\aGmin{a_{\mathcal{G}}}

\def\cGmin{c_{\mathcal{G}}}
\def\cGmax{c_{\mathcal{G}}^{+}}
\def\aStarmin{a_{-}^\star}
\def\aStarmax{a_{+}^\star}
\def\cStarmin{c_{-}^\star}
\def\cStarmax{c_{+}^\star}
\def\supp{\mathrm{supp}}
\def\clip{\mathrm{clip}}

\title{Schr\"odinger bridge problem via empirical risk minimization}

\author{
Denis Belomestny\thanks{Duisburg-Essen University, Germany}
\and
Alexey Naumov\thanks{HSE University, Russian Federation}
\and
Nikita Puchkin\footnotemark[2]
\and
Denis Suchkov \footnotemark[2]
}

\date{}

\begin{document}

\maketitle

\begin{abstract}
We study the Schr\"odinger bridge problem when the endpoint distributions are available only through samples. Classical computational approaches estimate Schr\"odinger potentials via Sinkhorn iterations on empirical measures and then construct a time-inhomogeneous drift by differentiating a kernel-smoothed dual solution. In contrast, we propose a learning-theoretic route: we rewrite the Schr\"odinger system in terms of a single positive \emph{transformed} potential that satisfies a nonlinear fixed-point equation and estimate this potential by \emph{empirical risk minimization} over a function class. We establish uniform concentration of the empirical risk around its population counterpart under sub-Gaussian assumptions on the reference kernel and terminal density. We plug the learned potential into a stochastic control representation of the bridge to generate samples. We illustrate performance of the suggested approach with numerical experiments.
\end{abstract}

\section{Introduction}
The Schr\"odinger bridge problem (SBP) provides a principled way to interpolate between two probability distributions by selecting, among all stochastic processes matching prescribed endpoint marginals, the one that is closest to a reference dynamics in relative entropy. Formally, let $(X_t)_{t\in[0,T]}$ be a Markov process on $\mathbb{R}^d$ with reference law
$\RefLaw$ on path space and transition densities $(q_t)_{t\in(0,T]}$ with respect to the Lebesgue measure, so that
$
\RefLaw\bigl(X_T \in [y, y + dy) \,\big|\, X_0 = x\bigr)
= q_T(x,y)\,dy$. 
We are given two probability densities $\rho_0$ and $\rho_T$ on $\R^d$
and consider the class
\[
\mathcal{P}(\rho_0,\rho_T)
\;:=\;
\Bigl\{\, P \ll \RefLaw \,\Big|\,
        P\circ X_0^{-1} = \rho_0\,dx,\;
        P\circ X_T^{-1} = \rho_T\,dx
\Bigr\}.
\]
The (dynamic) SBP consists in finding the
probability measure $P^\star\in\mathcal{P}(\rho_0,\rho_T)$ which is closest
to $\RefLaw$ in the sense of relative entropy:
\begin{equation}\label{eq:SBP-dynamic}
  P^\star
  \;\in\;
  \arg\min_{P\in\mathcal{P}(\rho_0,\rho_T)}
  \mathcal{H}(P\,\|\,\RefLaw),
  \qquad
  \mathcal{H}(P\,\|\,\RefLaw)
  := \int \log\!\left(\frac{dP}{d\RefLaw}\right)\,dP.
\end{equation}
It is a classical result (see, e.g.,~\cite{ChenGeorgiouPavon2016}) that the minimizer $P^\star$
exists under mild conditions and has a {\em Schr\"odinger factorization} of the form
\begin{equation}\label{eq:schro-factorization}
  \frac{dP^\star}{d\RefLaw}(X)
  \;=\;
  \nu_0(X_0)\,\nu_T(X_T),
\end{equation}
for some nonnegative measurable functions (Schr\"odinger potentials)
$\nu_0,\nu_T : \mathbb{R}^d \to (0,\infty)$.
Taking time--$0$ and time--$T$ marginals in \eqref{eq:schro-factorization} yields
the system
\begin{align}
  \rho_0(x)
  &= \nu_0(x)\int_{\mathbb{R}^d} q_T(x,z)\,\nu_T(z)\,dz,
  \label{eq:schro-system-0}
  \\
  \rho_T(y)
  &= \nu_T(y)\int_{\mathbb{R}^d} q_T(x,y)\,\nu_0(x)\,dx.
  \label{eq:schro-system-T}
\end{align}
The corresponding ``static'' Schr\"odinger bridge problem is the entropy minimization
over couplings of $(X_0,X_T)$:
\begin{equation}\label{eq:SBP-static}
  \pi^\star \;\in\;
  \arg\min_{\pi\in\Pi(\rho_0,\rho_T)}
  \mathcal{H}\bigl(\pi \,\|\, \pi^{\mathrm{ref}}\bigr),
  \qquad
  \pi^{\mathrm{ref}}(dx,dy)
   = \rho_0(x)\,q_T(x,y)\,dx\,dy,
\end{equation}
where $\Pi(\rho_0,\rho_T)$ denotes the set of probability measures on
$\mathbb{R}^d\times\mathbb{R}^d$ with marginals $\rho_0$ and $\rho_T$.
The optimizer $\pi^\star$ can be written as
\begin{equation}
\label{optimal coupling main text}
    \pi^\star(dx,dy)
  \;=\;
  \nu_0(x)\,q_T(x,y)\,\nu_T(y)\,dx\,dy,
\end{equation}
and the potentials $(\nu_0,\nu_T)$ solve \eqref{eq:schro-system-0}--\eqref{eq:schro-system-T}.
In the case $q_T(x)$ is the transition density of the SDE $dX_t = 
\sigma dW_t$ with some $\sigma > 0$,  \eqref{optimal coupling main text} coincides with the solution of the entropy-regularized optimal transport (EOT). From a computational standpoint, EOT is often solved by Sinkhorn iterations on empirical measures, yielding discrete approximations of the Schr\"odinger potentials. Building on this viewpoint, recent works plug these (discrete) potentials into a stochastic control representation of the bridge, producing a drift field after additional smoothing and differentiation. While effective in moderate dimensions, this pipeline raises two conceptual challenges from a learning perspective. First, the potential is estimated only on the support of the empirical samples, and must be extended off-sample (e.g.\ via kernel smoothing) to yield a continuous drift. Second, the overall error blends optimization error (finite Sinkhorn iterations), statistical error (finite samples), and discretization/smoothing error, which complicates generalization analysis.
\par
In this paper, we propose to estimate Schr\"odinger potentials directly in a functional form as a learning problem. We rewrite the Schr\"odinger system in terms of a single positive transformed potential $g$ that satisfies a nonlinear fixed-point equation
\begin{equation}
g = \mathcal{C}[g],
\end{equation}
for a nonlinear integral operator $\mathcal{C}$ depending on $\rho_0,\rho_T$, and $q_T$. When only samples are available,
\[
X_1,\dots,X_N \sim \rho_0,
\qquad
Y_1,\dots,Y_M \sim \rho_T,
\]
we form an empirical operator $\widehat{\mathcal{C}}_{N,M}$ by replacing expectations with empirical averages. Rather than enforcing the fixed point through iterative proportional fitting (Sinkhorn), we estimate $g$ by minimizing an empirical residual loss over a hypothesis class $\mathcal{G}$ of positive functions (e.g.\ neural networks):
\begin{equation}
\widehat{g}_{N,M} \in \arg\min_{g\in\mathcal{G}}
\frac{1}{M}\sum_{j=1}^M
\ell\!\left(g(Y_j),\,\widehat{\mathcal{C}}_{N,M}[g](Y_j)\right),
\end{equation}
where $\ell$ is minimized at equality (e.g.\ squared loss). The resulting objective can be optimized using stochastic-gradient methods. Crucially, in contrast to Sinkhorn-type algorithms, which implicitly produce potentials only at sampled locations and require interpolation or smoothing to obtain continuous objects, the learned estimator is continuous by construction. This is essential in downstream tasks such as:
computing drift fields for Schrödinger bridges, simulating controlled diffusions and
sensitivity analysis and gradient-based control. Moreover, a key benefit of the functional-learning viewpoint is that it naturally accommodates sparse representations of the Schrödinger potential $g$. Depending on the choice of hypothesis class,  one may obtain: sparse basis expansions (e.g. wavelets, Fourier features, or kernel dictionaries), low-rank representations induced by bottleneck neural networks and
implicit sparsity through regularization. Such sparsity can lead to faster evaluation of the potential and its gradients. In translation-invariant settings, sparsity can be particularly effective, as the potential often exhibits low-frequency structure or localized features that can be captured with a small number of active components.
\par
Once a potential (or log-potential) is learned, we use the stochastic control representation of the Schr\"odinger bridge; see \citep{Daipra1991}. Let
\(h(t, x)\) denote the time-evolved Schrödinger potential given by
\[
    h(t, x) = \int \nu_T(y) \, q_{T-t}(x, y) \, dy.
\]
Then the transition density of the process $(X_t^\star)_{t \in (0, T]}$ with the law $P^\star$ is given by
\[
q^h(y, T | x, t) = \frac{q_{T-t}(x, y)h(T, y)}{h(x,t)}.
\]
In the case of diffusion processes  $dX_t = b dt + \sigma d W_t$ this corresponds to the change of drift of basic process $(X_t)_{t \in (0, T]}$ by $a \nabla \log h$ where $a = \sigma \sigma^\top$. This yields a practical sampler: starting from $x_0\sim\rho_0$, we simulate an SDE with the learned drift (e.g.\ via Euler--Maruyama) to obtain approximate bridge samples at intermediate times and at time $T$. 

\paragraph{Contributions} Our key \emph{contributions} could be summarized as follows: 
\begin{itemize}[noitemsep, nolistsep]
\item We reformulate the Schr\"odinger system as a single nonlinear fixed-point equation for a transformed potential $g$, and propose an ERM estimator based on minimizing the empirical fixed-point residual over some class of transformed potentials $\mathcal{G}$.
This gives a flexible framework for the study of the empirical Schr\"odinger problem.

\item When the reference kernel $\RefLaw$ is Gaussian, we show that the population fixed point $g^\star$ admits a rapidly converging  Hermite function expansion.
We derive an explicit $L^2$ approximation bound for the degree-$n$ Hermite function projector and combine it with the
uniform concentration bound for the empirical risk to obtain an end-to-end risk guarantee  with near-parametric dependence on sample size up to polylog factors.

\item 
We numerically illustrate the performance of the method on (i) two-dimensional Swiss roll to S-curve example, (ii) Gaussian mixture transport under train--test shift,  and (iii) single-cell population interpolation. We demonstrate performance that are comparable or improve on existing baselines.
\end{itemize}

\subsection{Related work}
The SBP originates in Schr\"odinger’s 1932 work \cite{Schrodinger1932} on the most likely
stochastic evolution between two prescribed marginals under a reference dynamics.
Modern treatments emphasize its connections to reciprocal processes, large deviations, and optimal
transport; see, e.g., the survey of \citet{Leonard2014} for background and further references.
A stochastic control viewpoint on SBP (and related reciprocal diffusions) appears in
\citet{Daipra1991}, and computational perspectives exploiting projective/Hilbert-metric structure
were developed in, e.g., \citet{ChenGeorgiouPavon2016}.

\paragraph{Entropy-regularized optimal transport and Sinkhorn}
The static SBP coincides with EOT, which has become a
central tool in computational OT due to its stability and algorithmic efficiency.
EOT can be solved in the dual by the Sinkhorn algorithm (iterative proportional fitting / matrix scaling),
popularized in ML by \citet{Cuturi2013} and rooted in earlier matrix-scaling results such as
\citet{FranklinLorenz1989}; see also \citet{PeyreCuturi2019} for a comprehensive overview.
Recent theoretical work has sharpened our understanding of Sinkhorn’s contraction and convergence
properties beyond classical bounded/compact settings and has provided non-asymptotic bounds for
the iterates and their gradients; see, e.g., \citet{ConfortiDurmusGreco2023,GrecoNobleConfortiDurmus2023}.

\paragraph{Estimating Schr\"odinger bridges from samples}
In the statistical setting where $\rho_0$ and $\rho_T$ are only accessible through samples, a standard
approach is to solve EOT between empirical measures (via Sinkhorn) to obtain discrete approximations
of the Schr\"odinger potentials and then construct a sampler for the dynamic bridge.
A representative recent instance is SinkhornBridge of \citet{PooladianNilesWeed2024}, which plugs
(approximate) dual solutions into a stochastic control representation to produce a time-inhomogeneous
drift.
Our approach differs at the estimation stage: instead of computing discrete potentials by Sinkhorn
iterations on empirical measures and subsequently extending/smoothing them, we estimate a
\emph{continuous} potential by ERM over a function class (e.g.\ neural
networks). This viewpoint is tailored to learning-theoretic analysis (uniform concentration, approximation
error) and yields a potential that generalizes off-sample by construction.

\paragraph{SBP in generative modeling and data-to-data translation}
A growing body of work connects SBP to modern generative modeling via controlled diffusions and
score-based methods. Examples include diffusion Schr\"odinger bridges and their applications to
generative modeling \citep{DeBortoliThorntonHengDoucet2021}, learning-based SB variants such as
neural Lagrangian Schr\"odinger bridges \citep{KoshizukaSato2022}, and Schr\"odinger bridge matching
objectives \citep{ShiDeBortoliCampbellDoucet2023}.
Related computationally efficient alternatives include LightSB \citep{KorotinGushchinBurnaev2024}, LightSB-OU 
\citep{puchkin2025tightboundsschrodingerpotential}. 
In applications to biological time interpolation and dynamical modeling, SB/OT ideas also appear in,
e.g., TrajectoryNet \citep{TongHuangWolfVanDijkKrishnaswamy2020} and subsequent simulation-free or
matching-based formulations \citep{TongEtAl2023b}, as well as minibatch OT-based training objectives
\citep{TongEtAl2023a}. Continuous normalizing-flow baselines are often trained with stabilizing
regularization and architectural constraints; see, e.g., \citet{FinlayJacobsenNurbekyanOberman2020}.

\paragraph{Notations}
Given $l, u: \R^d \to \R$ the bracket $[l,u]$ is defined as the collection of all functions $f : \R^d \to \R$ for which $l(x)  \le f(x)  \le u(x)$ for all $x \in \R^d$. For a class $\mathcal F$ of functions $f: \R^d \to \R$ define its bracketing number as follows $\mathcal{N}_{[]}\!\left(\mathcal{F},\|\cdot\|_\infty,\delta\right)
:= \inf\Bigl\{\, m\in\mathbb{N} :
\exists (l_j,u_j)_{j=1}^m\ \text{s.t. } \|u_j-l_j\|_\infty \le \delta,\ 
\mathcal{G}\subseteq \bigcup_{j=1}^m [l_j,u_j] \Bigr\}$. For $f: \R^d \to \R$ define its Fourier transform by $\widehat{f}(\omega):=\int_{\R^d} f(x)e^{-i\omega\cdot x}\,dx$. 
For $K \subset \R^d$ we use notation $\|f\|_{L^\infty(K)}:=\sup_{x \in K} |f(x)|$. For a probability measure $\mu$ on $(\R^d, \mathcal B(\R^d))$ we write $\|f\|_{L^p(\mu)}: = \int_{\R^d} |f(x)| \mu(dx)$. We also define clipping by $\clip(f, a, b) = f$ if $a \le f \le b$, $a$ if $f < a$ and $b$ if $f > b$.

\section{Estimator based on the empirical risk minimization}

First we  recast \eqref{eq:schro-system-0}, \eqref{eq:schro-system-T} in terms of a single transformed potential and obtain a nonlinear
fixed--point problem. Define
\begin{equation}\label{eq:def-g}
  g(y) \;:=\; \frac{\rho_T(y)}{\nu_T(y)},
\end{equation}
that is, $\nu_T(y) = \rho_T(y)/g(y)$. From \eqref{eq:schro-system-0} we obtain
\begin{equation}\label{eq:nu0-in-terms-of-g}
  \nu_0(x)
  \;=\;
  \frac{\rho_0(x)}{\displaystyle \int q_T(x,z)\,\nu_T(z)\,dz}
  \;=\;
  \frac{\rho_0(x)}{\displaystyle \int q_T(x,z)\,\frac{\rho_T(z)}{g(z)}\,dz}.
\end{equation}
For each $x\in\mathbb{R}^d$ define
\begin{equation}
\label{D definition intro}
  D_{g}(x)
  :=
  \int_{\mathbb{R}^d} q_T(x,z)\,\frac{\rho_T(z)}{g(z)}\,dz.
\end{equation}
Substituting \eqref{eq:nu0-in-terms-of-g} and \eqref{eq:def-g} into
\eqref{eq:schro-system-T} gives
\[
\begin{aligned}
  \rho_T(y)
  &= \frac{\rho_T(y)}{g(y)}\int_{\mathbb{R}^d}
     \frac{q_T(x, y) \rho_0(x)}{D_g(x)}\,dx.
\end{aligned}
\]
Canceling the common factor $\rho_T(y)$ on both sides and multiplying by $g(y)$,
we obtain the nonlinear fixed--point equation
\begin{equation}\label{eq:def-C-operator}
  g(y) =
  \int_{\mathbb{R}^d}
    \frac{q_T(x, y) \rho_0(x)}{D_g(x)}
    \,dx =:\mathcal{C}[g](y) \, 
\end{equation}
so the (static) Schr\"odinger bridge problem can be equivalently formulated
as the problem of finding a positive function $g$ solving the nonlinear
fixed--point equation
\(g = \mathcal{C}[g]\). 
Once a fixed point $g$ of $\mathcal{C}$ is found, the Schr\"odinger potentials
are recovered via $\nu_T(y) = \rho_T(y)/g(y)$ and \eqref{eq:nu0-in-terms-of-g},
and the optimal Markov process $P^\star$ is obtained by tilting the reference
process $\RefLaw$ according to \eqref{eq:schro-factorization}.

In many applications the marginal densities $\rho_0$ and $\rho_T$ are not
available explicitly. Instead, we observe independent samples
\[
  X_1,\ldots,X_N \sim \rho_0,
  \qquad
  Y_1,\ldots,Y_M \sim \rho_T,
\]
and seek to recover the fixed point $g$ solving $g = \mathcal{C}[g]$. This gives rise to a
statistical version of the Schr\"odinger bridge problem. Define the empirical measures
\[
  \widehat{\rho}_0^N := \frac{1}{N}\sum_{i=1}^N \delta_{X_i},
  \qquad
  \widehat{\rho}_T^M := \frac{1}{M}\sum_{j=1}^M \delta_{Y_j}.
\]
Replacing $\rho_T$ by its empirical counterparts in \eqref{D definition intro} we obtain 
\begin{equation*}
    \widehat D(x):= \frac{1}{M} \sum_{k=1}^M 
          q_T(x, Y_k)\,\frac{1}{g(Y_k)} \, .
\end{equation*}
In Lemma \ref{lemm: key properties} we show that $\overline{D} \geq D_g(x) \geq \underline{D}$ for all $x \in \supp(\rho_0)$. But we can't prove the same lower bound for $\widehat D(x)$. We introduce a clipping operator $\mathcal{T}_{[a,b]}$  of the form 
$\mathcal{T}_{[a,b]}[f](x) = \clip(f, a, b)$.
Replacing $\rho_0$ and $\rho_T$ by their empirical counterparts in
\eqref{eq:def-C-operator} yields the empirical nonlinear operator
\begin{equation}\label{eq:empirical-C}
  \widehat{\mathcal{C}}_{N,M}[g](y)
  :=
  \frac{1}{N}\sum_{i=1}^N
     \frac{
        q_T(X_i, y)
     }{
       \mathcal{T}_{[\underline{D},\overline{D}]}[\widehat D](X_i)
     }.
\end{equation}
The fixed--point condition $g = \mathcal{C}[g]$ is approximated by enforcing
\[
  g(Y_j) \;\approx\; \widehat{\mathcal{C}}_{N,M}[g](Y_j),
  \qquad j = 1,\dots,M.
\]
To estimate $g$, we recast this system as an ERM
problem. Let $\ell:(0,\infty)\times(0,\infty)\to[0,\infty)$ be a loss function
minimized at equality, e.g. $\ell(u,v)=\tfrac12(u-v)^2$. The empirical risk is defined by
\begin{equation}\label{eq:empirical-risk}
  \widehat{\mathcal{R}}_{N,M}(g)
  :=
  \frac{1}{M}\sum_{j=1}^M
  \ell\!\left(
      g(Y_j),\,
      \widehat{\mathcal{C}}_{N,M}[g](Y_j)
  \right).
\end{equation}
The statistical Schr\"odinger bridge estimator is then given by
\begin{equation}\label{eq:erm-problem}
  \widehat{g}_{N,M}
  \;\in\;
  \arg\min_{g\in\mathcal{G}}
  \widehat{\mathcal{R}}_{N,M}(g),
\end{equation}
where $\mathcal{G}$ is an admissible class of positive functions or
parametric models (e.g.\ neural networks). In the next section, we study the behavior of $\widehat{\mathcal{R}}_{N,M}(g)$ under a set of realistic assumptions that allow us to apply tools from empirical process theory.

\section{Main results}
We state the main assumptions used throughout the paper. We start from the assumptions on the kernel $\RefLaw$ and densities $\rho_0, \rho_T$. 
\begin{itemize}[noitemsep, nolistsep]
  \item[(Q)] 
   There exist constants $\cmin, \cmax>0$ and $\amin, \amax >0$ such that
  \[
    \cmin \exp(-\amin \|x-y\|^2)
    \;\le\;
    q_T(x,y)
    \;\le\;
    \cmax \exp(-\amax \|x-y\|^2),
    \qquad \forall x,y\in \R^d.
  \]
  Moreover, $q_T$ is globally Lipschitz with constant $L_q.$
  \item[(R0)] 
  There exist $x_0\in \R^d$ and two positive real numbers $r_0,R_0$   such that 
  $B(x_0,r_0)\subseteq\mathrm{supp}(\rho_0)\subseteq B(x_0,R_0).$
  Moreover, there is a constant $\rhomin >0$ such that
  \[
    \rho_0(x)\ge \rhomin, \qquad \forall x\in B(x_0,r_0).
  \]
  \item[(RT)] 
  There exist constants $0< \cTmin\le \cTmax<\infty$ and $\bmin, \bmax>0$ such that
  \[
    \cTmin \exp(-\bmin \|y\|^2)
    \;\le\;
    \rho_T(y)
    \;\le\;
    \cTmax \exp(-\bmax \|y\|^2),
    \qquad \forall y\in\mathbb{R}^d.
  \]
  \end{itemize}

The nondegeneracy condition in \emph{(R0)} prevents the normalization factors from becoming
arbitrarily small on $\mathrm{supp}(\rho_0)$ and is used to obtain uniform bounds for
$D_g(x)$ and two-sided bound for the fixed point $g^\star$. Note that we don't assume that $\rho_T$ is compactly supported. Instead, we assume two-sided sub-Gaussian behavior. We now impose additional assumptions on the loss.
\begin{itemize}[noitemsep, nolistsep]
  \item[(L)] The loss $\ell:(0,\infty)\times(0,\infty)\to\mathbb{R}$ is locally bounded
              and jointly Lipschitz, i.e. for any $K>0$ there exist constants $B_\ell=B_\ell(K),L_\ell=L_\ell(K)>0$
              such that for all $(u,v),(u',v')\in(0,K)^2$,
              \[
                |\ell(u,v)| \le B_\ell,
                \qquad
                |\ell(u,v)-\ell(u',v')|
                \le
                L_\ell\bigl(|u-u'| + |v-v'|\bigr).
              \]
\end{itemize}
Finally, we impose assumptions on the hypothesis class.
\begin{itemize}[noitemsep, nolistsep]
 \item[(G)] Assume  that the class
$\mathcal G$ satisfies the following assumptions:
there exist constants $\cGmin > 0$ and $\aGmin >0$ such that for all
$g\in\mathcal G$,
\begin{equation}\label{eq:G-gauss-control}
  \cGmin e^{-\aGmin \|y\|^2} \leq g(y) \leq \cGmax , 
  \qquad \forall y\in\mathbb{R}^d\, .
\end{equation}
\end{itemize}

Note that \emph{(G)} enforces that all candidates $g\in\mathcal G$ are uniformly bounded above and,
critically, bounded below by a Gaussian. The lower bound prevents instabilities caused by the ratio
$q_T(x,z)/g(z)$ in the normalizer $D_g$ and yields a manageable envelope for empirical-process
arguments. Below we additionally assume that  $b_T^+>4a_G$ which ensures that these
envelopes are square-integrable under $\rho_T$.

\begin{theorem}
\label{thm:uniform-concentration-risk}
Suppose that the assumptions \emph{(Q)}, \emph{(R0)}, \emph{(RT)}, \emph{(L)}, \emph{(G)} hold with $\bmax > 4 \aGmin$ and $K = \cGmax(1+(\cmax/\underline{D})$. 
Then we have  for all 
$N,M\ge1$,
\begin{equation}
\label{minimizer of R}
\E [\mathcal{R}(\widehat{g}_{N,M})] \le \inf_{g \in \mathcal G} \mathcal{R}(g) + 2 \E
  \Big[
    \sup_{g\in\mathcal{G}}
    \bigl|
      \widehat{\mathcal{R}}_{N,M}(g)
      -
      \mathcal{R}(g)
    \bigr|
  \Big]\, 
  \end{equation}
and
\begin{equation*}
  \E
  \Big[
    \sup_{g\in\mathcal{G}}
    \bigl|
      \widehat{\mathcal{R}}_{N,M}(g)
      -
      \mathcal{R}(g)
    \bigr|
  \Big] \le  C_1\left(\sqrt{\frac{\log\log(N)}{N}}+\frac{1}{\sqrt{M}}\right) \int_{0}^{C_2} \sqrt{\log \mathcal N_{[]}\!\Big(\mathcal G,\|\cdot\|_\infty,\frac{\varepsilon}{C_3}\Big)} \, d\varepsilon \, ,
\end{equation*}
where $C_1,C_2$ and $C_3$ are positive constants depending on constants from the assumptions \emph{(Q)}, \emph{(R0)}, \emph{(RT)}, \emph{(L)} and \emph{(G)}. 
\end{theorem}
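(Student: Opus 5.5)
The first inequality \eqref{minimizer of R} is the standard ERM decomposition. Here the population risk is understood as $\mathcal{R}(g)=\E_{Y\sim\rho_T}\,\ell(g(Y),\mathcal{C}[g](Y))$. Fix an arbitrary $g\in\mathcal G$ and write
\[
\mathcal R(\widehat g_{N,M})-\mathcal R(g)=\bigl[\mathcal R(\widehat g_{N,M})-\widehat{\mathcal R}_{N,M}(\widehat g_{N,M})\bigr]+\bigl[\widehat{\mathcal R}_{N,M}(\widehat g_{N,M})-\widehat{\mathcal R}_{N,M}(g)\bigr]+\bigl[\widehat{\mathcal R}_{N,M}(g)-\mathcal R(g)\bigr].
\]
The middle term is nonpositive by \eqref{eq:erm-problem}. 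Each of the outer terms is bounded by $\sup_{\mathcal G}|\widehat{\mathcal R}_{N,M}-\mathcal R|$. Taking expectations and then the infimum over $g$ gives \eqref{minimizer of R}. Measurability of the supremum will be handled via the bracketing structure, since the class is separable in $\|\cdot\|_\infty$.

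For the second bound I would first secure boundedness so that (L) applies with the stated $K$. By (G), $g\le\cGmax$. By Lemma \ref{lemm: key properties}, $D_g\in[\underline D,\overline D]$ on $\supp(\rho_0)$, and the clipping forces the same for the empirical denominator. Together with $q_T\le\cmax$ this gives $\mathcal C[g],\widehat{\mathcal C}_{N,M}[g]\le \cmax/\underline D$, so all loss arguments lie in $(0,K)$. I then split
\[
|\widehat{\mathcal R}_{N,M}(g)-\mathcal R(g)|\le \underbrace{\Bigl|\tfrac1M\textstyle\sum_j \ell(g(Y_j),\mathcal C[g](Y_j))-\mathcal R(g)\Bigr|}_{I}+L_\ell\,\underbrace{\tfrac1M\textstyle\sum_j|\widehat{\mathcal C}_{N,M}[g](Y_j)-\mathcal C[g](Y_j)|}_{II}.
\]
Term $II$ is further split. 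One part replaces $\rho_0$ by $\widehat\rho_0^N$ in the outer integral with the exact $D_g$: since $D_g\ge\underline D$, the map $x\mapsto q_T(x,y)/D_g(x)$ is bounded. The other part compares $1/\mathcal T[\widehat D](X_i)$ with $1/D_g(X_i)$. Clipping is $1$-Lipschitz and $D_g$ already lies in the clipping interval, so this difference is at most $\underline D^{-2}|\widehat D(X_i)-D_g(X_i)|$.

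Each piece is then an empirical process indexed by $\mathcal G$, and I would control it by Dudley-type bracketing entropy integrals. The term $I$ and the $\widehat D-D_g$ term are indexed through $1/g(Y_k)$, and the key step is transferring brackets of $\mathcal G$ to these classes. If $l\le g\le u$ with $\|u-l\|_\infty\le\delta$, then
\[
\Bigl|\tfrac1{g}-\tfrac1{l}\Bigr|\le \frac{\delta}{\cGmin^2}\,e^{2\aGmin\|y\|^2}.
\]
The brackets are therefore weighted by the envelope $e^{2\aGmin\|y\|^2}$. Its square is integrable under $\rho_T$ exactly when $\bmax>4\aGmin$, which explains that hypothesis. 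For $\mathcal C[g]$, which depends on $g$ through $D_g$, I would use that $D_g\mapsto 1/D_g$ is Lipschitz on $[\underline D,\overline D]$, with $D_g$ itself bracketed through the weighted brackets above.

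The step I expect to be the main obstacle is the $\sup_x$ in $\widehat D - D_g$. I need $\sup_{g}\max_{i\le N}|\widehat D(X_i)-D_g(X_i)|$, which is uniform over both $g$ and $x\in B(x_0,R_0)$. I would handle the compact $x$-range using the Lipschitz property of $q_T$ from (Q) together with a net. With unbounded weighted envelopes, a bracketing maximal inequality (for instance the Bernstein-type version of van der Vaart--Wellner, Thm 2.14.2) gives a rate $M^{-1/2}$ times the entropy integral. The cross term over the $N$ outer samples, bounded via Lipschitz constants, gives $N^{-1/2}$ times the same entropy integral. The $\sqrt{\log\log N}$ factor arises from the maximum over the $N$ points, or from a peeling over dyadic scales. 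I would obtain it through a union/chaining bound over $\log N$ shells for the heavy-tailed envelope, accepting this factor rather than trying to sharpen it. Collecting the constants yields $C_1,C_2,C_3$.
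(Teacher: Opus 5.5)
Your skeleton matches the paper's: the ERM inequality, the intermediate risk built with the population operator, the split of $\widehat{\mathcal C}_{N,M}[g]-\mathcal C[g]$ into a clipped-denominator part and an outer-sample part, the weighted brackets whose square-integrability uses $\bmax>4\aGmin$, and a net over $x\in B(x_0,R_0)$ for $\sup_x|\widehat D_g(x)-D_g(x)|$.

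The gap is in the outer-sample part. There you must control $\frac1M\sum_j|\widetilde{\mathcal C}_N[g](Y_j)-\mathcal C[g](Y_j)|$, where $\widetilde{\mathcal C}_N[g](y)=\frac1N\sum_i q_T(X_i,y)/D_g(X_i)$. This is an empirical process in the $X_i$ indexed by the pair $(g,y)$, and $y$ ranges over the unbounded set where the $Y_j$ fall. It is not an empirical process indexed by $\mathcal G$ alone, and a Lipschitz-in-$y$ net cannot be placed on all of $\R^d$. Your proposal never says how the $y$-dependence is handled.

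Relatedly, the mechanism you give for $\sqrt{\log\log N}$ is not a real step. The maximum over the $X_i$ was already absorbed by your $x$-net at no cost in $N$. The unbounded envelope $e^{\aGmin\|z\|^2}$ sits in the $\widehat D_g$ process, which averages over the $Y_k$, so peeling over it could only produce a factor in $M$.

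The paper fills this by truncating in $y$. On $K=B(0,R)$ it brackets $\{x\mapsto q_T(x,y)/D_g(x): g\in\mathcal G,\ y\in K\}$ using Lipschitz constants $\cmax L_{D,\infty}/\underline D^2$ in $g$ and $L_q/\underline D$ in $y$. This adds $C_d\log(L_q\,\mathrm{diam}(K)/(\underline D\varepsilon))$ to the entropy. For $\|y\|\ge R$, both $\mathcal C[g](y)$ and $\widehat{\mathcal C}_{N,M}[g](y)$ are at most $(\cmax/\underline D)e^{-\amax(R-R_0)^2}$, because $\rho_0$ is compactly supported and every denominator is at least $\underline D$. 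Taking $R\asymp\sqrt{\log N}$ makes the tail $O(N^{-1/2})$ and turns the covering term into $\sqrt{\log\log N/N}$. That is the actual origin of the factor.

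A cheaper repair within your framework uses the independence of the two samples. Bound $\sup_g\frac1M\sum_j|\cdot|\le\frac1M\sum_j\sup_g|\cdot|$. Conditionally on $Y_j=y$, the class $\{q_T(\cdot,y)/D_g: g\in\mathcal G\}$ has envelope $\cmax/\underline D$ and a Lipschitz constant in $g$ that does not depend on $y$. Theorem 3.5.13 of Gin\'e--Nickl then gives $N^{-1/2}$ times the entropy integral uniformly in $y$, with no $\log\log N$ at all. Conditioning on $X_i$ in the same way also makes your $x$-net in the $\widehat D_g$ term unnecessary. This implies the stated bound after enlarging $C_1$; note the statement's right-hand side is only meaningful for $N\ge 3$.

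One minor technicality in your bracket transfer: a lower function $l$ need not satisfy the lower bound in (G). Replace it by $\max\{l,\cGmin e^{-\aGmin\|z\|^2}\}$, and $u$ by $\min\{u,\cGmax\}$, before inverting. This remains a bracket for $\mathcal G$ and does not enlarge it.
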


\begin{proof}[Sketch of proof] One can find a detailed proof in the section \ref{sec uniform concentration}. First note that \eqref{minimizer of R} follows from
$$
\mathcal{R}(\widehat{g}_{N,M}) - \inf_{g \in \mathcal G} \mathcal{R}(g) = \mathcal{R}(\widehat{g}_{N,M}) - \widehat{\mathcal{R}}(\widehat{g}_{N,M}) + \inf_{g \in \mathcal G}\widehat{\mathcal{R}}(g)  - \inf_{g \in \mathcal G} \mathcal{R}(g)\, .
$$
To prove the concentration bound we rewrite
\[
  \widehat{\mathcal{R}}_{N,M}(g) - \mathcal{R}(g)
  = \bigl(\widehat{\mathcal{R}}_{N,M}(g) - \widetilde{\mathcal{R}}_M(g)\bigr)+ \bigl(\widetilde{\mathcal{R}}_M(g) - \mathcal{R}(g)\bigr)\, ,
\]
where $\widetilde{\mathcal{R}}_M(g)
  :=
  \frac{1}{M}\sum_{j=1}^M
  \ell\bigl(g(Y_j),\mathcal{C}[g](Y_j)\bigr)$ is an intermediate risk. 
\paragraph{Step 1} Let $\mathcal{F}
  := \{ f_g : g\in\mathcal{G}\}$, where $f_g(y)
  :=
  \ell(g(y) ,\mathcal{C}[g](y))$. Lemma \ref{lemm: key properties} implies that $\mathcal{C}[g]$ is Lipschitz and bounded. More precisely, there exists a constant $L_{\mathcal C, \infty} > 0$ such that
\[
  |\mathcal{C}[g](y) - \mathcal{C}[g'](y)|
  \le
  L_{\mathcal C, \infty} \|g-g'\|_{L^\infty(\mathbb{R}^d)},\quad |\mathcal{C}[g](y)|\leq (\cmax/\underline{D})\|g\|_{L^\infty(\mathbb{R}^d)}.
\]
By  assumption \emph{(L)} and boundedness of any $g\in\mathcal{G}$,
\[
  |f_g(y) - f_{g'}(y)|
  \le
  L_\ell\,\Bigl(
    |g(y) - g'(y)| + |\mathcal{C}[g](y) - \mathcal{C}[g'](y)|
  \Bigr)
\]
with $L_\ell=L_\ell\bigl(K\bigr).$
Combining with \emph{(G)} and \emph{(RT)}, we obtain that $\mathcal{F}$ has 
envelope $F$ with  $\|F\|_{\infty}\leq B_\ell=B_\ell(K)$.
\cite{GineNickl2016}[Theorem~3.5.13] implies
\begin{equation*}
    \E[\sup_{g 
\in \mathcal G} |\widetilde{\mathcal{R}}_M(g) - \mathcal{R}(g)|]\lesssim  \frac{1}{\sqrt{M}}\int_{0}^{8  B_\ell} \sqrt{\log(2 N_{[]}(\mathcal{G}, L^{2}(\rho_T), \varepsilon/L_\ell))} \, d\varepsilon.
\end{equation*}   
\paragraph{Step 2} By assumption \emph{(L)}, 
\[
\E[\sup_{g 
\in \mathcal G} |\widehat{\mathcal{R}}_{N,M}(g) - \widetilde{\mathcal{R}}_M(g)|]
  \le
  L_\ell\,
  \E
  \left[
    \sup_{g\in\mathcal{G}}
      \frac{1}{M}\sum_{j=1}^M
        \bigl|
          \widehat{\mathcal{C}}_{N,M}[g](Y_j) - \mathcal{C}[g](Y_j)
        \bigr|
  \right].
\]
Fix a large compact set $K\subset\mathbb{R}^d$ (to be defined later) and consider 
\[\E
  \bigl[
    \sup_{g\in\mathcal{G}}\sup_{y\in K}
      \bigl|
        \widehat{\mathcal{C}}_{N,M}[g](y) - \mathcal{C}[g](y)
      \bigr|
  \bigr] \le \E[\sup_{g\in\mathcal{G}}\sup_{y\in K}|(\mathrm{A})|]
  +
  \E[\sup_{g\in\mathcal{G}}\sup_{y\in K}|(\mathrm{B})|],
\]
where $(A) = \widehat{\mathcal{C}}_{N,M}[g](y) 
      - \widetilde{\mathcal{C}}_{N}[g](y)$, $(B) = \widetilde{\mathcal{C}}_{N}[g](y) - \mathcal{C}[g](y)$ and 
      $$
  \widetilde{\mathcal{C}}_{N}[g](y)
  :=
  \frac{1}{N}\sum_{i=1}^N
     \frac{q_T(X_i,y)}{D_g(X_i)}.
      $$
      For (B) we define the set $\Phi_K
  := \{
    x\mapsto \phi_{g,y}(x)
   : g\in\mathcal{G}, y\in K\}$, where 
   $$\phi_{g,y}(x)
  := q_T(x,y)/D_g(x).
  $$ 
  Then one may check that
\begin{equation*}
  \log \mathcal N_{[]}\!\big(\Phi_K,L^2(\rho_0), \varepsilon\big)
  \ \le\
  \log \mathcal N_{[]}\!\Big(\mathcal G,\|\cdot\|_\infty, \varepsilon/2L_g\Big)
  \ +\
  C_d\,\log\!\Big(L_y\,\mathrm{diam}(K)/\varepsilon \Big),
\end{equation*}
where $C_d\le d$ is a dimensional constant (coming from covering $K$ in Euclidean norm), $L_g:=c_+\,L_{D,\infty}/\underline D^{2}$ and $L_y:=L_q/\underline D$, with $L_{D,\infty}$ defined in Lemma \ref{lemm: key properties} and $L_q$ defined in \emph{(Q)}.  
Then it follows from  \cite{GineNickl2016}[Theorem~3.5.13], 
\begin{equation*}
  \sup_{g\in\mathcal{G}}\sup_{y\in K}|(\mathrm{B})|
   \lesssim N^{-1/2} \int_{0}^{8 F_{\infty}} \sqrt{\log \mathcal N_{[]}\!\Big(\mathcal G,\|\cdot\|_\infty,\varepsilon/(2L_g)\Big)} \, d\varepsilon +N^{-1/2} \sqrt{\log\!\Big({L_y\,\mathrm{diam}(K)/F_\infty \Big)} }.
\end{equation*}
For (A) we obtain 
\[
  \E
  \left[
    \sup_{g\in\mathcal{G}}\sup_{y\in K}|(\mathrm{A})|
  \right]
  \le
  \cmax L_D\,
  \mathbb{E}
  \left[
    \sup_{g\in\mathcal{G}}
      \frac{1}{N}\sum_{i=1}^N
        \bigl|
          \widehat{D}_g(X_i) - D_g(X_i)
        \bigr|
  \right]\,,
\]
where $L_D := \overline{D}^2/\underline{D}^2$. 
Let $\Psi
  := \{
    (x,z)\mapsto \psi_g(x,z)
   : x\in B(x_0, R_0),\;g\in\mathcal{G}\}$ with 
   $$
   \psi_g(x,z)
  :=
  q_T(x,z)/g(z).
  $$ 
  Note that $\psi_g(x,z)\le F(z):=\frac{c_+}{\cGmin}\,e^{\aGmin\|z\|^2}$ 
  and
  \[
  \|F\|_{L^2(\rho_T)}
  \le
  \frac{c_+}{\cGmin}\,\sqrt{\cTmax}\,
  \Big(\frac{\pi}{\bmax-2\aGmin}\Big)^{d/4}.
\]By \cite{GineNickl2016}[Theorem~3.5.13], 
\begin{align*}
\E\Big[\sup_{x\in B(x_0,R_0)}\sup_{g\in\mathcal G}
  \big|\widehat D_g(x)-D_g(x)\big|\Big]
\lesssim \frac{1}{\sqrt{M}} \int_{0}^{8 \|F\|_{L^2(\rho_T)}} \sqrt{\log(2 N_{[]}(\Psi, L^{2}(\rho_T), \epsilon))} \, d\varepsilon.
\end{align*} 
It remains to estimate the bracketing number in the r.h.s. of the previous inequality. Note that one may show that,
\[
  \log \mathcal N_{[]}\!\big(\Psi,L^2(\rho_T), \varepsilon\big)
  \ \le\
  \log \mathcal N_{[]}\!\Big(\mathcal G,\|\cdot\|_\infty, \varepsilon/C_g\Big)
  \;+\;
  d\,\log\!\Big(C\,R_0/\varepsilon\Big),
\]
where $C>0$ is some absolute constant, \( C_g:=\cmax/(\cGmin^2)\,M_\rho)\) and
\[
M_\rho=\Big(\int e^{4\aGmin\|z\|^2}\rho_T(z)\,dz\Big)^{1/2}.
\]
\paragraph{Step 3} Finally, we take $K = B(0, R)$ with $R > R_0$. 
Using $D_g(x)\ge \underline D$ and that $\rho_0$ is a probability density,
\begin{align*}
  |\mathcal C[g](y)|
  \le
  \frac{1}{\underline D}\int_{B(0,R_0)}\rho_0(x)\,q_T(x,y)\,dx
  \le
  \frac{\cmax}{\underline D}\exp\!\bigl(-\amax (R-R_0)^2\bigr).
\end{align*}
Furthermore, using $\mathcal{T}_{[\underline{D},\overline{D}]}[\widehat D_g](X_i)\ge \underline D$ and $X_i\in B(0,R_0)$,
\begin{align*}
  |\widehat{\mathcal C}_{N,M}[g](y)|
  \le
  \frac{1}{\underline D}\sup_{x\in B(0,R_0)} q_T(x,y)
  \le
  \frac{\cmax}{\underline D}\exp\!\bigl(-\amax (R-R_0)^2\bigr).
\end{align*}
Hence, for every $R>R_0$,
\begin{align*}
\mathbb{E}\Bigg[
  \sup_{g\in\mathcal{G}}\sup_{y\not\in B(0,R)}
      \bigl|
        \widehat{\mathcal{C}}_{N,M}[g](y) - \mathcal{C}[g](y)
      \bigr|
\Bigg]
&\le
\frac{2\cmax}{\underline D}\,
\exp\!\bigl(-\amax (R-R_0)^2\bigr).
\label{eq:tail-outside-ball}
\end{align*}
It remains to choose $R$ of order $\sqrt{ \log N}$. 
\end{proof}
\subsection{Approximation error}
In this section, we show that by choosing an appropriate function class $\mathcal{G}$, we can achieve a small value of $\E [\mathcal{R}(\widehat{g}_{N,M})]$. For simplicity, we assume that the transition kernel is Gaussian.  
Recall that the function has the following form
$$
g^\star(y) = \int_{\R^d} w(x)\,q(y-x)\,dx
$$ 
with $w(x) = \rho_0(x)/D_g(x)$, that is, it represents a convolution of compactly supported function with the Gaussian kernel. In this case, the class of Hermite polynomials is a natural candidate for the class of functions $G$. We have brought to the appendix the main results concerning Hermite polynomials, in particular estimates of Hermite coefficients of $g^\star$.
More precisely, assume the following condition.
\begin{itemize}[noitemsep, nolistsep]
 \item[(Q$_\mathrm{Gauss}$)] The reference kernel is Gaussian:
  \[
    q_T(z)
    =
    (2\pi T)^{-d/2}\exp\!\left(-\frac{\|z\|^2}{2T}\right),
    \qquad z \in \R^d.
  \]
 \end{itemize}
The following theorem provides a bound for $\E [\mathcal{R}(\widehat{g}_{N,M})]$.
\begin{theorem}
    \label{approx theorem risk}
    Assume conditions \emph{(Q$_\mathrm{Gauss}$)}, \emph{(R0)}, \emph{(RT)} with $\bmax > 4/T$ and \emph{(L)}. Then there exists a class $\mathcal G$ such that \emph{(G)} is satisfied  and
    $$
     \E [\mathcal{R}(\widehat{g}_{N,M})] \lesssim 
  \log^{d/2} \max(M, N) \Big(N^{-1/2} + M^{-1/2} \Big)\, , 
    $$
   where $\lesssim$ stands for inequality up to constants independent of  $M$ and $d$ and double logarithmic factors.
\end{theorem}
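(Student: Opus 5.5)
We build $\mathcal G$ as a clipped, truncated Hermite-function class and then combine Theorem~\ref{thm:uniform-concentration-risk} with an approximation bound. Fix a degree $n$, to be chosen of order $\log\max(M,N)$. Let $\Pi_n$ denote the $L^2(\R^d)$-orthogonal projector onto Hermite functions $h_\alpha$ with $|\alpha|_\infty\le n$; the parameter count is $n^d$. Set
\[
\mathcal G:=\Bigl\{\clip\Bigl(\textstyle\sum_{|\alpha|_\infty\le n}\theta_\alpha h_\alpha,\ \cGmin e^{-\aGmin\|y\|^2}\,,\ \cGmax\Bigr):\ |\theta_\alpha|\le B\Bigr\},
\]
with pointwise clipping, so that (G) holds by construction. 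The constants $\cGmin,\aGmin,\cGmax$ are chosen from the two-sided bounds on $g^\star$ supplied by Lemma~\ref{lemm: key properties}. Since $g^\star$ is the convolution of a compactly supported $w$ (supported in $B(x_0,R_0)$) with the $N(0,T)$ kernel, it satisfies $g^\star(y)\ge c\,e^{-\|y-x_0\|^2/T}\cdot(\ldots)$, which gives some $\aGmin\approx 1/T$ up to a constant. The hypothesis $\bmax>4/T$ is exactly what makes $\bmax>4\aGmin$ hold with this choice. We take $B$ from the Hermite-coefficient bounds for $g^\star$ given in the appendix.

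\textbf{Approximation step.} Take $g_n:=\clip(\Pi_n g^\star,\ldots)\in\mathcal G$. Clipping is $1$-Lipschitz and $g^\star$ already lies between the clipping bounds, so $|g_n-g^\star|\le|\Pi_n g^\star-g^\star|$ pointwise. Because $\mathcal C[g^\star]=g^\star$ and $\ell(u,u)$ is minimal (we assume $\ell(u,u)=0$, as for the squared loss), (L) together with the Lipschitz bound for $\mathcal C$ from Lemma~\ref{lemm: key properties} gives
\[
\mathcal R(g_n)\le L_\ell\,\E_{\rho_T}\bigl[|g_n-g^\star|+|\mathcal C[g_n]-\mathcal C[g^\star]|\bigr]\lesssim \|g_n-g^\star\|_{L^1(\rho_T)}+\|g_n-g^\star\|_\infty .
\]
We bound both terms using the appendix estimates of the Hermite coefficients of $g^\star$, which decay geometrically, like $\lambda^{|\alpha|}$ with $\lambda<1$ determined by $T$ and $R_0$, since $g^\star$ is a Gaussian-smoothed compactly supported function. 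We also use $\|h_\alpha\|_\infty\le 1$. Together these give $\|\Pi_n g^\star-g^\star\|_\infty\lesssim n^{d}\lambda^{n}$. Choosing $n=C\log\max(M,N)$ makes this at most $\max(M,N)^{-1}$, which is negligible.

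\textbf{Estimation step.} We bound the bracketing entropy. The map $\theta\mapsto\sum\theta_\alpha h_\alpha$ is Lipschitz from $\ell^\infty$ to $\|\cdot\|_\infty$ with constant $n^d$, and clipping preserves sup-norm brackets. Hence
\[
\log\mathcal N_{[]}(\mathcal G,\|\cdot\|_\infty,\varepsilon)\le n^d\log(3Bn^d/\varepsilon).
\]
The entropy integral in Theorem~\ref{thm:uniform-concentration-risk} is therefore $\lesssim n^{d/2}\sqrt{\log n}$. With $n\asymp\log\max(M,N)$, this yields $\log^{d/2}\max(M,N)$ up to double-log factors. Plugging into both parts of Theorem~\ref{thm:uniform-concentration-risk} gives
\[
\E\mathcal R(\widehat g_{N,M})\le\mathcal R(g_n)+\log^{d/2}\max(M,N)\,(N^{-1/2}+M^{-1/2})
\]
up to double-log factors, where the $\sqrt{\log\log N}$ term has been absorbed.

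\textbf{Main obstacle.} The hardest part is the uniform (sup-norm, on all of $\R^d$) Hermite tail bound for $g^\star$ with a geometric rate. This needs coefficient estimates of the form $|\langle g^\star,h_\alpha\rangle|\lesssim\lambda^{|\alpha|}$. To obtain them, I would first expand the Gaussian kernel via Mehler's formula, or compute $\langle q_T(\cdot-x),h_\alpha\rangle$ explicitly as a Gaussian times $x^\alpha$-type terms. I would then integrate against $w$, using that $\supp w\subseteq B(x_0,R_0)$ and $\|w\|_{L^1}\le 1/\underline D$. A secondary point is checking that the lower envelope in (G) is compatible with $g^\star$ itself, which is needed so that clipping does not destroy the approximation. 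This compatibility is where the requirement $\bmax>4/T$ enters.
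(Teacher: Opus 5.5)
Your argument is correct, but the approximation step takes a different route from the paper's. The paper works in $L^2$. It bounds $\inf_{\mathcal G}\mathcal R\le L_\ell(1+L_{\mathcal C,2})\inf_{g\in\mathcal G}\|g-g^\star\|_{L^2(\rho_T)}$ via the $L^2(\rho_T)$-Lipschitz estimate of Lemma~\ref{prop:C-L2-Lipschitz} and Corollary~\ref{cor:excess-risk-L2}. It then controls $\|g^\star-\Pi_n g^\star\|_{L^2(\R^d)}$ by Parseval, using the super-geometric coefficient decay of Proposition~\ref{prop:hermite-coeff} in the $T^{-1/2}$-scaled basis, total-degree truncation $|\alpha|\le n$, and an $\ell^2$-ball of coefficients; clipping is applied only to the approximant. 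You work in sup-norm instead: the $L^\infty$-Lipschitz bound of Lemma~\ref{lemm: key properties}, plus a uniform tail bound from $\ell^1$-summability of the coefficients and uniform boundedness of the Hermite functions. This needs only geometric decay and avoids the Cauchy--Schwarz lemma, at the cost of pointwise bounds on the basis. A real advantage of your version is that clipping is built into the class. Your observation that clipping is monotone and $1$-Lipschitz means it maps sup-norm brackets to brackets of no larger width, so your entropy bound applies to exactly the class that satisfies (G). In the paper, $\mathcal G_n(B)$ itself does not satisfy (G), and Lemma~\ref{bracketing number} is applied before clipping.

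Two small corrections.

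First, the envelope constants should come from Proposition~\ref{prop:g-two-sided-subgaussian-direct}, not Lemma~\ref{lemm: key properties}. That proposition gives $\aGmin=\aStarmin=2\amin=1/T$ exactly. This matters: an exponent equal to $1/T$ only ``up to a constant'' would not be covered by $\bmax>4/T$, whose role is to make Theorem~\ref{thm:uniform-concentration-risk} applicable with $\bmax>4\aGmin$.

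Second, the appendix coefficient bounds you cite are for $T=1$, equivalently for the scaled basis $\psi^{(\lambda)}_\alpha$ with $\lambda=T^{-1/2}$. In the unscaled basis with $T\neq 1$ the decay is only geometric, with ratio roughly $\sqrt{|T-1|/(T+1)}$, and you would have to prove it separately. If you use the scaled basis instead, its elements are still bounded by $\lambda^{d/2}\sup_\alpha\|\psi_\alpha\|_\infty$. Proposition~\ref{prop:hermite-coeff} then plugs directly into your sup-norm tail sum, and your ``main obstacle'' is already handled.
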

\begin{proof}
Using $\mathcal R(g^\star) = 0$ and Corollary \ref{cor:excess-risk-L2} we get
\begin{equation}
\label{approx 2step}
    \inf_{g \in \mathcal G} \mathcal{R}(g) \le L_\ell\bigl(1+L_{\mathcal C,2}\bigr) \inf_{g \in \mathcal G}  \|g-g^\star\|_{L^2(\rho_T)} \, .
\end{equation}
Let $\{\psi_\alpha^{(\lambda)}\}_{\alpha\in\N_0^d}$ be the scaled Hermite basis with $\lambda=T^{-1/2}$.  
Let $\Pi_{n}$ be the $L^2(\R^d)$--orthogonal projector onto
$\mathrm{span}\{\psi_\alpha^{(\lambda)}:\ |\alpha|\le n\}$. 
Define 
$$
\mathcal G = \mathcal G_n(B): = \left\{g = \sum_{|\alpha| \le n} c_\alpha \psi_\alpha^{(\lambda)}: \sum_{|\alpha| \le n} c_\alpha^2 \le B\right\}.
$$ 
We choose $B \asymp n d^d$. By Proposition \ref{prop:proj-error-T1-unscaled}, there exist some constant $C > 0$ such that for $n \geq C \log \max(M, N)$, $\Pi_{n}g^\star \in \mathcal G$, and
\begin{equation}\label{eq:proj-error-T main text}
\|g^\star-\Pi_{n}g^\star\|_{L^2(\R^d)}
 \lesssim 
\log^{d/2} \max(M, N) \Big(N^{-1/2} + M^{-1/2} \Big)\, .
\end{equation}
Note that by Proposition \ref{prop:g-two-sided-subgaussian-direct}, $g^\star(y)\in\big[\cStarmin e^{-\aStarmin\|y\|^2},\,\cStarmax \big]: = I$ for all $y$ by
\eqref{eq:g-two-sided-Gaussian}. We need to ensure condition \eqref{eq:G-gauss-control}. Define the clipping operator $\Pi^\clip:\R\to\R$ by 
$$
\Pi^\clip(t):=
  \min\Big\{\cStarmax,\max\big\{t,\cStarmin e^{-\aStarmin\|y\|^2} \big\}\Big\}\,. 
$$ 
Then for any $f \in \mathcal G$, $\Pi^\clip(f)(y)$ satisfies \eqref{eq:G-gauss-control}. For each fixed $y$, the map
$t\mapsto \Pi(t)$ is the metric projection onto the interval
$I$, hence it is $1$--Lipschitz. 
% \[
%   |\Pi(t)-\Pi(s)|\le |t-s|,\qquad \forall s,t\in\R.
% \]
Applying this  gives the pointwise contraction for any $y\in\R^d$, 
\begin{equation*}
  |g^\star(y)-\Pi^\clip(\Pi_{n}g^\star(y))|
  =
  \big|\Pi^\clip (g^\star(y)) - \Pi^\clip(\Pi_{n}g^\star(y))\big|
  \le
  |g^\star(y)-\Pi_{n}g^\star(y)|,
  \qquad 
\end{equation*}
In particular, clipping can only \emph{decrease} any $L^2(
\R^d)$--error: 
\[
  \|g^\star(y)-\Pi^\clip(\Pi_{n}g^\star(y))\|_{L^2(\R^d)}
  \le
  \|g^\star(y)-\Pi_{n}g^\star(y)\|_{L^2(\R^d)}\, .
\]
It follows from the last inequality, \eqref{approx 2step} and \eqref{eq:proj-error-T main text} that 
$$
\inf_{g \in \mathcal G} \mathcal{R}(g) \lesssim \log^{d/2} \max(M, N) \Big(N^{-1/2} + M^{-1/2} \Big)\, . 
$$
Note that for chosen $n$, Lemma \ref{bracketing number} and Proposition \ref{prop:hermite-coeff} imply that there exists some constant $C^\prime >0$ such that
$$
\log \mathcal N_{[]}\!\Big(\mathcal G,\|\cdot\|_\infty, \varepsilon \Big) \lesssim \log^{d/2} \max(M, N) \log(C^\prime/\varepsilon)\, .
$$ 
Hence, we may conclude the statement by applying Theorem \ref{thm:uniform-concentration-risk}.
\end{proof}
 
% \subsection{Distance between $g_{m,R_m}$ and $g^\star$}
% For simplicity we again assume \emph{(Q$_\mathrm{Gauss}$)}. We slightly change normalisation assumptions. First, we assume that 
% $$
% \int_{B_R}\frac{\rho_T(y)}{g^\star(y)}\,dy=1,
% $$
% where $B_R = B(0, R)$ for some $R > 0$. 
% We introduce a class of functions
% \begin{enumerate}[noitemsep, nolistsep]
% \item  For every $g\in\mathcal G,$
% \[
%   \int_{B_R}\frac{\rho_T(y)}{g(y)}\,dy=1. 
% \]
% \item There exists $\gamma\in(0,1]$ such that
% for every $u\in\mathcal G$,
% \begin{equation}\label{eq:global-dom main text}
%   u(y)\ge \gamma\,g^\star(y),\qquad \forall y\in\R^d.
% \end{equation}
% \end{enumerate}
% Note that Suppose that  every $u\in\mathcal G$ satisfies a \emph{uniform} Gaussian lower bound:
% \begin{equation}\label{eq:G-common-exp-lower main text}
%   u(y)\ \ge\ C_{\mathcal G,-}\,e^{-a_2\|y\|^2},\qquad \forall y\in\R^d,
% \end{equation}
% for some $C_{\mathcal G,-}>0$ where \(a_2 := \frac{a_+}{2}\) and \(a_+=1/(2T)\). Then \eqref{eq:global-dom main text} holds with
% \(\gamma=(C_{\mathcal G,-}/C_2)^\star\wedge 1\)
% and \(C_2\) coming from Assumption \emph{(Q)}.

\section{Numerics} \label{sec:numerical}

In this section, we present an experimental analysis of the proposed algorithm and its comparison with SinkhornBridge \citep{PooladianNilesWeed2024}. This algorithm was chosen for comparison because it is closest to ours in terms of its training procedure. However, because our algorithm learns a continuous Schrödinger log-potential function rather than a discrete solution to the optimal transport problem, we were able to demonstrate significant superiority across several generative modelling and data-to-data translation tasks.  
%It should be noted that the formal algorithm for numerical experiments differs from the proposed general ERM framework in several respects. First, the algorithm learns the log Schrödinger potential rather than the function $ g(y) $, since we do not have access to the density of the target distribution in practice, and optimization in the log domain is significantly more stable. Second, when calculating the loss for the fixed-point equation, the loss is pre-centered, which improves the convergence and behavior of the algorithm and also satisfies the normalization condition imposed on the function $ g(y) $ in \eqref{eq:G-gauss-control}.
In the following, the algorithm we proposed will be referred to as ERM-Bridge for brevity. The formal description of the algorithm, the hyperparameter values and additional experimental details are provided in the appendix \ref{sec:hyperparams}.

\begin{figure*}[h!]
    \centering
    \includegraphics[width=0.95\linewidth]{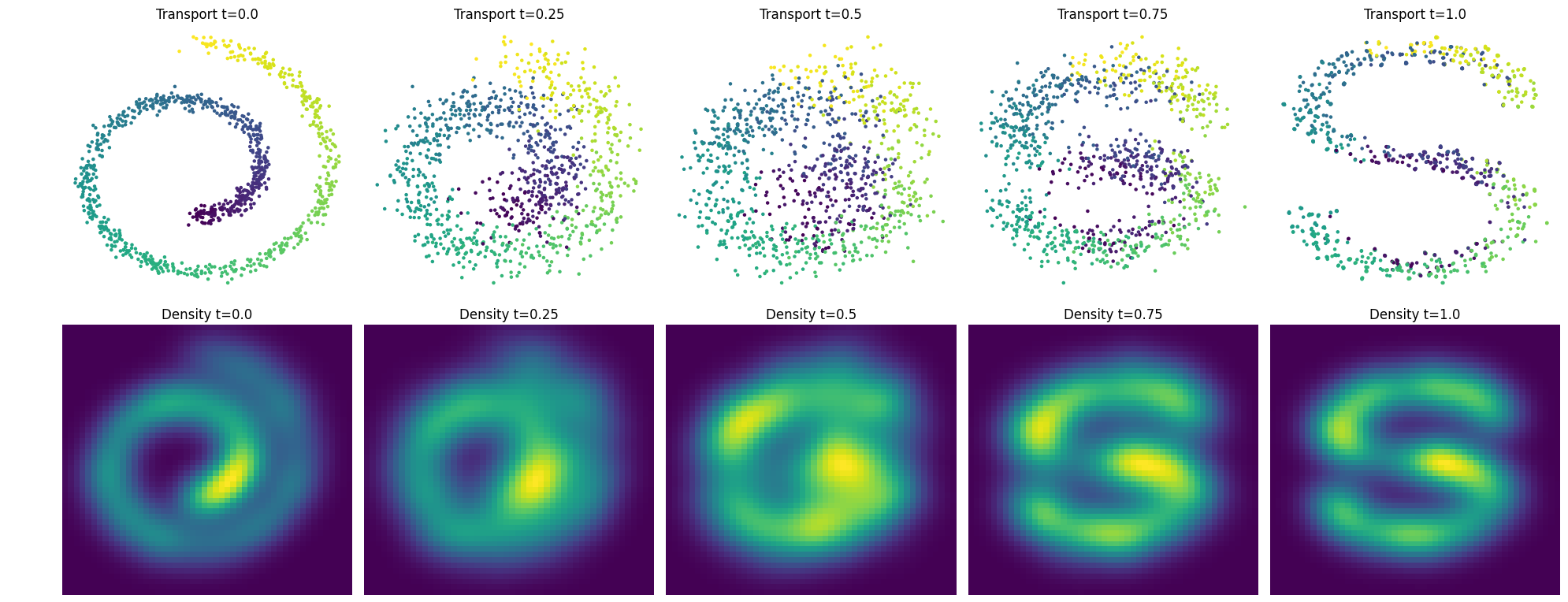}
    \caption{Sample translation from Swiss-Roll to S-Curve and density map for ERM-Bridge at time $ t \in [0, 0.25, 0.5, 0.75, 1]. $}
    \label{fig:SwissRoll}
\end{figure*}

\paragraph{Swiss-Roll to S-Curve Experiment}

We evaluate the validity of our algorithm on classic two dimensional example. In this experiment, we examined the translation of the Swiss-Roll distribution into an S-Curve and present the distribution at time $ t \in [0, 0.25, 0.5, 0.75, 1] $ for clarity, see Figure \ref{fig:SwissRoll}. The figure clearly shows that the algorithm reliably learns the distribution and the correct density map (the density was approximated using KDE on a two-dimensional surface). It is also worth noting that our algorithm demonstrated the best training and sampling time on this problem. The results and hyperparameters are reported in the appendix \ref{sec:hyperparams}.

\paragraph{Evaluation one the Gaussian Mixture}

To further illustrate the importance of the network approximated log-potential, we evaluated our algorithm on a Gaussian mixture problem. For this experiment, we used mixtures of 25 Gaussians from a uniform grid with standard identical covariance matrices. Transport quality was quantified with the sliced approximation of the Wasserstein distance $\mathbb{W}_1$ as the metric, see Appendix \ref{sec:metrics} for definition of metrics.

In this experiment, both algorithms were trained on 3000 samples from a truncated normal distribution on $[-10, 10], $ and sampled on starting points from a truncated normal distribution on $[-1, 1], [-2, 2], [-5, 5]. $ This experiment demonstrates how the algorithms considered behave when the starting distributions in the training set and the sampling set may differ, see Figure \ref{fig:gauss}. This situation is quite typical for data-to-data transport, where both the starting and target distributions are available only from samples, and it is impossible to ensure complete identity between the training and test sets. It is clearly seen that the continuous potential-like function learned by the neural network in the our algorithm behaves better than the discrete optimal transport SinkhornBridge. The baseline achieved $\mathbb{W}_1 = 1.3115  $ for $ [-1, 1] $ case, while our algorithm achieved $\mathbb{W}_1 = 0.3818. $ The hyperparameter values are reported in the appendix \ref{sec:hyperparams}.

\begin{minipage}[t]{0.5\textwidth}
\centering
\captionof{figure}{Plot of sliced Wasserstein distance as a function of KL between the distribution on the train and sampling for ERM-Bridge and SinkhornBridge.}
\label{fig:gauss}
\resizebox{\textwidth}{!}{
\includegraphics[width=0.9\linewidth]{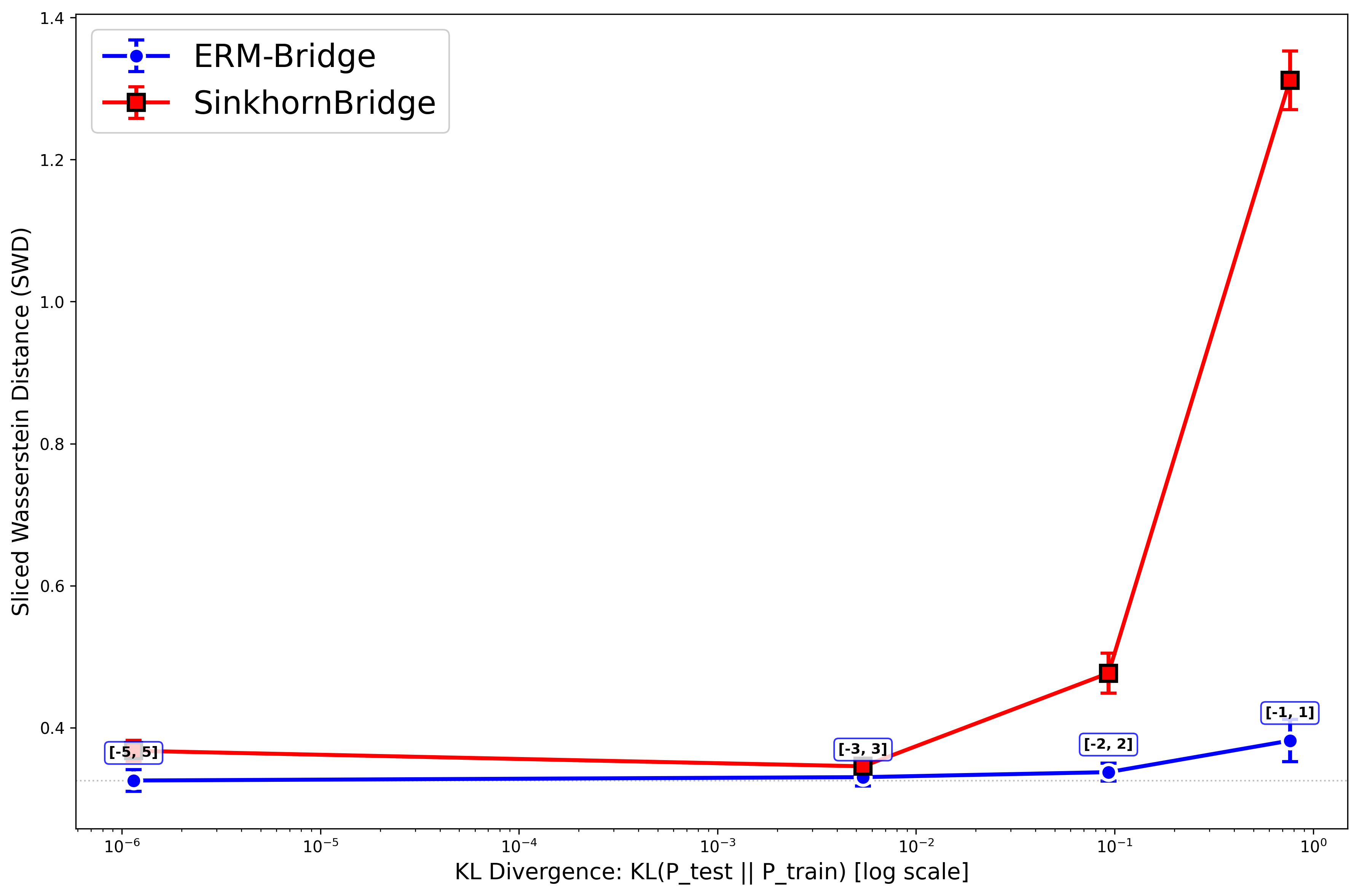}
}
\end{minipage}
\hfill
\begin{minipage}[t]{0.43\textwidth}
\centering
\captionof{table}{The quality of intermediate distribution restoration for single-cell data for various algorithms, including ERM-Bridge and SinkhornBridge.}
\label{tab:cell}
\resizebox{\textwidth}{!}{
\begin{tabular}{|c|c|}
\hline
\textbf{Solver} & $\mathbb{W}_1$ metric \\
\hline
OT-CFM \citep{TongEtAl2023a} & $0.790 \pm 0.068$ \\
\hline
$[\mathrm{SF}]^2$M-Exact \citep{TongEtAl2023b} & $0.793 \pm 0.066$ \\
\hline
\textbf{ERM-Bridge} & $0.809 \pm 0.030$ \\
\hline 
LightSB-OU \citep{puchkin2025tightboundsschrodingerpotential} & $0.815 \pm 0.016$\\
\hline
SinkhornBridge \citep{PooladianNilesWeed2024} & $0.818 \pm 0.028$ \\
\hline
LightSB \citep{KorotinGushchinBurnaev2024} & $0.823 \pm 0.017$ \\
\hline
Reg. CNF \citep{FinlayJacobsenNurbekyanOberman2020} & $0.825 \pm \text{N/A} $\footnote{\label{note:na}The authors did not report the standard deviation.} \\
\hline
T. Net \citep{TongHuangWolfVanDijkKrishnaswamy2020} & $0.848 \pm \text{N/A}$\footref{note:na} \\
\hline
DSB \citep{DeBortoliThorntonHengDoucet2021} & $0.862 \pm 0.023$ \\
\hline
I-CFM \citep{TongEtAl2023a} & $0.872 \pm 0.087$ \\
\hline
$[\mathrm{SF}]^2$M-Geo \citep{TongEtAl2023b} & $0.879 \pm 0.148$ \\
\hline
NLSB \citep{KoshizukaSato2022} & $0.970 \pm \text{N/A}$\footref{note:na} \\
\hline
$[\mathrm{SF}]^2$M-Sink \citep{TongEtAl2023b} & $1.198 \pm 0.342$ \\
\hline
SB-CFM \citep{TongEtAl2023a} & $1.221 \pm 0.380$ \\
\hline
DSBM \citep{ShiDeBortoliCampbellDoucet2023} & $1.775 \pm 0.429$ \\
\hline
\end{tabular}
}
\end{minipage}

\paragraph{Evaluation one the Single Cell Data}

For a more comprehensive comparison with SinkhornBridge, we conducted experiments on biological data \citep{TongHuangWolfVanDijkKrishnaswamy2020}. Following original paper, we formulated the problem as transporting the cell distribution at time $t_{i-1}$ to time $t_{i+1}$ for $i \in \{1, 2, 3\}$. We use results for other methods from \citep{TongEtAl2023b}, whose authors were the first to consider this setup in \citep{TongHuangWolfVanDijkKrishnaswamy2020}. We then predicted the cell distribution at the intermediate time $t_i$ and computed the Wasserstein distance $\mathbb{W}_1$ between the predicted distribution and the ground truth. The results were averaged across all three setups ($i = 1, 2, 3$), see Table \ref{tab:cell}.

To ensure statistical robustness, we repeated the experiment five times. The baseline achieved $\mathbb{W}_1 = 0.818 $, while our algorithm achieved $\mathbb{W}_1 = 0.809 $. The hyperparameter values are reported in the appendix \ref{sec:hyperparams}.

\section{Conclusion}
We studied the \emph{statistical} problem of estimating Schr\"odinger bridge potentials from finite samples. We rewrote the Schr\"odinger system as a single nonlinear fixed-point equation
$g=C[g]$ for a transformed potential, and proposed an ERM estimator obtained by
minimizing an empirical fixed-point residual over a hypothesis class.
This yields a continuous potential estimator by construction, and naturally pairs with the stochastic-control
representation of the Schr\"odinger bridge. Several directions remain for future work, including extensions beyond sub-Gaussian tail
assumptions and bounds on the error between $g_{M,N}$ and $g^\star$. The latter will require to study local behavior of derivative of $\mathcal C[g]$ near $g^\star$ in the Hilbert metric (spectral gap condition).

\bibliographystyle{abbrvnat}
\bibliography{bibliography,approx-biblio}

@article{Schrodinger1932,
  author = {Schr\"odinger, Erwin},
  title = {{\"U}ber die {U}mkehrung der {N}aturgesetze},
  journal = {Sitzungsberichte der Preussischen Akademie der Wissenschaften, Physikalisch-Mathematische Klasse},
  year = {1932},
  pages = {144-153}
}

@article{Leonard2014,
  author = {Leonard, Christian},
  title = {A survey of the {S}chr\"{o}dinger problem and some of its connections with optimal transport},
  journal = {Discrete and Continuous Dynamical Systems},
  year = {2014},
  volume = {34},
  number = {4},
  pages = {1533-1574}
}

@article{Daipra1991,
  author = {Dai Pra, P.},
  title = {A stochastic control approach to reciprocal diffusion processes},
  journal = {Applied Mathematics and Optimization},
  year = {1991},
  volume = {23},
  number = {1},
  pages = {313-329},
  doi = {10.1007/BF01445134}
}

@inproceedings{Akiba2019,
  title={{O}ptuna: A Next-Generation Hyperparameter Optimization Framework},
  author={Akiba, Takuya and Sano, Shotaro and Yanase, Toshihiko and Ohta, Takeru and Koyama, Masanori},
  booktitle={The 25th ACM SIGKDD International Conference on Knowledge Discovery \& Data Mining},
  pages={2623--2631},
  year={2019}
}

@article{ChenGeorgiouPavon2016,
  author  = {Chen, Yongxin and Georgiou, Tryphon T. and Pavon, Michele},
  title   = {Entropic and Displacement Interpolation: A Computational Approach Using the {Hilbert} Metric},
  journal = {{SIAM} Journal on Applied Mathematics},
  volume  = {76},
  number  = {6},
  pages   = {2375--2396},
  year    = {2016},
  doi     = {10.1137/16M1061382},
  url     = {https://doi.org/10.1137/16M1061382}
}

@inproceedings{Cuturi2013,
  author    = {Cuturi, Marco},
  title     = {Sinkhorn Distances: Lightspeed Computation of Optimal Transport},
  booktitle = {Advances in Neural Information Processing Systems},
  volume    = {26},
  pages     = {2292--2300},
  year      = {2013},
  url       = {https://papers.nips.cc/paper/4927-sinkhorn-distances-lightspeed-computation-of-optimal-transport}
}

@article{FranklinLorenz1989,
  author  = {Franklin, Joel and Lorenz, Jens},
  title   = {On the Scaling of Multidimensional Matrices},
  journal = {Linear Algebra and its Applications},
  volume  = {114--115},
  pages   = {717--735},
  year    = {1989},
  doi     = {10.1016/0024-3795(89)90490-4},
  url     = {https://doi.org/10.1016/0024-3795(89)90490-4}
}

@article{PeyreCuturi2019,
  author  = {Peyr{\'e}, Gabriel and Cuturi, Marco},
  title   = {Computational Optimal Transport},
  journal = {Foundations and Trends in Machine Learning},
  volume  = {11},
  number  = {5--6},
  pages   = {355--607},
  year    = {2019},
  month   = feb,
  doi     = {10.1561/2200000073},
  url     = {https://doi.org/10.1561/2200000073}
}

@article{ConfortiDurmusGreco2023,
  author       = {Conforti, Giovanni and Durmus, Alain and Greco, Giacomo},
  title        = {Quantitative Contraction Rates for Sinkhorn Algorithm: Beyond Bounded Costs and Compact Marginals},
  journal      = {arXiv e-prints},
  year         = {2023},
  month        = apr,
  eprint       = {2304.04451},
  archivePrefix= {arXiv},
  doi          = {10.48550/arXiv.2304.04451},
  url          = {https://doi.org/10.48550/arXiv.2304.04451}
}

@inproceedings{GrecoNobleConfortiDurmus2023,
  author    = {Greco, Giacomo and Noble, Maxence and Conforti, Giovanni and Durmus, Alain},
  title     = {Non-asymptotic Convergence Bounds for Sinkhorn Iterates and Their Gradients: A Coupling Approach},
  booktitle = {Proceedings of the 36th Conference on Learning Theory},
  series    = {Proceedings of Machine Learning Research},
  volume    = {195},
  pages     = {716--746},
  year      = {2023},
  publisher = {PMLR},
  url       = {https://proceedings.mlr.press/v195/greco23a.html}
}

@article{PooladianNilesWeed2024,
  author       = {Pooladian, Aram-Alexandre and Niles-Weed, Jonathan},
  title        = {Plug-in Estimation of Schr{\"o}dinger Bridges},
  journal      = {arXiv preprint arXiv:2408.11686},
  year         = {2024},
  doi          = {10.48550/arXiv.2408.11686},
  url          = {https://doi.org/10.48550/arXiv.2408.11686},
  eprint       = {2408.11686},
  archivePrefix= {arXiv}
}

@inproceedings{DeBortoliThorntonHengDoucet2021,
  author    = {De Bortoli, Valentin and Thornton, James and Heng, Jeremy and Doucet, Arnaud},
  title     = {Diffusion Schr{\"o}dinger Bridge with Applications to Score-Based Generative Modeling},
  booktitle = {Advances in Neural Information Processing Systems},
  volume    = {34},
  pages     = {17695--17709},
  year      = {2021}
}

@article{KoshizukaSato2022,
  author       = {Koshizuka, Takeshi and Sato, Issei},
  title        = {Neural Lagrangian Schr{\"o}dinger Bridge: Diffusion Modeling for Population Dynamics},
  journal      = {arXiv preprint arXiv:2204.04853},
  year         = {2022},
  doi          = {10.48550/arXiv.2204.04853},
  url          = {https://doi.org/10.48550/arXiv.2204.04853},
  eprint       = {2204.04853},
  archivePrefix= {arXiv}
}

@article{ShiDeBortoliCampbellDoucet2023,
  author       = {Shi, Yuyang and De Bortoli, Valentin and Campbell, Andrew and Doucet, Arnaud},
  title        = {Diffusion Schr{\"o}dinger Bridge Matching},
  journal      = {arXiv preprint arXiv:2303.16852},
  year         = {2023},
  doi          = {10.48550/arXiv.2303.16852},
  url          = {https://doi.org/10.48550/arXiv.2303.16852},
  eprint       = {2303.16852},
  archivePrefix= {arXiv}
}

@inproceedings{KorotinGushchinBurnaev2024,
  author    = {Korotin, Alexander and Gushchin, Nikita and Burnaev, Evgeny},
  title     = {Light Schr{\"o}dinger Bridge},
  booktitle = {International Conference on Learning Representations},
  year      = {2024},
  doi       = {10.48550/arXiv.2310.01174},
  url       = {https://doi.org/10.48550/arXiv.2310.01174}
}

@inproceedings{TongHuangWolfVanDijkKrishnaswamy2020,
  author    = {Tong, Alexander and Huang, Jessie and Wolf, Guy and van Dijk, David and Krishnaswamy, Smita},
  title     = {TrajectoryNet: A Dynamic Optimal Transport Network for Modeling Cellular Dynamics},
  booktitle = {Proceedings of the 37th International Conference on Machine Learning},
  series    = {Proceedings of Machine Learning Research},
  volume    = {119},
  pages     = {9526--9536},
  year      = {2020},
  publisher = {PMLR},
  url       = {https://proceedings.mlr.press/v119/tong20a.html}
}

@article{TongEtAl2023a,
  author       = {Tong, Alexander and Fatras, Kilian and Malkin, Nikolay and Huguet, Guillaume and Zhang, Yanlei and Rector-Brooks, Jarrid and Wolf, Guy and Bengio, Yoshua},
  title        = {Improving and Generalizing Flow-based Generative Models with Minibatch Optimal Transport},
  journal      = {arXiv preprint arXiv:2302.00482},
  year         = {2023},
  note         = {2023a},
  doi          = {10.48550/arXiv.2302.00482},
  url          = {https://doi.org/10.48550/arXiv.2302.00482},
  eprint       = {2302.00482},
  archivePrefix= {arXiv}
}

@article{TongEtAl2023b,
  author       = {Tong, Alexander and Malkin, Nikolay and Fatras, Kilian and Atanackovic, Lazar and Zhang, Yanlei and Huguet, Guillaume and Wolf, Guy and Bengio, Yoshua},
  title        = {Simulation-Free Schr{\"o}dinger Bridges via Score and Flow Matching},
  journal      = {arXiv preprint arXiv:2307.03672},
  year         = {2023},
  note         = {2023b},
  doi          = {10.48550/arXiv.2307.03672},
  url          = {https://doi.org/10.48550/arXiv.2307.03672},
  eprint       = {2307.03672},
  archivePrefix= {arXiv}
}

@inproceedings{FinlayJacobsenNurbekyanOberman2020,
  author    = {Finlay, Chris and Jacobsen, Joern-Henrik and Nurbekyan, Levon and Oberman, Adam},
  title     = {How to Train Your Neural {ODE}: the World of {J}acobian and Kinetic Regularization},
  booktitle = {Proceedings of the 37th International Conference on Machine Learning},
  series    = {Proceedings of Machine Learning Research},
  volume    = {119},
  pages     = {3154--3164},
  year      = {2020},
  publisher = {PMLR},
  url       = {https://proceedings.mlr.press/v119/finlay20a.html}
}

@book{GineNickl2016,
  author    = {Giné, Evarist and Nickl, Richard},
  title     = {Mathematical Foundations of Infinite-Dimensional Statistical Models},
  year      = {2016},
  publisher = {Cambridge University Press},
  series    = {Cambridge Series in Statistical and Probabilistic Mathematics},
  volume    = {40},
  doi       = {10.1017/CBO9781107337312},
  isbn      = {9781107043169}
}

@misc{puchkin2025tightboundsschrodingerpotential,
      title={Tight Bounds for Schr\"odinger Potential Estimation in Unpaired Data Translation}, 
      author={Nikita Puchkin and Denis Suchkov and Alexey Naumov and Denis Belomestny},
      year={2025},
      eprint={2508.07392},
      archivePrefix={arXiv},
      primaryClass={cs.LG},
      url={https://arxiv.org/abs/2508.07392}, 
}

\appendix

\section{Properties of the empirical operator under sub-Gaussian assumptions}

\begin{proposition}[Two--sided Gaussian bounds for the fixed point]
\label{prop:g-two-sided-subgaussian-direct}
Assume \emph{(Q)}, \emph{(R0)}, \emph{(RT)}. Let $g^\star:\mathbb{R}^d\to(0,\infty)$ be measurable and satisfy
\begin{equation}\label{eq:fixed-point-normalization}
  g^\star(y) = \mathcal{C}[g^\star](y), \qquad y\in\mathbb{R}^d,
  \qquad\text{and}\qquad
  \int_{\mathbb{R}^d} \frac{1}{g^\star(z)}\,\rho_T(z)\,dz = 1.
\end{equation}
Set \(\aStarmin := 2a_-,\) \(\aStarmax := \frac{a_+}{2}.\)
Then there exist constants $\cStarmin, \cStarmax>0$ depending only on
$q_T,\rho_0,\rho_T$ (but not on $g^\star$), such that
\begin{equation}\label{eq:g-two-sided-Gaussian}
  \cStarmin \exp(-\aStarmin \|y\|^2)
  \;\le\;
  g^\star(y)
  \;\le\;
  \cStarmax \exp(-\aStarmax \|y\|^2),
  \qquad \forall y\in\mathbb{R}^d.
\end{equation}
\end{proposition}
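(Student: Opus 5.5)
The plan is to get two-sided bounds on $D_{g^\star}$ over the support of $\rho_0$ first, and then push them through the representation $g^\star(y)=\int q_T(x,y)\rho_0(x)/D_{g^\star}(x)\,dx$.

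\textbf{Step 1 (upper bound on $D_{g^\star}$).} The normalization in \eqref{eq:fixed-point-normalization} and the upper bound in (Q) give, for every $x$,
\[
D_{g^\star}(x)=\int q_T(x,z)\frac{\rho_T(z)}{g^\star(z)}\,dz\le \cmax\int\frac{\rho_T(z)}{g^\star(z)}\,dz=\cmax .
\]

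\textbf{Step 2 (lower bound on $D_{g^\star}$).} I take $x\in B(x_0,R_0)$. By the lower bound in (Q),
\[
D_{g^\star}(x)\ge \cmin\int e^{-\amin\|x-z\|^2}\frac{\rho_T(z)}{g^\star(z)}\,dz .
\]
The measure $\mu(dz)=\rho_T(z)/g^\star(z)\,dz$ is a probability measure, but it is not a priori tight. I see two ways to handle this.

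\emph{First route.} Bound $g^\star$ from above crudely. Since $\rho_0$ is supported in $B(x_0,R_0)$ and $q_T\le\cmax$,
\[
g^\star(y)\le \frac{\cmax}{\inf_{\supp\rho_0} D_{g^\star}},
\]
so this route is circular.

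\emph{Second route (the one I will use).} Use the fixed point in integrated form. Integrating $g^\star=\mathcal C[g^\star]$ against $\rho_T/g^\star$ gives
\[
\int\rho_T=1=\int\rho_0(x)\,\frac{D_{g^\star}(x)}{D_{g^\star}(x)}\,dx ,
\]
which is trivial. Instead, use $\rho_T(y)/g^\star(y)$ together with \eqref{eq:schro-system-T}, i.e.\ $\nu_T=\rho_T/g^\star$. Then
\[
g^\star(y)\ge \frac{1}{\cmax}\int q_T(x,y)\rho_0(x)\,dx\ge \frac{\cmin\rhomin}{\cmax}\int_{B(x_0,r_0)}e^{-\amin\|x-y\|^2}\,dx .
\]
Using $\|x-y\|^2\le 2\|y\|^2+2\|x\|^2$, this is at least $c\,e^{-2\amin\|y\|^2}$ with $c$ depending on $x_0,r_0$. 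This already gives the lower bound in \eqref{eq:g-two-sided-Gaussian} with $\aStarmin=2\amin$, using only Step 1.

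\textbf{Step 3 (upper bound on $g^\star$).} For the upper bound I need $\underline D:=\inf_{x\in B(x_0,R_0)}D_{g^\star}(x)>0$. Plugging the Step 2 lower bound on $g^\star$ into $\mu$ only gives an upper bound on $\mu$'s density, namely
\[
\frac{\rho_T(z)}{g^\star(z)}\le C\,e^{-(\bmax-2\amin)\|z\|^2},
\]
which is not what is needed. I therefore switch to $\nu_0$. By \eqref{eq:schro-system-0}, $\nu_0=\rho_0/D_{g^\star}$. From \eqref{eq:schro-system-T},
\[
\rho_T(y)=\nu_T(y)\int q_T(x,y)\nu_0(x)\,dx ,
\]
and integrating in $y$ with $\int q_T(x,y)\,dy=1$ gives $\int\nu_0\,\nu_T$-type identities. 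The key mass bound is this. Since $\rho_T$ has mass at least $\cTmin\int e^{-\bmin\|z\|^2}dz$ on $B(0,1)$, and $\mu$ has total mass one, I argue by contradiction/Jensen in the following way. For $z$ in a fixed ball $B(0,R_1)$ carrying half of the $\rho_T$ mass, $g^\star(z)\le \cmax/\underline D$. Hence
\[
\mu(B(0,R_1))\ge \frac{\underline D}{2\cmax},
\]
and therefore
\[
\underline D\ge \cmin e^{-\amin(R_0+R_1)^2}\,\mu(B(0,R_1))\ge \frac{\cmin}{2\cmax}\,e^{-\amin(R_0+R_1)^2}\,\underline D .
\]
This is self-referential and not closing. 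This self-consistency is the main obstacle. What I would try first is to exploit scaling: the pair $(\nu_0,\nu_T)$ is determined only up to $(\lambda\nu_0,\nu_T/\lambda)$, and the normalization $\int\rho_T/g^\star=1$ fixes $\lambda$. I then compare $\sup_{\supp\rho_0}D$ and $\inf_{\supp\rho_0}D$ via a Harnack-type ratio. For $x,x'\in B(x_0,R_0)$,
\[
\frac{q_T(x,z)}{q_T(x',z)}\le \frac{\cmax}{\cmin}\,e^{C R_0(\|z\|+R_0)} ,
\]
and combining this with the tail control of $\mu$ from Step 2 gives $\inf D\ge c\sup D$. Finally, $\sup D$ is bounded below by a constant via the normalization: for $x_0$ fixed,
\[
\int D_{g^\star}(x)\,\rho_0(x)\,dx=\int\rho_0\,\frac{\mathcal C[g^\star]}{g^\star}\cdots
\]
which I would pin down using $\int\nu_0 q_T\nu_T=1$.

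\textbf{Step 4 (Gaussian upper bound).} With $\underline D$ in hand,
\[
g^\star(y)\le \frac{\cmax}{\underline D}\sup_{x\in B(x_0,R_0)}e^{-\amax\|x-y\|^2}.
\]
Using $\|x-y\|^2\ge \tfrac12\|y\|^2-\|x\|^2$, this yields $\cStarmax e^{-\frac{\amax}{2}\|y\|^2}$.
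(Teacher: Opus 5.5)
Your Steps 1, 2 and 4 are correct and coincide with the paper. The normalization gives $D_{g^\star}\le\cmax$, hence $g^\star(y)\ge\cmax^{-1}\int q_T(x,y)\rho_0(x)\,dx\ge\cStarmin e^{-2\amin\|y\|^2}$. The Gaussian upper bound then follows once $\inf_{x\in B(x_0,R_0)}D_{g^\star}(x)\ge\underline D^\star>0$ with $\underline D^\star$ depending only on the data.

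The gap is that Step 3 never proves this, so the upper half of \eqref{eq:g-two-sided-Gaussian} is unsupported. None of your sketched routes closes:
\begin{itemize}
\item The mass argument is circular, as you note.
\item The Harnack bound $q_T(x,z)/q_T(x',z)\le(\cmax/\cmin)e^{CR_0(\|z\|+R_0)}$ does not follow from (Q) when $\amin>\amax$. (Q) only gives $(\cmax/\cmin)\exp(\amin\|x'-z\|^2-\amax\|x-z\|^2)$, which grows like $e^{(\amin-\amax)\|z\|^2}$.
\item The lower bound on $\sup D_{g^\star}$ is left unfinished. The natural candidate $\sup D_{g^\star}\ge\int\rho_0D_{g^\star}=\int m_0(z)\,\nu_T(z)\,dz$, with $m_0(z):=\int\rho_0(x)q_T(x,z)\,dx$, again requires knowing where the probability density $\nu_T=\rho_T/g^\star$ puts its mass.
\end{itemize}
Everything reduces to tightness of $\nu_T$, and you had it but discarded it. The envelope $\nu_T(z)\le(\cTmax/\cStarmin)e^{-(\bmax-2\amin)\|z\|^2}$ from Step 2 is exactly what is needed, provided it is integrable, i.e.\ $\bmax>2\amin$ (or $\bmax>\amin$ with a slightly sharper lower bound on $m_0$). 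Choose $R_1$, depending only on the data, so that the envelope has mass at most $1/2$ outside $B(0,R_1)$. Since $\int\nu_T=1$, you get $\nu_T(B(0,R_1))\ge1/2$, and for $x\in B(x_0,R_0)$
\[
D_{g^\star}(x)\ \ge\ \int_{B(0,R_1)} q_T(x,z)\,\nu_T(z)\,dz\ \ge\ \frac{\cmin}{2}\exp\bigl(-\amin(\|x_0\|+R_0+R_1)^2\bigr),
\]
which makes Step 4 rigorous.

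You will not find a way around this in the paper. Its proof writes $D_{g^\star}(x)=m(x)\,\E_{\nu_x}[1/g^\star]$ with $\nu_x(dz)\propto q_T(x,z)\rho_T(z)\,dz$, asserts $k_-\le q_T(x,z)/m(x)\le k_+$ for all $z\in\R^d$, and then uses $\E_{\rho_T}[1/g^\star]=1$. The constant $k_+$ is fine, but no $k_->0$ can hold uniformly in $z$, since $q_T(x,z)\le\cmax e^{-\amax\|x-z\|^2}\to0$ as $\|z\|\to\infty$. Restricting $z$ to a ball only reintroduces the requirement $\int_{B(0,R_1)}\nu_T\ge1/2$, i.e.\ the same tightness.

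So both arguments need a condition such as $\bmax>2\amin$, which is not among the stated hypotheses. Without it you still get $\min_{\mathrm{supp}\,\rho_0}D_{g^\star}>0$ qualitatively, by continuity and positivity of $D_{g^\star}$ on a compact set. Making that constant independent of $g^\star$ would then require uniqueness of the normalized fixed point, and it yields no explicit value.
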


\begin{remark}
    It is important to note that the behavior of the function $g^\star$ is determined only by the conditions on the transition kernel $q$ and densities $\rho_0, \rho_T$.
\end{remark}

\begin{proof}
For each $x\in\mathbb{R}^d$ define
\[
  D_{g^\star}(x)
  :=
  \int_{\mathbb{R}^d} q_T(x,z)\,\frac{\rho_T(z)}{g^\star(z)}\,dz.
\]
Then the fixed--point equation can be written as
\[
  g^\star(y)
  =
  \int_{\mathbb{R}^d}
    \frac{\rho_0(x)}{D_{g^\star}(x)}\,q_T(x,y)\,dx.
\]
We claim that there exist constants $0<\underline{D}^\star \le\overline{D}^\star<\infty$,
depending only on $q_T,\rho_0,\rho_T$, such that
\begin{equation}\label{eq:Dg-two-sided}
  \underline{D}^\star \;\le\; D_{g^\star}(x) \;\le\; \overline{D}^\star,
  \qquad \forall x\in\mathrm{supp}(\rho_0).
\end{equation}
By \emph{(Q)},
\[
  q_T(x,z) \le \cmax \exp(-\amax \|x-z\|^2) \le \cmax,
\]
hence, using the normalization in \eqref{eq:fixed-point-normalization},
\begin{equation}
\label{D upper boound}
  D_{g^\star}(x)
  =
  \int q_T(x,z)\,\frac{\rho_T(z)}{g^\star(z)}\,dz
  \le
  \cmax \int \frac{\rho_T(z)}{g^\star(z)}\,dz
  =
  \cmax.
\end{equation}
Thus we may take
\begin{equation}
    \label{Dmax def}
  \overline{D}^\star := \cmax.
\end{equation}
Define
\[
  m(x)
  :=
  \int_{\mathbb{R}^d} q_T(x,z)\,\rho_T(z)\,dz.
\]
Using \emph{(Q)}, \emph{(RT)} and the compactness of $B(x_0,R_0)$,
one checks that 
\[
  m_- \;\le\; m(x) \;\le\; m_+, \qquad \forall x\in\mathrm{supp}(\rho_0)
\]
where
\(
  m_+
  :=
  \cmax,
\)
and
\[
  m_-
  :=
  \cmin\cTmin
  \exp\!\big(-2\amin R_\star^2\big)
  \Big(\frac{\pi}{2\amin+\bmin}\Big)^{\!d/2}
\]
with $R_\star:=\|x_0\|+R_0.$ Where we used that fact
for any $\gamma>0,$
\begin{equation}
\label{gauss density formula}
\int_{\mathbb{R}^d}
  \exp(-\gamma\|z\|^2)\,dz
  =
  \left(\frac{\pi}{\gamma}\right)^{d/2}
  <\infty,
\end{equation}
and
\begin{equation}
\label{parallepiped inequaility}
    \|x - z\|^2 \le 2\|x\|^2 + 2\|z\|^2 \, .
\end{equation}
For each such $x,$ define a probability measure $\nu_x$ by
\[
  \nu_x(dz)
  :=
  \frac{q_T(x,z)\,\rho_T(z)}{m(x)}\,dz.
\]
Then
\[
  D_{g^\star}(x)
  =
  m(x)\,\mathbb{E}_{\nu_x}\Bigl[\frac{1}{g^\star(Z)}\Bigr].
\]
We now compare $\nu_x$ and $\rho_T$. The Radon--Nikodym derivative is
\[
  \frac{d\nu_x}{d\rho_T}(z)
  =
  \frac{q_T(x,z)}{m(x)}.
\]
By the two--sided Gaussian bounds in \emph{(Q)} and the bounds on $m$, there exist constants $0<k_-\le k_+<\infty$
such that
\[
  k_- \;\le\; \frac{d\nu_x}{d\rho_T}(z) \;\le\; k_+,
  \qquad
  \forall x\in\mathrm{supp}(\rho_0),\; z\in\mathbb{R}^d.
\]
For any nonnegative measurable $h$ we therefore have
\[
  k_- \int h(z)\,\rho_T(z)\,dz
  \;\le\;
  \int h(z)\,\frac{d\nu_x}{d\rho_T}(z)\,\rho_T(z)\,dz
  \;\le\;
  k_+ \int h(z)\,\rho_T(z)\,dz,
\]
that is,
\[
  k_-\,\mathbb{E}_{\rho_T}[h(Z)]
  \;\le\;
  \mathbb{E}_{\nu_x}[h(Z)]
  \;\le\;
  k_+\, \E_{\rho_T}[h(Z)].
\]
Apply this with $h(z) = \frac{1}{g^\star(z)}\ge 0$. Using the normalization in
\eqref{eq:fixed-point-normalization}, we obtain
\[
  k_- \le \mathbb{E}_{\nu_x}\Bigl[\frac{1}{g^\star(Z)}\Bigr] \le k_+.
\]
Consequently,
\begin{equation}
     \label{Dmin def}
     D_{g^\star}(x)
  =
  m(x)\,\mathbb{E}_{\nu_x}\Bigl[\frac{1}{g^\star(Z)}\Bigr]
  \ge
  m_-\,k_- =: \underline{D}^\star > 0
\end{equation}
for all $x\in\mathrm{supp}(\rho_0)$. Together with the upper bound, this proves
\eqref{eq:Dg-two-sided}.
Using the fixed--point equation and \eqref{eq:Dg-two-sided},
\[
  g^\star(y)
  =
  \int_{\mathrm{supp}(\rho_0)}
    \frac{\rho_0(x)}{D_{g^\star}(x)}\,q_T(x,y)\,dx
  \;\le\;
  \frac{1}{\underline{D}}
  \int_{B(x_0,R_0)} \rho_0(x)\,q_T(x,y)\,dx.
\]
 By the upper Gaussian bound in \emph{(Q)},
\[
  q_T(x,y)
  \le
  \cmax \exp\bigl(-\amax \|x-y\|^2\bigr).
\]
Using
\[
  \|x-y\|^2
  \ge \frac{1}{2}\|y\|^2 - \|x\|^2
  \ge \frac{1}{2}\|y\|^2 - R_\star^2,
\]
we obtain
\[
  q_T(x,y)
  \le
  \cmax \exp(\amax R_\star^2)\,\exp\!\left(-\frac{\amax}{2}\|y\|^2\right),
  \qquad x\in B(x_0,R_0).
\]
Therefore
\[
  g^\star(y)
  \le
  \frac{\cmax \exp(\amax R_\star^2)}{\underline{D}^\star}
  \int_{\mathrm{supp}(\rho_0)} \rho_0(x)\,dx\,
  \exp\!\left(-\frac{\amax}{2}\|y\|^2\right).
\]
Since $\rho_0$ is a probability density, $\int_{\mathrm{supp}(\rho_0)}\rho_0(x)\,dx = 1$, and we
may set
\[
  \cStarmax
  := \frac{\cmax \exp(\amax R_\star^2)}{\underline{D}^\star},
  \qquad
  \aStarmax := \frac{\amax}{2},
\]
to obtain
\[
  g^\star(y)
  \;\le\;
  \cStarmax \exp(- \aStarmax  \|y\|^2),
  \qquad \forall y\in \R^d.
\]
Using again the fixed--point equation and \eqref{eq:Dg-two-sided},
\[
  g^\star(y)
  =
  \int_{\mathrm{supp}(\rho_0)}
    \frac{\rho_0(x)}{D_{g^\star}(x)}\,q_T(x,y)\,dx
  \;\ge\;
  \frac{1}{\overline{D}^\star}
  \int_{\mathrm{supp}(\rho_0)} \rho_0(x)\,q_T(x,y)\,dx.
\]
By \emph{(R0)},
\[
  g^\star(y)
  \ge
  \frac{\rhomin}{\overline{D}^\star}
  \int_{B(x_0,r_0)} q_T(x,y)\,dx.
\]
Using the lower Gaussian bound in \emph{(Q)},
\[
  q_T(x,y) \ge \cmin \exp\bigl(-\amin \|x-y\|^2\bigr),
\]
and  $\|x\|\le \|x_0\|+r_0$ for $x\in B(x_0,r_0)$, we have
\[
  \|x-y\|
  \le \|x\| + \|y\|
  \le \|x_0\|+r_0 + \|y\|
\]
and therefore
\[
  \|x-y\|^2 \le (\|x_0\|+r_0+\|y\|)^2 \le 2 r_\star^2 + 2\|y\|^2,
\]
where $r_\star = \|x_0\|+r_0$. Thus,
\[
  q_T(x,y)
  \ge
  \cmin \exp\bigl(-\amin\,(2 r_\star^2 + 2\|y\|^2)\bigr),
  \qquad x\in B(x_0,r_0).
\]
It follows that
\[
  \int_{B(x_0,r_0)} q_T(x,y)\,dx
  \ge
  \cmin \exp(-2 \amin r_\star^2)\,\exp(-2 \amin \|y\|^2)\,\lambda^d(B(x_0,r_0)),
\]
where $\lambda^d(B(x_0,r_0))$ is the Lebesgue measure of $B(x_0,r_0)$.
Therefore
\[
  g^\star(y)
  \ge
  \frac{\rhomin\, \cmin\,\lambda^d(B(x_0,r_0))\,\exp(-2\amin r_\star^2)}{\overline{D}^\star}
  \exp(-2 \amin \|y\|^2).
\]
Setting
\[
  \cStarmin
  :=
  \frac{\rhomin\, \cmin\,\lambda^d(B(x_0,r_0))\,\exp(-2\amin r_\star^2)}{\overline{D}^\star},
  \qquad
  \aStarmin := 2 \amin,
\]
we obtain
\[
  g^\star(y)
  \;\ge\;
  \cStarmin \exp(-\aStarmin \|y\|^2),
  \qquad \forall y\in \R^d.
\]
Combining the upper and lower bounds completes the proof of
\eqref{eq:g-two-sided-Gaussian}.
\end{proof}

\section{Uniform concentration of the empirical risk}
\label{sec uniform concentration}

\begin{lemma}
\label{lemm: key properties}
Suppose that the assumptions \emph{(Q)}, \emph{(R0)}, \emph{(RT)} and \emph{(G)} hold.  Assume additionally that $\bmax> 2 \aGmin$. Then there exist constants $\underline{D}, \overline{D}, L_{D,\infty}, L_{\mathcal C, \infty} > 0$ such that
\begin{align}
\label{D bounds}
  & \underline{D}
  \;\le\;
  D_g(x)
  \;\le\;
  \overline{D},
  \qquad x\in B(x_0,R_0),\; g\in\mathcal{G}; \\
  \label{D Lip}
  & \|D_g-D_h\|_{L^\infty(B(x_0, R_0))}
  \le
  L_{D,\infty} \|g-h\|_{\infty};  \\
  \label{C lip infty}
  & \|\mathcal C[g]-\mathcal C[h]\|_\infty
  \le
  L_{\mathcal C, \infty}\,\|g-h\|_{L^\infty(\R^d)}; \\
  \label{C infty}
  & \|\mathcal C[g]\|_\infty
  \le
  (\cmax/\underline{D})\,\|g\|_{L^\infty(\R^d)}\, .
\end{align}
Note that $\underline{D}, \overline{D}$ do not depend on the class $\mathcal G$. 
\end{lemma}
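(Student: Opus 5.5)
The plan is to derive all four bounds directly from the definitions of $D_g$ and $\mathcal C[g]$, using the two-sided envelope in (G) and the Gaussian bounds in (Q), (RT), (R0). We do not use any normalization of $g$, so that $\underline D,\overline D$ come out independent of the particular class.

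\textbf{Bounds \eqref{D bounds}.} For the lower bound we use $g\le\cGmax$, which gives $1/g\ge 1/\cGmax$. Hence for $x\in B(x_0,R_0)$
\[
D_g(x)\ge \frac{1}{\cGmax}\,m(x)\ge \frac{m_-}{\cGmax}=:\underline D .
\]
Here $m(x)=\int q_T(x,z)\rho_T(z)\,dz$ and $m_-$ is as in the proof of Proposition \ref{prop:g-two-sided-subgaussian-direct}. That lower bound on $m$ only used (Q), (RT) and $\|x\|\le R_\star$.

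For the upper bound we use $1/g(z)\le \cGmin^{-1}e^{\aGmin\|z\|^2}$ and $q_T\le \cmax$ to get
\[
D_g(x)\le \frac{\cmax\cTmax}{\cGmin}\int e^{-(\bmax-\aGmin)\|z\|^2}\,dz=\frac{\cmax\cTmax}{\cGmin}\Big(\frac{\pi}{\bmax-\aGmin}\Big)^{d/2}=:\overline D .
\]
This is finite since $\bmax>2\aGmin$. Both constants depend only on the envelope constants in (G), not on the individual $g$. This is presumably the sense of the closing remark of the lemma.

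\textbf{Lipschitz bound \eqref{D Lip}.} We write
\[
\Big|\frac1g-\frac1h\Big|=\frac{|g-h|}{gh}\le \|g-h\|_\infty\,\cGmin^{-2}e^{2\aGmin\|z\|^2}.
\]
Substituting this into $D_g-D_h$ and using $q_T\le\cmax$ gives
\[
L_{D,\infty}=\frac{\cmax\cTmax}{\cGmin^{2}}\Big(\frac{\pi}{\bmax-2\aGmin}\Big)^{d/2}.
\]
This step is the main obstacle and the real reason for the condition $\bmax>2\aGmin$: the inverse envelope squared must remain integrable against $\rho_T$.

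\textbf{Bounds \eqref{C lip infty} and \eqref{C infty}.} Since $\rho_0$ is supported in $B(x_0,R_0)$, the bounds on $D_g$ apply wherever the integrand of $\mathcal C[g]$ is nonzero. For \eqref{C lip infty} we write
\[
\Big|\frac1{D_g}-\frac1{D_h}\Big|\le \frac{|D_g-D_h|}{\underline D^2}\le \frac{L_{D,\infty}}{\underline D^2}\|g-h\|_\infty ,
\]
and integrate against $\rho_0(x)q_T(x,y)\le \cmax\rho_0(x)$. This yields $L_{\mathcal C,\infty}=\cmax L_{D,\infty}/\underline D^2$.

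For \eqref{C infty}, the same argument with $1/D_g\le 1/\underline D$ gives $|\mathcal C[g](y)|\le \cmax/\underline D$. This is the stated bound up to the factor $\|g\|_\infty$.

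To obtain the bound literally in the form $(\cmax/\underline D)\|g\|_\infty$, we replace $\underline D$ by the $g$-dependent lower bound $D_g\ge m_-/\|g\|_\infty$. We then either state \eqref{C infty} with $\underline D$ understood as $m_-/\|g\|_\infty\ge m_-/\cGmax$, or note that the bound is only used up to constants (in $K$ of Theorem \ref{thm:uniform-concentration-risk}). In either case $\|\mathcal C[g]\|_\infty\le \cmax\|g\|_\infty/m_-$, so \eqref{C infty} holds with $\underline D$ replaced by $m_-$, which is no larger than the constant chosen above when $\cGmax\ge1$. We would fix $\underline D:=m_-/\max(1,\cGmax)$ so that all four statements hold simultaneously.
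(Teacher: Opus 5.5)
Your proof is correct, and it departs from the paper's at the lower bound in \eqref{D bounds}.

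\textbf{Lower bound on $D_g$.} The paper reuses the change-of-measure argument from Proposition~\ref{prop:g-two-sided-subgaussian-direct}. It writes $D_g(x)=m(x)\,\E_{\nu_x}[1/g]$, compares $\nu_x$ with $\rho_T$ through a two-sided bound $k_-\le q_T(x,z)/m(x)\le k_+$, and obtains $\underline D=m_-k_-/\cGmax$. You use the pointwise bound $1/g\ge 1/\cGmax$ and get $D_g(x)\ge m(x)/\cGmax\ge m_-/\cGmax$ with no change of measure. This is shorter, and it is also the version that actually holds up. A uniform $k_->0$ cannot exist: $q_T(x,z)\to 0$ as $\|z\|\to\infty$, while $m(x)\ge m_->0$. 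So the paper's constant is not justified as written, and your route avoids the issue entirely.

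\textbf{Lipschitz bound \eqref{D Lip}.} You bound $q_T\le\cmax$, which is exactly where $\bmax>2\aGmin$ enters. The paper keeps the Gaussian decay of $q_T(x,\cdot)$ for $x\in B(x_0,R_0)$. This gives the exponent $\bmax-2\aGmin+\amax/2$ at the cost of a factor $e^{\amax R_\star^2}$, so the paper's version tolerates slightly heavier envelopes.

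\textbf{Bounds \eqref{C lip infty} and \eqref{C infty}.} Your constant $L_{\mathcal C,\infty}=\cmax L_{D,\infty}/\underline D^2$ is the correct one. The paper's displayed value drops the factor $\underline D^{-2}$ that appears in its own preceding line. The paper also gives no argument for \eqref{C infty}. Your argument via $D_g\ge m_-/\|g\|_\infty$, together with the choice $\underline D:=m_-/\max(1,\cGmax)$, makes all four statements hold at once.

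\textbf{Minor points.} There is a wording slip: it is the bound $\cmax\|g\|_\infty/m_-$, not $m_-$ itself, that is no larger than $\cmax\|g\|_\infty/\underline D$ when $\cGmax\ge1$. Your final choice of $\underline D$ covers this in any case. As you observe, $\underline D$ and $\overline D$ depend on $\mathcal G$ through $\cGmin,\cGmax,\aGmin$, so the lemma's closing remark can only be read in that sense.
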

\begin{proof}
The proof of \eqref{D bounds} repeats the proof of Proposition \ref{prop:g-two-sided-subgaussian-direct}. We make necessary changes. First, we replace \eqref{D upper boound} by
\begin{align*}
  D_{g}(x)
  &=
  \int q_T(x,z)\,\frac{\rho_T(z)}{g(z)}\,dz
  \le
  \cmax \cTmax \cGmin^{-1}\int 
  \exp\!\left(-\Bigl(\bmax- \aGmin\Bigr)\|z\|^2\right)\,dz
  \\
  & =
  \cmax \cTmax  \cGmin^{-1} \left(\frac{\pi}{\bmax- \aGmin}\right)^{d/2} = : \overline{D}\, .
\end{align*}
Recall the proof of \eqref{Dmin def}. Note that
$$
\E_{\rho_T}[1/g] \geq (\cGmax)^{-1}\,, 
$$
and we can take $\underline{D}: = m_-\,k_- (\cGmax)^{-1}$. 
We prove \eqref{D Lip}. By definition,
\[
  D_g(x)-D_h(x)
  =
  \int_{\mathbb{R}^d} q_T(x,z)\,\rho_T(z)\,
  \Bigl(\frac{1}{g(z)}-\frac{1}{h(z)}\Bigr)\,dz.
\]
Using
\[
  \left|\frac{1}{g(z)}-\frac{1}{h(z)}\right|
  =
  \frac{|g(z)-h(z)|}{g(z)h(z)}
  \le
  \|g-h\|_\infty\cdot \frac{1}{g(z)h(z)}
\]
and the lower bound in \emph{(G)} (applied to both $g$ and $h$),
we obtain for all $z$,
\[
  \frac{1}{g(z)h(z)}
  \le
  \cGmin^{-2}\exp(2 \aGmin \|z\|^2).
\]
Hence
\begin{equation}\label{eq:Dg-Dh-basic}
  |D_g(x)-D_h(x)|
  \le
  \|g-h\|_\infty\,
  (\cGmin)^{-2}
  \int_{\mathbb{R}^d}
    q_T(x,z)\,\rho_T(z)\,\exp(2 \aGmin \|z\|^2) dz.
\end{equation}
We now bound the integral uniformly in $x\in B(x_0, R_0)$. By \emph{(Q)} and the inequality
\[
  \|x-z\|^2 \ge \frac{1}{2}\|z\|^2 - \|x\|^2,
\]
we have, for all $x\in B(x_0, R_0)$,
\[
  q_T(x,z)
  \le
  \cmax \exp(-\amax\|x-z\|^2)
  \le \cmax \exp(\amax R_\star^2)\exp\!\left(-\frac{\amax}{2}\|z\|^2\right)
\]
with $R_\star:=\|x_0\|+R_0.$
Combining this with \emph{(RT)} yields
\[
  q_T(x,z)\,\rho_T(z)\,\exp(2 \aGmin \|z\|^2)
  \le
  \cmax
  \cTmax \exp(\amax R_\star^2)\,
  \exp\!\left(-\Bigl(\bmax- 2 \aGmin +\frac{\amax}{2}\Bigr)\|z\|^2\right).
\]
By \eqref{gauss density formula}, uniformly for $x\in B(x_0, R_0)$,
\[
  |D_g(x)-D_h(x)|
  \le
  L_{D,\infty} \|g-h\|_\infty \, ,
\]
where 
\begin{equation}
\label{LD definition}
L_{D,\infty}
  :=
  \cGmin^{-2}
  \cmax
  \cTmax \exp(\amax R_\star^2) 
  \left(\frac{\pi}{\bmax- 2 \aGmin +\amax/2}\right)^{d/2}\, .
\end{equation}
This proves \eqref{D Lip}. For each $y \in \R^d$,
\[
  \mathcal C[g](y)-\mathcal C[h](y)
  =
  \int_{\mathrm{supp}(\rho_0)}\rho_0(x)\,q_T(x,y)
  \left(\frac{1}{D_g(x)}-\frac{1}{D_h(x)}\right)\,dx.
\]
By \eqref{D bounds} and \eqref{D Lip} we get 
\begin{align*}
  |\mathcal C[g](y)-\mathcal C[h](y)|
  & \le
  \frac{L_{D,\infty} \|g-h\|_\infty}{\underline D^2}
  \int_{\mathrm{supp}(\rho_0)}\rho_0(x)\,q_T(x,y)\,dx \\
  & \le L_{\mathcal C, \infty} \|g-h\|_\infty, 
\end{align*}
where 
\begin{eqnarray}
\label{CD definition}
    L_{\mathcal C, \infty} = L_{D,\infty} \cmax \, .
\end{eqnarray}
This proves \eqref{C lip infty}. 

\end{proof}

\begin{proof}[Proof of Theorem \ref{thm:uniform-concentration-risk}]
Define an intermediate risk that uses the empirical average in $Y_j$ but
the population operator $\mathcal{C}[g]$:
\[
  \widetilde{\mathcal{R}}_M(g)
  :=
  \frac{1}{M}\sum_{j=1}^M
  \ell\bigl(g(Y_j),\mathcal{C}[g](Y_j)\bigr).
\]
Then
\[
  \widehat{\mathcal{R}}_{N,M}(g) - \mathcal{R}(g)
  =
  \underbrace{
    \bigl(\widehat{\mathcal{R}}_{N,M}(g) - \widetilde{\mathcal{R}}_M(g)\bigr)
  }_{(\mathrm{I})}
  +
  \underbrace{
    \bigl(\widetilde{\mathcal{R}}_M(g) - \mathcal{R}(g)\bigr)
  }_{(\mathrm{II})}.
\]
Taking supremum over $g\in\mathcal{G}$ and expectations yields
\[
  \mathbb{E}
  \sup_{g\in\mathcal{G}}
    \bigl|
      \widehat{\mathcal{R}}_{N,M}(g) - \mathcal{R}(g)
    \bigr|
  \;\le\;
  T_1 + T_2,
\]
where
\[
  T_1 := 
  \mathbb{E}
  \sup_{g\in\mathcal{G}}
    \bigl|
      \widehat{\mathcal{R}}_{N,M}(g) - \widetilde{\mathcal{R}}_M(g)
    \bigr|,
  \qquad
  T_2 :=
  \mathbb{E}
  \sup_{g\in\mathcal{G}}
    \bigl|
      \widetilde{\mathcal{R}}_M(g) - \mathcal{R}(g)
    \bigr|.
\]
For fixed $g$, $\widetilde{\mathcal{R}}_M(g)$ is the empirical average of the
i.i.d.\ variables
\[
  f_g(Y_j)
  :=
  \ell\bigl(g(Y_j),\mathcal{C}[g](Y_j)\bigr),
  \qquad j=1,\dots,M.
\]
Let
\(
  \mathcal{F}
  :=
  \bigl\{ f_g : g\in\mathcal{G}\bigr\}.
\)
By Lemma \ref{lemm: key properties}, 
\[
  |\mathcal{C}[g](y) - \mathcal{C}[g'](y)|
  \le
  L_{\mathcal C, \infty} \|g-g'\|_{L^\infty(\mathbb{R}^d)},\quad |\mathcal{C}[g](y)|\leq (\cmax/\underline{D})\|g\|_{L^\infty(\mathbb{R}^d)}.
\]
By  \emph{(L)} and boundedness of any $g\in\mathcal{G}$,
\[
  |f_g(y) - f_{g'}(y)|
  \le
  L_\ell\,\Bigl(
    |g(y) - g'(y)| + |\mathcal{C}[g](y) - \mathcal{C}[g'](y)|
  \Bigr)
\]
with $L_\ell=L_\ell\bigl(\cGmax(1+(\cmax/\underline{D})\bigr).$
Combining with \emph{(G)} and \emph{(RT)}, we obtain that $\mathcal{F}$ has 
envelope $F$ with  $\|F\|_{\infty}\leq B_\ell=B_\ell((1+(\cmax/\underline{D}))\,\cGmax)$).
Theorem~3.5.13 in \cite{GineNickl2016} implies
\begin{align}
\nonumber
    T_2 & \lesssim \frac{1}{\sqrt{M}} \int_{0}^{8 \|F\|_{\infty}} \sqrt{\log(2 N_{[]}(\mathcal{F}, L^{2}(\rho_T), \varepsilon))} \, d\varepsilon \\
    \label{eq:bound-T2}
    &\lesssim \frac{1}{\sqrt{M}}\int_{0}^{8  B_\ell} \sqrt{\log(2 N_{[]}(\mathcal{G}, L^{2}(\rho_T), \varepsilon/L_\ell))} \, d\varepsilon.
\end{align}
We now bound
\[
  T_1
  =
  \mathbb{E}
  \sup_{g\in\mathcal{G}}
    \bigl|
      \widehat{\mathcal{R}}_{N,M}(g) - \widetilde{\mathcal{R}}_M(g)
    \bigr|.
\]
By the Lipschitz property \emph{(L)}, for each $g$,
\begin{align*}
  \bigl|
    \widehat{\mathcal{R}}_{N,M}(g) - \widetilde{\mathcal{R}}_M(g)
  \bigr|
  &=
  \left|
    \frac{1}{M}\sum_{j=1}^M
      \Bigl(
        \ell\bigl(g(Y_j),\widehat{\mathcal{C}}_{N,M}[g](Y_j)\bigr)
        -
        \ell\bigl(g(Y_j),\mathcal{C}[g](Y_j)\bigr)
      \Bigr)
  \right|
  \\
  &\le
  \frac{L_\ell}{M}\sum_{j=1}^M
    \bigl|
      \widehat{\mathcal{C}}_{N,M}[g](Y_j) - \mathcal{C}[g](Y_j)
    \bigr|.
\end{align*}
Taking supremum over $g\in\mathcal{G}$ and expectations,
\[
  T_1
  \le
  L_\ell\,
  \mathbb{E}
  \left[
    \sup_{g\in\mathcal{G}}
      \frac{1}{M}\sum_{j=1}^M
        \bigl|
          \widehat{\mathcal{C}}_{N,M}[g](Y_j) - \mathcal{C}[g](Y_j)
        \bigr|
  \right].
\]
Fix a large compact set $K\subset\mathbb{R}^d$ (to be defined later) and consider 
\[
  \mathbb{E}
  \left[
    \sup_{g\in\mathcal{G}}\sup_{y\in K}
      \bigl|
        \widehat{\mathcal{C}}_{N,M}[g](y) - \mathcal{C}[g](y)
      \bigr|
  \right].
\]
By Lemma \ref{lemm: key properties}, 
\begin{equation}\label{eq:Dg-bounds-subgaussian}
  0 < \underline{D}
  \;\le\;
  D_g(x)
  \;\le\;
  \overline{D}
  < \infty,
  \quad x\in B(x_0, R_0),\quad g\in\mathcal{G}.
\end{equation}
Define
\[
  \widetilde{\mathcal{C}}_{N}[g](y)
  :=
  \frac{1}{N}\sum_{i=1}^N
     \frac{q_T(X_i,y)}{D_g(X_i)}.
\]
Then, for each $(g,y)$,
\[
  \widehat{\mathcal{C}}_{N,M}[g](y) - \mathcal{C}[g](y)
  =
  \underbrace{
    \bigl(
      \widehat{\mathcal{C}}_{N,M}[g](y) 
      - \widetilde{\mathcal{C}}_{N}[g](y)
    \bigr)
  }_{(\mathrm{A})}
  +
  \underbrace{
    \bigl(
      \widetilde{\mathcal{C}}_{N}[g](y) - \mathcal{C}[g](y)
    \bigr)
  }_{(\mathrm{B})}.
\]
Thus,
\[
  \sup_{g\in\mathcal{G}}\sup_{y\in \R^d}
    \bigl|
      \widehat{\mathcal{C}}_{N,M}[g](y) - \mathcal{C}[g](y)
    \bigr|
  \le
  \sup_{g\in\mathcal{G}}\sup_{y\in K}|(\mathrm{A})|
  +
  \sup_{g\in\mathcal{G}}\sup_{y\in K}|(\mathrm{B})|.
\]
For fixed $(g,y)$ define
\[
  \phi_{g,y}(x)
  :=
  \frac{q_T(x,y)}{D_g(x)}.
\]
Then
\[
  \widetilde{\mathcal{C}}_{N}[g](y)
  =
  \frac{1}{N}\sum_{i=1}^N \phi_{g,y}(X_i),
  \qquad
  \mathcal{C}[g](y)
  =
  \mathbb{E}_{X\sim\rho_0}[\phi_{g,y}(X)].
\]
 By \eqref{eq:Dg-bounds-subgaussian} and the upper Gaussian bound on $q_T$ in
\emph{(Q)}, together with compactness of $B(x_0, R_0)$, there exists $F_\infty<\infty$ (take $F_\infty=c_{+}/\underline{D}$) 
such that
\[
  |\phi_{g,y}(x)| \le F_\infty,
  \quad x\in B(x_0, R_0),\; y\in \R^d,\; g\in\mathcal{G}.
\]
 Fix $g,h\in\mathcal{G}$ and $x\in B(x_0, R_0)$. 
By Lemma \ref{lemm: key properties}, 
\[
  |D_g(x)-D_h(x)|
  \le
  L_{D,\infty} \|g-h\|_\infty\, .
\]
Hence, the map $(g,y)\mapsto \phi_{g,y}(x)$ is Lipschitz on $\mathcal{G}\times K$
with respect to $\|\cdot\|_\infty$ in $g$ and the Euclidean norm in $y$. For
the $y$--dependence this follows from smoothness (or Lipschitz continuity)
of $q_T$ and boundedness of $1/D_g(x)$ on $B(x_0, R_0)$.
Thus, for all $x\in B(x_0,R_0)$, $g,h\in\mathcal G$, and $y,y'\in K$,
\begin{equation}\label{eq:phi-Lip}
  |\phi_{g,y}(x)-\phi_{h,y'}(x)|
  \le
  L_g\,\|g-h\|_{L^\infty(\R^d)}
  +L_y\,\|y-y'\|,
\end{equation}
where
\begin{equation}\label{eq:Lg-Ly}
  L_g:=\frac{c_+\,L_{D,\infty}}{\underline D^{2}},
  \qquad
  L_y:=\frac{L_q}{\underline D}.
\end{equation}
Let
\[
  \Phi_K
  :=
  \bigl\{
    x\mapsto \phi_{g,y}(x)
   : g\in\mathcal{G}, y\in K
  \bigr\}.
\]
Note that
\begin{equation}\label{eq:bracketing-PhiK}
  \log \mathcal N_{[]}\!\big(\Phi_K,L^2(\rho_0), \varepsilon\big)
  \ \le\
  \log \mathcal N_{[]}\!\Big(\mathcal G,\|\cdot\|_\infty,\frac{\varepsilon}{2L_g}\Big)
  \ +\
  C_d\,\log\!\Big(\frac{L_y\,\mathrm{diam}(K)}{\varepsilon}\Big),
\end{equation}
where $C_d\le d$ is a dimensional constant (coming from covering $K$ in Euclidean norm).
Then it follows from Theorem~3.5.13 in \cite{GineNickl2016},
\begin{multline}
\label{eq:B-bound}
  \sup_{g\in\mathcal{G}}\sup_{y\in \R^d}|(\mathrm{B})| 
  \lesssim \frac{1}{\sqrt{N}} \int_{0}^{8 F_{\infty}} \sqrt{\log \mathcal N_{[]}\!\Big(\mathcal G,\|\cdot\|_\infty,\frac{\varepsilon}{2L_g}\Big)} \, d\varepsilon
   \\+\frac{1}{\sqrt{N}}\sqrt{\log\!\Big(\frac{L_y\,\mathrm{diam}(K)}{F_\infty}\Big)}.
\end{multline}
Furthermore, for a fixed $(g,y)$,
\[
  (\mathrm{A})
  =
  \frac{1}{N}\sum_{i=1}^N
    q_T(X_i,y)
    \left(
      \frac{1}{\widehat{D}_g(X_i)} - \frac{1}{D_g(X_i)}
    \right).
\]
By \eqref{eq:Dg-bounds-subgaussian}, $u\mapsto 1/u$ is Lipschitz on
$[\underline{D},\overline{D}]$ with constant
$L_D := \overline{D}^2/\underline{D}^2$, so
\[
  \left|
    \frac{1}{\mathcal{T}_{[\underline{D},\overline{D}]}[\widehat D_g](X_i)} - \frac{1}{D_g(X_i)}
  \right|
  \le
  L_D
  \bigl|
    \widehat{D}_g(X_i) - D_g(X_i)
  \bigr|.
\]
By \emph{(Q)} we have
\[
  q_T(x, y) \le \cmax \quad \text{for all } x, y \in\mathbb{R}^d.
\]
Hence, for all $X_i\in B(x_0, R_0)$ and all $y\in \R^d$,
\[
  q_T(X_i,y) \le \cmax.
\]
Thus, 
\[
  \sup_{y\in K}|(\mathrm{A})|
  \le
  \frac{\cmax L_D}{N}
  \sum_{i=1}^N
    \bigl|
      \widehat{D}_g(X_i) - D_g(X_i)
    \bigr|.
\]
Taking supremum over $g\in\mathcal{G}$ and expectations,
\[
  \mathbb{E}
  \left[
    \sup_{g\in\mathcal{G}}\sup_{y\in K}|(\mathrm{A})|
  \right]
  \le
  \cmax L_D\,
  \mathbb{E}
  \left[
    \sup_{g\in\mathcal{G}}
      \frac{1}{N}\sum_{i=1}^N
        \bigl|
          \widehat{D}_g(X_i) - D_g(X_i)
        \bigr|
  \right].
\]
Now for each fixed $x$ and $g$,
\[
  D_g(x)
  =
  \mathbb{E}_{Z\sim\rho_T}\bigl[\psi_g(x,Z)\bigr],
  \qquad
  \widehat{D}_g(x)
  =
  \frac{1}{M}\sum_{k=1}^M \psi_g(x,Y_k),
\]
with
\[
  \psi_g(x,z)
  :=
  q_T(x,z)\,\frac{1}{g(z)}.
\]
Let
\[
  \Psi
  :=
  \bigl\{
    (x,z)\mapsto \psi_g(x,z)
   : x\in B(x_0, R_0),\;g\in\mathcal{G}
  \bigr\}.
\]
Due to \emph{(G)}, 
\[
  g(z)\ge \cGmin e^{-\aGmin\|z\|^2},\qquad \forall z\in\R^d,
\]
and we have  for all $x\in B(x_0,R_0)$ and all $g\in\mathcal G$,
\[
  \psi_g(x,z)\le F(z):=\frac{c_+}{\cGmin}\,e^{\aGmin\|z\|^2}.
\]
Since  $\rho_T$ satisfies the Gaussian upper bound (RT) with $\bmax>2\aGmin,$
we get $F\in L^2(\rho_T)$ and
\[
  \|F\|_{L^2(\rho_T)}
  \le
  \frac{c_+}{\cGmin}\,\sqrt{\cTmax}\,
  \Big(\frac{\pi}{b_T-2\aGmin}\Big)^{d/4}.
\]
Theorem~3.5.13 in \cite{GineNickl2016} gives
\begin{align}
\E\Big[\sup_{x\in B(x_0,R_0)}\sup_{g\in\mathcal G}
  \big|\widehat D_g(x)-D_g(x)\big|\Big]
\lesssim \frac{1}{\sqrt{M}} \int_{0}^{8 \|F\|_{L^2(\rho_T)}} \sqrt{\log(2 N_{[]}(\Psi, L^{2}(\rho_T), \varepsilon))} \, d\varepsilon.
\end{align}
Fix $x\in B(x_0,R_0)$ and $z\in\R^d$. Write
\[
  \psi_g(x,z)-\psi_h(x,z)
  =
  q_T(x,z)\Big(\frac1{g(z)}-\frac1{h(z)}\Big).
\]
We have
\[
  \frac1{g(z)}\le \frac1{\cGmin}e^{\aGmin\|z\|^2},
  \qquad
  \frac1{h(z)}\le \frac1{\cGmin}e^{\aGmin\|z\|^2}.
\]
Hence, using $|1/u-1/v|=|u-v|/(uv)$,
\[
  \Big|\frac1{g(z)}-\frac1{h(z)}\Big|
  \le
  \frac{|g(z)-h(z)|}{\cGmin^2}\,e^{2\aGmin\|z\|^2}.
\]
Multiplying by $|q_T(x,z)|\le \cmax$ gives
\[
  |\psi_g(x,z)-\psi_h(x,z)|
  \le
  \frac{\cmax}{\cGmin^2}\,e^{2\aGmin\|z\|^2}\,\|g-h\|_\infty.
\]
Taking $L^2(\rho_T)$ norms yields
\[
  \|\psi_g(x,\cdot)-\psi_h(x,\cdot)\|_{L^2(\rho_T)}
  \le
  \frac{\cmax}{\cGmin^2}\,
  \Big(\int e^{4\aGmin\|z\|^2}\rho_T(z)\,dz\Big)^{1/2}\,
  \|g-h\|_\infty.
\]
Hence,
\begin{equation*}\label{eq:Psi-L2-Lip-infty}
  \|\psi_g(x,\cdot)-\psi_h(x,\cdot)\|_{L^2(\rho_T)}
  \ \le\
  \frac{\cmax}{\cGmin^2}\,M_\rho\,\|g-h\|_\infty
\end{equation*}
and as a result we have ,
\[
  \log \mathcal N_{[]}\!\big(\Psi,L^2(\rho_T), \varepsilon\big)
  \ \le\
  \log \mathcal N_{[]}\!\Big(\mathcal G,\|\cdot\|_\infty,\frac{\varepsilon}{C_g}\Big)
  \;+\;
  d\,\log\!\Big(\frac{C\,R_0}{\varepsilon}\Big),
\]
where \( C_g:=\frac{\cmax}{\cGmin^2}\,M_\rho.\)
We choose $K = B(0, R)$ with $R>R_0.$ Fix $g\in\mathcal G$ and $y\in\R^d$ with $\|y\|\ge R$. For any $x\in B(0,R_0),$ we have
\[
  \|x-y\|\ \ge\ \|y\|-\|x\|\ \ge\ R-R_0,
\]
hence by the kernel upper bound
\[
  q_T(x,y)\le \cmax \exp\!\bigl(-\amax\|x-y\|^2\bigr)
  \le \cmax \exp\!\bigl(-\amax (R-R_0)^2\bigr).
\]
Using $D_g(x)\ge \underline D$ and that $\rho_0$ is a probability density,
\begin{align*}
  |\mathcal C[g](y)|
  &=
  \Big|\int_{B(0,R_0)} \rho_0(x)\,\frac{q_T(x,y)}{D_g(x)}\,dx\Big|
  \le
  \frac{1}{\underline D}\int_{B(0,R_0)}\rho_0(x)\,q_T(x,y)\,dx\\
  &\le
  \frac{1}{\underline D}\sup_{x\in B(0,R_0)}q_T(x,y)
  \le
  \frac{\cmax}{\underline D}\exp\!\bigl(-\amax (R-R_0)^2\bigr).
\end{align*}
Furthermore, using $\mathcal{T}_{[\underline{D},\overline{D}]}[\widehat D_g](X_i)\ge \underline D$ and $X_i\in B(0,R_0)$,
\begin{align*}
  |\widehat{\mathcal C}_{N,M}[g](y)|
  &\le
  \frac{1}{N}\sum_{i=1}^N \frac{q_T(X_i,y)}{\mathcal{T}_{[\underline{D},\overline{D}]}[\widehat D_g](X_i)}
  \le
  \frac{1}{\underline D}\sup_{x\in B(0,R_0)} q_T(x,y)
  \le
  \frac{\cmax}{\underline D}\exp\!\bigl(-\amax (R-R_0)^2\bigr).
\end{align*}
Therefore, for all $g\in\mathcal G$ and all $\|y\|\ge R$,
\[
  \big|\widehat{\mathcal C}_{N,M}[g](y)-\mathcal C[g](y)\big|
  \le
  |\widehat{\mathcal C}_{N,M}[g](y)|+|\mathcal C[g](y)|
  \le
  \frac{2\cmax}{\underline D}\exp\!\bigl(-\amax (R-R_0)^2\bigr).
\]
Then for every $R>R_0$,
\begin{align}
\mathbb{E}\Bigg[
  \sup_{g\in\mathcal{G}}\sup_{y\not\in B(0,R)}
      \bigl|
        \widehat{\mathcal{C}}_{N,M}[g](y) - \mathcal{C}[g](y)
      \bigr|
\Bigg]
&\le
\frac{2\cmax}{\underline D}\,
\exp\!\bigl(-\amax (R-R_0)^2\bigr).
\label{eq:tail-outside-ball}
\end{align}
 
\end{proof}

\section{Approximation error}

\begin{lemma}
\label{prop:C-L2-Lipschitz}
Suppose that the assumptions \emph{(Q)}, \emph{(R0)}, \emph{(RT)} and \emph{(G)} hold.  Assume additionally that $\bmax >  4 \aGmin$. Then there exists constant $L_{\mathcal C, 2} > 0 $ such that
\begin{align}
   \label{C lip L^2}
&\|\mathcal C[g]-\mathcal C[h]\|_{L^2(\rho_T)}
  \le
  L_{\mathcal C, 2}\,\|g-h\|_{L^2(\rho_T)}.
\end{align}
\end{lemma}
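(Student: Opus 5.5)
We follow the same route as the proof of \eqref{C lip infty} in Lemma \ref{lemm: key properties}: first bound $D_g-D_h$ pointwise on $B(x_0,R_0)$, then push this bound through the outer integral defining $\mathcal C$. The one change is that every $\|g-h\|_\infty$ must become $\|g-h\|_{L^2(\rho_T)}$. The tool for this is Cauchy--Schwarz against $\rho_T$, and the Gaussian weight it produces is what the hypothesis $\bmax>4\aGmin$ is for.

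\emph{Step 1 (pointwise $L^2$-Lipschitz bound for $D$).} Fix $x\in B(x_0,R_0)$. Using $|1/g-1/h|=|g-h|/(gh)$ and the lower bound in (G),
\[
|D_g(x)-D_h(x)|\le \cGmin^{-2}\int q_T(x,z)\,e^{2\aGmin\|z\|^2}\,|g(z)-h(z)|\,\rho_T(z)\,dz .
\]
We bound $q_T\le\cmax$ and apply Cauchy--Schwarz in $L^2(\rho_T)$. This gives
\[
|D_g(x)-D_h(x)|\le \cmax\cGmin^{-2}M_\rho\,\|g-h\|_{L^2(\rho_T)},\qquad M_\rho=\Big(\int e^{4\aGmin\|z\|^2}\rho_T(z)\,dz\Big)^{1/2}.
\]
By (RT), $M_\rho\le \sqrt{\cTmax}\,(\pi/(\bmax-4\aGmin))^{d/4}<\infty$, and finiteness of this integral is exactly where $\bmax>4\aGmin$ enters. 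We could keep the sharper factor $e^{-\amax\|x-z\|^2}$ from (Q), but the crude bound $q_T\le\cmax$ is enough. Call the resulting constant $L_{D,2}:=\cmax\cGmin^{-2}M_\rho$.

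\emph{Step 2 (pass to $\mathcal C$).} From \eqref{D bounds}, $D_g,D_h\ge\underline D$ on $\mathrm{supp}(\rho_0)\subseteq B(x_0,R_0)$, so
\[
|\mathcal C[g](y)-\mathcal C[h](y)|\le \underline D^{-2}\int\rho_0(x)\,q_T(x,y)\,|D_g(x)-D_h(x)|\,dx\le \underline D^{-2}\cmax L_{D,2}\,\|g-h\|_{L^2(\rho_T)} .
\]
This bound holds uniformly in $y\in\R^d$.

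\emph{Step 3 (conclude).} A uniform bound controls the $L^2(\rho_T)$ norm, since $\|f\|_{L^2(\rho_T)}\le\|f\|_\infty$ for a probability density $\rho_T$. Hence \eqref{C lip L^2} holds with $L_{\mathcal C,2}:=\cmax^2M_\rho/(\cGmin^2\underline D^2)$.

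The only step needing care is Step 1, specifically checking that the weight $e^{4\aGmin\|z\|^2}$ is integrable against $\rho_T$. Under the stated assumption $\bmax>4\aGmin$ this is immediate, so we expect no real obstacle. We also need \eqref{D bounds} on the support of $\rho_0$, which is already established in Lemma \ref{lemm: key properties}; its hypothesis $\bmax>2\aGmin$ follows from $\bmax>4\aGmin$.
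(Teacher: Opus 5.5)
Your proof is correct and follows essentially the same route as the paper: Cauchy--Schwarz against $\rho_T$ gives an $L^2(\rho_T)$-Lipschitz bound for $D_g$ uniformly on $B(x_0,R_0)$, and you then divide by $\underline D^2$ and integrate against $\rho_0(x)q_T(x,y)$. The paper differs only in its constants: it keeps the Gaussian decay of $q_T$ inside the Cauchy--Schwarz integral, so it needs only the weaker integrability condition $\amax+\bmax>4\aGmin$, and it finishes with $\|m_0\|_{L^2(\rho_T)}$ where $m_0(y)=\int\rho_0(x)q_T(x,y)\,dx$; your cruder bounds $q_T\le\cmax$ and $\|\cdot\|_{L^2(\rho_T)}\le\|\cdot\|_\infty$ use the stated hypothesis $\bmax>4\aGmin$ directly and give a fully explicit $L_{\mathcal C,2}$.
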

\begin{proof}
Similarly to \eqref{eq:Dg-Dh-basic},
\[
  |D_g(x)-D_h(x)|
  \le
  (\cGmin)^{-2}\int q_T(x,z)\,\rho_T(z)\,\exp(2 \aGmin \|z\|^2)\,|g(z)-h(z)|\,dz.
\]
Apply Cauchy--Schwarz w.r.t.\ $\rho_T(z)\,dz$:
\[
  |D_g(x)-D_h(x)|
  \le
  (\cGmin)^{-2}
  \Bigl(\int q^2_T(x-z) \exp(4 \aGmin \|z\|^2)\,\rho_T(z)\,dz\Bigr)^{1/2}
  \|g-h\|_{L^2(\rho_T)}.
\]
Taking the supremum over $x\in B(x_0, R_0)$ yields
\begin{equation}\label{eq:D-L2}
  \|D_g-D_h\|_{L^\infty(B(x_0, R_0))}
  \le
  L_{D, 2} \,\|g-h\|_{L^2(\rho_T)},
\end{equation}
for some constant \(L_{D,2}<\infty\) provided 
\[
  \sup_{x\in B(x_0, R_0)}\int q_T^2(x,z) e^{4 \aGmin\|z\|^2}\,\rho_T(z)\,dz
\]
is finite. We now check this finiteness under \emph{(Q)} and the upper tail in \emph{(RT)}.
By  \emph{(Q)},
\[
  q_T^2(x,z) \le \cmax^2 \exp(-2\amax\|x-z\|^2)
  \le \cmax^2 \exp(2 \amax R_\star^2)\exp(-\amax\|z\|^2),
\]
We obtain
\[
  q_T^2(x,z) e^{4 \aGmin \|z\|^2}
  \le
  \cmax^2  \exp(2 \amax R_\star^2)\,
  \exp\!\Bigl(-\bigl(\amax  - 4 \aGmin + \bmax \bigr)\|z\|^2\Bigr).
\]
This is integrable if $\amax + \bmax - 4 \aGmin>0$. 
Hence $L_{D,2}<\infty$ and \eqref{eq:D-L2} holds. By \eqref{eq:D-L2} it holds that 
\[
  |\mathcal C[g](y)-\mathcal C[h](y)|
  \le
  \frac{L_{D,2}}{\underline D^2}
  \|g-h\|_{L^2(\rho_T)}
  \underbrace{\int_{K_0}\rho_0(x)\,q_T(x,y)\,dx}_{=:m_0(y)}.
\]
Now take $L^2(\rho_T)$ norms in $y$:
\[
  \|\mathcal C[g]-\mathcal C[h]\|_{L^2(\rho_T)}
  \le
  \frac{L_{D,2}}{\underline D^2}\,\|m_0\|_{L^2(\rho_T)}\,\|g-h\|_{L^2(\rho_T)}.
\]
Finally, $\|m_0\|_{L^2(\rho_T)}<\infty$ under \emph{(Q)},\emph{(R0)},\emph{(RT)} since $m_0(y)$ is
sub-Gaussian in $y$ (a compactly supported mixture of sub-Gaussian kernels) and
$\rho_T$ has sub-Gaussian tails. Therefore \eqref{C lip L^2} holds with
\[
  L_{\mathcal C,2}
  :=
  \frac{L_{D,2}}{\underline D^2}\,\|m_0\|_{L^2(\rho_T)} <\infty.
\]
\end{proof}

\begin{corollary}
\label{cor:excess-risk-L2}
Let $g^\star\in\mathcal G$ be a fixed point of $\mathcal C$, i.e.
$g^\star=\mathcal C[g^\star]$. Then under assumptions of Lemma~\ref{prop:C-L2-Lipschitz}, it holds for all $g\in\mathcal G$,
\begin{equation}\label{eq:excess-risk-bound}
  \big|\mathcal R(g)-\mathcal R(g^\star)\big|
  \le
  L_\ell\bigl(1+L_{\mathcal C,2}\bigr)\,\|g-g^\star\|_{L^2(\rho_T)}.
\end{equation}

\end{corollary}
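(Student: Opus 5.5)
The plan is to work directly from the definition of the population risk,
\[
\mathcal R(g)=\int_{\R^d}\ell\bigl(g(y),\mathcal C[g](y)\bigr)\rho_T(y)\,dy ,
\]
together with the pointwise Lipschitz property of $\ell$ from \emph{(L)}. The first step is to fix the constant $K$ so that all arguments of $\ell$ lie in $(0,K)$. By \emph{(G)}, every $g\in\mathcal G$ satisfies $0<g\le \cGmax$. By \eqref{C infty} of Lemma \ref{lemm: key properties}, $0<\mathcal C[g]\le (\cmax/\underline D)\cGmax$; positivity holds because $q_T>0$ and $D_g>0$. Hence both $g(y)$ and $\mathcal C[g](y)$, and likewise $g^\star(y)$ and $\mathcal C[g^\star](y)$, lie in $(0,K)$ with $K=\cGmax(1+\cmax/\underline D)$. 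This is the same $K$ as in Theorem \ref{thm:uniform-concentration-risk}, so $L_\ell=L_\ell(K)$ is well defined and uniform over $\mathcal G$. Since $g^\star\in\mathcal G$ is assumed, these bounds apply to it as well.

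Next, for each $y$, \emph{(L)} gives
\[
\bigl|\ell(g(y),\mathcal C[g](y))-\ell(g^\star(y),\mathcal C[g^\star](y))\bigr|
\le L_\ell\bigl(|g(y)-g^\star(y)|+|\mathcal C[g](y)-\mathcal C[g^\star](y)|\bigr).
\]
I will integrate this against $\rho_T$ and move the absolute value inside the integral, which bounds $|\mathcal R(g)-\mathcal R(g^\star)|$ by $L_\ell\bigl(\|g-g^\star\|_{L^1(\rho_T)}+\|\mathcal C[g]-\mathcal C[g^\star]\|_{L^1(\rho_T)}\bigr)$. Because $\rho_T$ is a probability measure, Cauchy--Schwarz (or Jensen) gives $\|f\|_{L^1(\rho_T)}\le\|f\|_{L^2(\rho_T)}$. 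The paper's notation for $\|\cdot\|_{L^p(\mu)}$ is slightly nonstandard, so I will read it here as the usual $L^p$ norm.

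Finally, I will apply Lemma \ref{prop:C-L2-Lipschitz} with $h=g^\star$; this is allowed since $g^\star\in\mathcal G$ and $\bmax>4\aGmin$. It gives $\|\mathcal C[g]-\mathcal C[g^\star]\|_{L^2(\rho_T)}\le L_{\mathcal C,2}\|g-g^\star\|_{L^2(\rho_T)}$, and combining the terms yields \eqref{eq:excess-risk-bound}. The fixed-point property $g^\star=\mathcal C[g^\star]$ is not needed for the inequality itself. It only gives $\mathcal R(g^\star)=0$ when $\ell(u,u)=0$, which is how the bound is used in \eqref{approx 2step}.

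The only step needing care is the first: checking that the arguments stay inside $(0,K)$, so that the Lipschitz constant from \emph{(L)} is uniform. This relies on \eqref{C infty}, whose constant $\underline D$ does not depend on $\mathcal G$. The rest is routine.
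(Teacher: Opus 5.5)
Your proof is correct and follows the paper's argument: a pointwise bound from \emph{(L)}, then integration against $\rho_T$ with Cauchy--Schwarz, then Lemma~\ref{prop:C-L2-Lipschitz} applied with $h=g^\star$. The differences are minor improvements: you compare directly with $\ell(g^\star(y),\mathcal C[g^\star](y))$ and rightly note that the fixed-point property is not needed for the inequality (the paper uses it once to replace $\mathcal C[g^\star]$ by $g^\star$ and once to undo it), and you explicitly check that all arguments of $\ell$ lie in $(0,K)$, which the paper leaves implicit.
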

\begin{proof}
Using \emph{(L)} and the fixed point property $\mathcal C[g^\star]=g^\star$, we have
almost surely (with $Y\sim\rho_T$)
\[
  \Big|\ell\big(g(Y),\mathcal C[g](Y)\big)-\ell\big(g^\star(Y),g^\star(Y)\big)\Big|
  \le
  L_\ell\Big(|g(Y)-g^\star(Y)|+|\mathcal C[g](Y)-g^\star(Y)|\Big).
\]
Taking expectations and applying Cauchy--Schwarz gives
\[
  \big|\mathcal R(g)-\mathcal R(g^\star)\big|
  \le
  L_\ell\Big(\|g-g^\star\|_{L^2(\rho_T)}+\|\mathcal C[g]-g^\star\|_{L^2(\rho_T)}\Big).
\]
Since $g^\star=\mathcal C[g^\star]$,
\[
  \|\mathcal C[g]-g^\star\|_{L^2(\rho_T)}
  =
  \|\mathcal C[g]-\mathcal C[g^\star]\|_{L^2(\rho_T)}
  \le
  L_{\mathcal C, 2}\,\|g-g^\star\|_{L^2(\rho_T)}
\]
by Lemma~\ref{prop:C-L2-Lipschitz}. Combining these inequalities yields
\eqref{eq:excess-risk-bound}.
\end{proof}

\subsection{Bargmann transform}

For $f\in L^2(\R^d)$, the (Segal-)Bargmann transform is defined by
\[
(\mathcal B f)(z)
:=\pi^{-d/4}\int_{\R^d}
\exp\!\Big(-\tfrac12|y|^2+\sqrt2\,z\cdot y-\tfrac12\,z\cdot z\Big)\,f(y)\,dy,
\qquad z\in\C^d,
\]
where $z\cdot y=\sum_{j=1}^d z_j y_j$ and $z\cdot z=\sum_{j=1}^d z_j^2$.
The map $\mathcal B$ is unitary from $L^2(\R^d)$ onto the Fock space
$\mathcal F^2(\C^d)$ of entire functions. Define the  Hermite polynomials by
\[
\mathrm{He}_n(x):=(-1)^n e^{x^2}\frac{d^n}{dx^n}\Big(e^{-x^2}\Big),
\qquad n\in\N_0.
\]
Then the corresponding $L^2(\R)$--normalized Hermite functions are
\begin{equation}\label{eq:psi-n-prob}
\psi_n(x)
:=\frac{1}{2^{n/2}\pi^{1/4}\sqrt{n!}}\,
\mathrm{He}_n(x)\,e^{-x^2/2},
\qquad x\in\R,\ \ n\in\N_0.
\end{equation}
A direct computation (using the generating function of Hermite polynomials)
shows that for all $n\in\N_0$,
\[
(\mathcal B\psi_n)(z)=\frac{z^n}{\sqrt{n!}},
\qquad z\in\C.
\]
Indeed, by definition,
\[
(\mathcal B\psi_0)(z)
=\pi^{-1/4}\int_{\R}
e^{-\frac12 y^2+\sqrt2zy-\frac12 z^2}\;\pi^{-1/4}e^{-y^2/2}\,dy
=\pi^{-1/2}\int_{\R}e^{-y^2+\sqrt2zy-\frac12 z^2}\,dy.
\]
Complete the square:
\[
-y^2+\sqrt2zy=-\big(y-\tfrac{z}{\sqrt2}\big)^2+\tfrac12 z^2,
\]
hence
\[
-y^2+\sqrt2zy-\tfrac12 z^2=-\big(y-\tfrac{z}{\sqrt2}\big)^2.
\]
Therefore,
\[
(\mathcal B\psi_0)(z)
=\pi^{-1/2}\int_{\R}e^{-(y-\frac{z}{\sqrt2})^2}\,dy
=\pi^{-1/2}\int_{\R}e^{-u^2}\,du
=1.
\]
Let $f\in\mathcal S(\R)$ (Schwartz) so that all integrations by parts are justified.
Write
\[
K(y,z):=\exp\!\Big(-\tfrac12 y^2+\sqrt2zy-\tfrac12 z^2\Big).
\]
Then
\[
(\mathcal B(a^\ast f))(z)
=\pi^{-1/4}\int_{\R}K(y,z)\,\frac{1}{\sqrt2}\Big(y-\frac{d}{dy}\Big)f(y)\,dy
=\frac{1}{\sqrt2}\pi^{-1/4}\Big(I_1-I_2\Big),
\]
where
\[
I_1:=\int_{\R}K(y,z)\,y\,f(y)\,dy,
\qquad
I_2:=\int_{\R}K(y,z)\,f'(y)\,dy
\]
and
\[
a^\ast:=\frac{1}{\sqrt2}\Big(y-\frac{d}{dy}\Big).
\]
Integrate by parts in $I_2$ (boundary terms vanish since $K(\cdot,z)$ has Gaussian decay and
$f$ is Schwartz):
\[
I_2=-\int_{\R}\partial_y K(y,z)\,f(y)\,dy.
\]
Compute $\partial_y K$:
\[
\partial_y K(y,z)=(-y+\sqrt2 z)\,K(y,z).
\]
Hence
\[
I_2=-\int_{\R}(-y+\sqrt2 z)\,K(y,z)\,f(y)\,dy
=\int_{\R}y\,K(y,z)\,f(y)\,dy-\sqrt2 z\int_{\R}K(y,z)\,f(y)\,dy.
\]
Therefore,
\[
I_1-I_2
=\sqrt2 z\int_{\R}K(y,z)\,f(y)\,dy,
\]
and so
\[
(\mathcal B(a^\ast f))(z)
=\frac{1}{\sqrt2}\pi^{-1/4}\cdot\sqrt2 z\int_{\R}K(y,z)\,f(y)\,dy
=z\,(\mathcal B f)(z).
\]
Thus we have shown
\begin{equation}\label{eq:intertwine}
\mathcal B\circ a^\ast = (\text{multiplication by }z)\circ \mathcal B
\qquad\text{on }\mathcal S(\R).
\end{equation}
By the definition of $\psi_n$ we have \(\psi_n:=\frac{1}{\sqrt{n}}\;a^\ast\psi_{n-1},\)  \(n\ge1\) and \eqref{eq:intertwine} implies
\[
(\mathcal B\psi_n)(z)
=\frac{1}{\sqrt{n}}\,(\mathcal B(a^\ast\psi_{n-1}))(z)
=\frac{1}{\sqrt{n}}\,z\,(\mathcal B\psi_{n-1})(z).
\]
Starting from $(\mathcal B\psi_0)(z)=1$, we obtain recursively
\[
(\mathcal B\psi_n)(z)
=\frac{z^n}{\sqrt{n!}},
\qquad n\in\N_0.
\]
This identity reflects the fact that the Bargmann transform diagonalizes
the harmonic oscillator: Hermite functions are mapped to monomials.
The $d$--dimensional Hermite functions factorize as
\[
\psi_\alpha(x)=\prod_{j=1}^d \psi_{\alpha_j}(x_j),
\qquad \alpha=(\alpha_1,\dots,\alpha_d)\in\N_0^d.
\]
Likewise, the Bargmann kernel factorizes coordinatewise:
\[
e^{-\frac12|y|^2+\sqrt2\,z\cdot y-\frac12 z\cdot z}
=\prod_{j=1}^d
e^{-\frac12 y_j^2+\sqrt2\,z_j y_j-\frac12 z_j^2}.
\]
Applying Fubini's theorem and the one--dimensional identity yields
\begin{equation}
\label{eq:Bpsi}
 (\mathcal B\psi_\alpha)(z)
=\prod_{j=1}^d (\mathcal B\psi_{\alpha_j})(z_j)
=\prod_{j=1}^d \frac{z_j^{\alpha_j}}{\sqrt{\alpha_j!}}
=\frac{z^\alpha}{\sqrt{\alpha!}},   
\end{equation}
where $z^\alpha=\prod_{j=1}^d z_j^{\alpha_j}$ and
$\alpha!=\prod_{j=1}^d \alpha_j!$. 
Since $\{\psi_\alpha\}_{\alpha\in\N_0^d}$ is an orthonormal basis of $L^2(\R^d)$,
every $f\in L^2(\R^d)$ admits the Hermite expansion
\[
f=\sum_{\alpha\in\N_0^d}\langle f,\psi_\alpha\rangle\,\psi_\alpha
\quad\text{(in }L^2(\R^d)\text{)}.
\]
Because $\mathcal B$ is unitary, we may apply it termwise:
\[
\mathcal B f
=\sum_{\alpha\in\N_0^d}\langle f,\psi_\alpha\rangle\,\mathcal B\psi_\alpha.
\]
Using \eqref{eq:Bpsi}, we obtain the power--series representation
\[
(\mathcal B f)(z)
=\sum_{\alpha\in\N_0^d}\langle f,\psi_\alpha\rangle\,\frac{z^\alpha}{\sqrt{\alpha!}},
\qquad z\in\C^d.
\]
Thus, the Bargmann transform converts the Hermite expansion of $f$
into the Taylor expansion of the entire function $\mathcal B f$.
In this correspondence,
\[
\langle f,\psi_\alpha\rangle
=\frac{\partial^\alpha(\mathcal B f)(0)}{\sqrt{\alpha!}},
\]
which explains why bounds on the growth of $\mathcal B f$
immediately yield decay estimates for Hermite coefficients.

\subsection{Approximation by Hermite polynomials}
Since we are interested in the case $T = 1$ we omit $T$ from the notation of $q$. Then
$$
q(z):=(2\pi)^{-d/2}\exp\!\Big(-\frac{\|z\|^2}{2}\Big)\, .
$$

\begin{proposition}
\label{prop:hermite-coeff}
Let $d\ge 1$ let $w:\R^d\to[0,\infty)$ be compactly supported with
\[
\mathrm{supp}(w)\subseteq \{x\in\R^d:\ \|x\|\le R\}, 
\qquad 
M_0:=\int_{\R^d} w(x)\,dx<\infty .
\]
Define
\[
g^\star(y):=(w*q)(y)=\int_{\R^d} w(x)\,q(x-y)\,dx .
\]
Let $\{\psi_\alpha\}_{\alpha\in\N_0^d}$ be the $d$-dimensional tensor-product Hermite basis orthonormal in $L^2(\R^d)$
and define the Hermite coefficients
\[
c_\alpha:=\langle g^\star,\psi_\alpha\rangle_{L^2(\R^d)} ,\qquad \alpha\in\N_0^d.
\]
Then for every multi-index $\alpha\in\N_0^d$ with $m:=|\alpha|\ge 1$,
\begin{equation}\label{eq:coeff-bound-T1-clean}
|c_\alpha|
\ \le\
C_d\,
m^{1/4}\,
\Big(\frac{e^{1/2}\beta}{\sqrt m}\Big)^{m},
\qquad 
C_d=\pi^{-d/4}2^{-d/2}M_0,
\qquad
\beta:=R\sqrt{\frac d2}.
\end{equation}
Equivalently,
\begin{equation}\label{eq:coeff-bound-T1-logform}
|c_\alpha|
\ \le\
C_d\,
m^{1/4}\,
\exp\!\Big(-\frac m2\log m + m\log(e^{1/2}\beta)\Big).
\end{equation}
In particular, if $m\ge (e^{1/2}\beta)^2=e\beta^2$, then $\log(e^{1/2}\beta)\le \tfrac12\log m$ and 
\begin{equation}\label{eq:coeff-supergeom-T1}
|c_\alpha|
\ \le\
C_d\,
m^{1/4}\,
\exp\!\Big(-\frac m4\log m\Big).
\end{equation}
\end{proposition}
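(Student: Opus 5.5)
The plan is to go through the Bargmann transform and use the coefficient identity from the previous subsection,
\[
c_\alpha=\langle g^\star,\psi_\alpha\rangle=\frac{\partial^\alpha(\mathcal B g^\star)(0)}{\sqrt{\alpha!}}.
\]
The point is that $\mathcal B g^\star$ can be computed in closed form, and it is an exponential-type entire function whose growth is governed by $R$.

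\textbf{Step 1: compute $\mathcal B g^\star$.} Since $w$ is compactly supported and integrable, Fubini applies, so $\mathcal B g^\star(z)=\int w(x)\,\mathcal B[q(\cdot-x)](z)\,dx$. For fixed $x$, expand $|y-x|^2$ and complete the square in $y$; this is the same Gaussian computation as for $\mathcal B\psi_0$. With the convention $z\cdot z=\sum_j z_j^2$, I expect
\[
\mathcal B[q(\cdot-x)](z)=\pi^{-d/4}(2\pi)^{-d/2}\pi^{d/2}\exp\!\Big(-\tfrac{|x|^2}{4}+\tfrac{z\cdot x}{\sqrt2}\Big).
\]
The $-\tfrac12 z\cdot z$ term should cancel against the $\tfrac14(\sqrt2 z+x)\cdot(\sqrt2 z+x)$ term. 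Hence
\[
\mathcal B g^\star(z)=\pi^{-d/4}2^{-d/2}\int_{\R^d} w(x)\,e^{-|x|^2/4}\,e^{z\cdot x/\sqrt2}\,dx .
\]
This step is where the constants $C_d=\pi^{-d/4}2^{-d/2}M_0$ come from, so I will check the Gaussian integral and the cancellation carefully. I regard it as the main (if routine) obstacle: a sign slip there would destroy the entire-function structure the rest relies on.

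\textbf{Step 2: differentiate at $0$.} Compact support of $w$ justifies differentiating under the integral sign, which gives
\[
\partial^\alpha\mathcal B g^\star(0)=\pi^{-d/4}2^{-d/2}\int w(x)\,e^{-|x|^2/4}\,2^{-m/2}x^\alpha\,dx .
\]
On the support, $|x^\alpha|\le\prod_j|x_j|^{\alpha_j}\le\|x\|^m\le R^m$, and $e^{-|x|^2/4}\le1$. Therefore
\[
|c_\alpha|\le C_d\,\big(R/\sqrt2\big)^m/\sqrt{\alpha!}.
\]

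\textbf{Step 3: lower-bound $\alpha!$.} By the multinomial theorem, $m!/\alpha!\le d^m$. Stirling's bound gives $m!\ge\sqrt{2\pi m}\,(m/e)^m$. Combining these,
\[
\alpha!\ge\sqrt{2\pi m}\,\Big(\frac{m}{de}\Big)^m,
\qquad\text{so}\qquad
\frac{1}{\sqrt{\alpha!}}\le(2\pi m)^{-1/4}\Big(\frac{de}{m}\Big)^{m/2}.
\]
Plugging this into Step 2 yields
\[
|c_\alpha|\le C_d\,(2\pi m)^{-1/4}\Big(\frac{e^{1/2}R\sqrt{d/2}}{\sqrt m}\Big)^m ,
\]
which is \eqref{eq:coeff-bound-T1-clean} with $\beta=R\sqrt{d/2}$, and in fact with the stronger factor $m^{-1/4}\le m^{1/4}$.

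\textbf{Step 4: the other two forms.} Form \eqref{eq:coeff-bound-T1-logform} is the same bound rewritten as $\big(e^{1/2}\beta/\sqrt m\big)^m=\exp\!\big(m\log(e^{1/2}\beta)-\tfrac m2\log m\big)$. For \eqref{eq:coeff-supergeom-T1}: when $m\ge e\beta^2$ we have $\log(e^{1/2}\beta)\le\tfrac12\log m$, so the exponent is at most $0$, and the super-geometric form follows from the previous display.
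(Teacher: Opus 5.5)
Your argument for the first two bounds is correct, and it takes a different route from the paper's.

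\textbf{Comparison with the paper's route.} The paper never uses the Taylor coefficients of the closed form. It bounds $\sup_{\max_j|z_j|\le r}|(\mathcal B g^\star)(z)|\le C_d e^{\beta r}$, where the $\sqrt d$ in $\beta$ comes from $|\Re z|\le\sqrt d\,r$ on the polydisk. It then applies the multivariate Cauchy estimate $|c_\alpha|\le\sqrt{\alpha!}\,r^{-m}\sup_{D_r}|\mathcal B g^\star|$, optimizes at $r=m/\beta$, and bounds $\sqrt{\alpha!}\le\sqrt{m!}$ from above by Stirling.

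You instead compute $\partial^\alpha(\mathcal B g^\star)(0)$ exactly. In your version the $\sqrt d$ enters through $\alpha!\ge m!/d^m$, and Stirling is used from below. This is sharper and cleaner: you get exactly the constant $C_d$, with $(2\pi m)^{-1/4}$ in place of $m^{1/4}$. The paper's upper bound $\sqrt{m!}\le C\,m^{1/4}(m/e)^{m/2}$ forces an absolute factor $C\ge(2\pi)^{1/4}$, which it then absorbs into $C_d$ without comment. The paper's route has one advantage: it only needs a growth bound for $\mathcal B g^\star$ on polydisks, not explicit coefficients, so it transfers more easily to the rescaled setting in the remark that follows.

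\textbf{The gap in Step 4.} The problem is the deduction of $|c_\alpha|\le C_d\,m^{1/4}e^{-\frac m4\log m}$. From $m\ge e\beta^2$ you get $\log(e^{1/2}\beta)\le\frac12\log m$. That makes the exponent $-\frac m2\log m+m\log(e^{1/2}\beta)$ at most $0$, which only gives $|c_\alpha|\le C_d\,m^{1/4}$. To reach $-\frac m4\log m$ you need $\log(e^{1/2}\beta)\le\frac14\log m$, i.e.\ $m\ge e^2\beta^4$.

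This is not a slip you can patch, because the claim is false at the stated threshold. Take $d=1$ and $R=2\sqrt2$, so $\beta=2$ and $e\beta^2\approx10.87$. Take $m=11$ and $w=10\cdot\mathbf 1_{[R-0.1,R]}$, so $M_0=1$. Note that $x\mapsto e^{-x^2/4}x^{11}$ is increasing on $[0,\sqrt{22}]$. Your own formula then gives
\[
c_{11}=\frac{C_d}{\sqrt{11!}}\int w(x)\,e^{-x^2/4}\Big(\frac{x}{\sqrt2}\Big)^{11}dx\ \ge\ C_d\,\frac{e^{-(R-0.1)^2/4}\big((R-0.1)/\sqrt2\big)^{11}}{\sqrt{11!}}\ >\ 0.03\,C_d .
\]
The claimed bound is $C_d\,11^{1/4}\,11^{-11/4}=C_d\,11^{-5/2}<0.0025\,C_d$.

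More generally, in the limit $w\to\delta_R$ with $d=1$ one has $c_m=C_d\,e^{-\beta^2/2}\beta^m/\sqrt{m!}$. At $m\approx e\beta^2$ this is of order $e^{-m/(2e)}$, far above $m^{-m/4}$.

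So the last claim must be restated with threshold $m\ge e^2\beta^4$; your Step 4 then goes through verbatim. The paper's own proof stops after the first bound and does not address this claim either.
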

\begin{remark}\label{rem:choice-lambda-linear-growth}
Consider the Gaussian kernel with variance $T>1$,
\[
q_T(x-y)=(2\pi T)^{-d/2}\exp\!\Big(-\frac{|x-y|^2}{2T}\Big),
\qquad g^\star=w*q_T .
\]
Although the unscaled Bargmann transform of $g^\star$ exhibits Gaussian
(quadratic) growth in the complex variable $z$, this can be compensated by a
suitable scaling of the Hermite basis.
More precisely, let $\lambda\in(0,1)$ be chosen such that
\[
\lambda^2=\frac{1}{T},
\qquad\text{equivalently}\qquad
\lambda=T^{-1/2}.
\]
If one expands $g^\star$ in the scaled Hermite basis
\[
\psi_\alpha^{(1/\lambda)}(x)
=\lambda^{d/2}\prod_{j=1}^d \psi_{\alpha_j}(\lambda x_j),
\qquad \alpha\in\N_0^d,
\]
and considers the associated scaled Bargmann transform, then the quadratic term
in the exponential prefactor of the Bargmann representation is exactly
neutralized by the scaling.
As a result, the Bargmann transform of $g^\star$ in the scaled variables obeys a
\emph{linear--exponential} growth bound of the form
\[
\sup_{\max_j|z_j|\le r}
\big|(\mathcal B g^\star)(z)\big|
\;\le\;
C_{d,T}\,
\exp\!\big(b_{d,T}\,r\big),
\qquad r>0,
\]
with constants $C_{d,T},b_{d,T}>0$ depending only on $d$, $T$, and the support
radius of $w$, but \emph{not} on $r$.
In particular, under this proper choice of the scaling parameter $\lambda$, the
Bargmann transform exhibits the same linear-exponential growth as in the
critical case $T=1$, and the resulting scaled Hermite coefficients decay
super-geometrically in the total degree $|\alpha|$. This observation explains
why the scaling $\lambda=T^{-1/2}$ is natural and optimal for extending the
$T=1$ coefficient estimates to general variances $T>1$.
\end{remark}

\begin{proof}
The Bargmann transform maps Hermite functions to normalized monomials:
\[
(\mathcal B\psi_\alpha)(z)=\frac{z^\alpha}{\sqrt{\alpha!}},\qquad \alpha\in\N_0^d,
\]
and therefore, for $g^\star\in L^2(\R^d)$,
\[
(\mathcal B g^\star)(z)=\sum_{\alpha\in\N_0^d} c_\alpha\,\frac{z^\alpha}{\sqrt{\alpha!}},
\qquad z\in\C^d.
\]
In particular,
\begin{equation}\label{eq:coeff-derivative}
c_\alpha=\frac{\partial^\alpha(\mathcal B g^\star)(0)}{\sqrt{\alpha!}}.
\end{equation}
For $r>0$ set the closed polydisk $D_r:=\{z\in\C^d:\ \max_j|z_j|\le r\}$.
By the multivariate Cauchy estimate,
\[
|\partial^\alpha F(0)|
\le \alpha!\,r^{-|\alpha|}\,\sup_{z\in D_r}|F(z)|,
\qquad \alpha\in\N_0^d.
\]
Apply this with $F=\mathcal B g^\star$ and combine with \eqref{eq:coeff-derivative}:
\begin{equation}\label{eq:cauchy-coeff}
|c_\alpha|
\le
\sqrt{\alpha!}\,r^{-m}\,\sup_{z\in D_r}|(\mathcal B g^\star)(z)|,
\qquad m:=|\alpha|.
\end{equation}
Insert \eqref{eq:bargmann-polydisk-T1} (see Proposition \ref{prop:bargmann-bound}) into \eqref{eq:cauchy-coeff}:
\[
|c_\alpha|
\le
C_0\,\sqrt{\alpha!}\,r^{-m}e^{\beta r},
\qquad r>0.
\]
Minimize $\phi(r):=\beta r-m\log r$ over $r>0$. Since
$\phi'(r)=\beta-\frac{m}{r}$, the unique minimizer is
\[
r_\ast=\frac{m}{\beta}.
\]
Hence
\begin{equation}\label{eq:coeff-after-opt}
|c_\alpha|
\le
C_0\,\sqrt{\alpha!}\,
\Big(\frac{\beta}{m}\Big)^m
e^{\beta(m/\beta)}
=
C_0\,\sqrt{\alpha!}\,
\Big(\frac{e\beta}{m}\Big)^m.
\end{equation}
Since $\alpha!\le m!$ for $m=|\alpha|$, we have $\sqrt{\alpha!}\le \sqrt{m!}$.
By Stirling's estimate there exists an absolute constant $C>0$ such that
\begin{equation}\label{eq:stirling-sqrtm}
\sqrt{m!}\ \le\ C\,m^{1/4}\Big(\frac{m}{e}\Big)^{m/2},
\qquad m\ge 1.
\end{equation}
Combining \eqref{eq:coeff-after-opt} and \eqref{eq:stirling-sqrtm} yields
\[
|c_\alpha|
\le
C\,C_0\,m^{1/4}\Big(\frac{m}{e}\Big)^{m/2}\Big(\frac{e\beta}{m}\Big)^m.
\]
Simplify:
\[
\Big(\frac{m}{e}\Big)^{m/2}\Big(\frac{e\beta}{m}\Big)^m
=
m^{m/2}e^{-m/2}\cdot e^m\beta^m m^{-m}
=
e^{m/2}\beta^m m^{-m/2}
=
\Big(\frac{e^{1/2}\beta}{\sqrt m}\Big)^m.
\]
Therefore
\[
|c_\alpha|
\le
C\,C_0\,
m^{1/4}\,
\Big(\frac{e^{1/2}\beta}{\sqrt m}\Big)^m,
\]
which is \eqref{eq:coeff-bound-T1-clean}.
\end{proof}
\begin{proposition}\label{prop:proj-error-T1-unscaled}
Let $d\ge 1$ and let $\{\psi_\alpha\}_{\alpha\in\N_0^d}$ be the  $d$--dimensional Hermite basis in
$L^2(\R^d)$. Let $\Pi^F_n$ denote the $L^2(\R^d)$--orthogonal projector onto
\[
\mathrm{span}\{\psi_\alpha:\ |\alpha|\le n\}.
\]
Under assumptions of Proposition~\ref{prop:hermite-coeff}  for every $n\in\N_0$ with
\(n+1>2K^2,\)
one has
\begin{equation}\label{eq:proj-error-T1-unscaled}
\|g^\star-\Pi^F_n g^\star\|_{L^2(\R^d)}
\ \le\
\widetilde C_d\,C_d\,
(n+1)^{\frac d2-\frac14}\,
\Big(\frac{K}{\sqrt{n+1}}\Big)^{n+1}
\end{equation}
where $C_d$ is given in \eqref{eq:coeff-bound-T1-clean} and 
\[
\widetilde C_d= \left(1 + 2^{d-1/2}\,\frac{\Gamma(d+1/2)}{(\ln 2)^{d+1/2}}\right)^{1/2}.
\]
\end{proposition}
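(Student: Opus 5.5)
By Parseval, the squared projection error is the tail sum of squared Hermite coefficients:
\[
\|g^\star-\Pi^F_n g^\star\|_{L^2(\R^d)}^2=\sum_{m\ge n+1}\ \sum_{|\alpha|=m}|c_\alpha|^2 .
\]
I plan to bound each coefficient with Proposition~\ref{prop:hermite-coeff}. There are $\binom{m+d-1}{d-1}\le (m+1)^{d-1}$ multi-indices with $|\alpha|=m$. I interpret $K$ as the constant $e^{1/2}\beta$ from \eqref{eq:coeff-bound-T1-clean}. This gives
\[
\sum_{|\alpha|=m}|c_\alpha|^2\le C_d^2\,(m+1)^{d-1}\,m^{1/2}\Big(\frac{K^2}{m}\Big)^{m}.
\]

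Next I compare the tail to its first term. Write $a_m:=(K^2/m)^m$. For $m\ge n+1>2K^2$,
\[
\frac{a_{m+1}}{a_m}=\frac{K^2}{m+1}\Big(\frac{m}{m+1}\Big)^m\le \frac{K^2}{m+1}<\tfrac12 .
\]
Hence $a_m\le a_{n+1}2^{-(m-n-1)}$ for $m\ge n+1$, and so
\[
\sum_{m\ge n+1}(m+1)^{d-1}m^{1/2}a_m\le a_{n+1}\sum_{k\ge0}(n+2+k)^{d-1/2}2^{-k}.
\]

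The polynomial-times-geometric sum is the step needing care, because the constant must come out as $\widetilde C_d$. I split off the $k=0$ term, which gives the "$1$" in $\widetilde C_d$. For the rest I use $(n+2+k)\le (n+1)(1+k)$ up to absolute constants (when $n\ge1$). I then compare $\sum_{k\ge1}(1+k)^{d-1/2}2^{-k}$ to the integral
\[
\int_0^\infty t^{d-1/2}e^{-t\ln 2}\,dt=\frac{\Gamma(d+1/2)}{(\ln2)^{d+1/2}},
\]
with the shift $1+k\le 2k$ producing the factor $2^{d-1/2}$. Altogether,
\[
\|g^\star-\Pi^F_ng^\star\|^2\le \widetilde C_d^{\,2}C_d^2\,(n+1)^{d-1/2}\Big(\frac{K^2}{n+1}\Big)^{n+1}.
\]
Taking square roots gives \eqref{eq:proj-error-T1-unscaled}, with exponent $(n+1)^{d/2-1/4}$ and $(K/\sqrt{n+1})^{n+1}$.

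I expect the main obstacle to be matching the exact form of $\widetilde C_d$, specifically absorbing the $(n+1)$-versus-$(n+2+k)$ shift into $2^{d-1/2}$. If the clean bound fails, I would instead bound the whole tail by $(n+1)^{d-1/2}$ times an $n$-independent series of the same Gamma type. That changes only the absolute constant, and the rate is unaffected.
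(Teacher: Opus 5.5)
Your proposal follows the paper's proof essentially step for step: Parseval, then the coefficient bound of Proposition~\ref{prop:hermite-coeff} grouped by total degree, then domination of $(K^2/m)^m$ by a geometric sequence with ratio below $1/2$ using $n+1>2K^2$, and finally a Gamma-integral bound on the polynomial-times-geometric series, with the factor $2^{d-1/2}$ coming from $1+k\le 2k$ (your ratio test is equivalent to the paper's $(K^2/m)^m\le q^m$ with $q=K^2/(n+1)$). As you suspected, your $k=0$ term is $(n+2)^{d-1/2}$ rather than $(n+1)^{d-1/2}$, so the leading $1$ in $\widetilde C_d^{\,2}$ does not come out as claimed; your fallback (e.g.\ $n+2+k\le (n+1)(2+k)$, valid for all $n\ge 0$) still gives the same rate with a $d$-dependent constant, and the paper's proof is equally loose here, since its count bound $N_d(m)\le \widetilde C_d\, m^{d-1}$ needs a constant exceeding $1$ for $d\ge 2$ that is silently merged into the final $\widetilde C_d C_d$.
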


\begin{proof}
By orthonormality of $\{\psi_\alpha\}$ and the definition of $\Pi^F_n$,
\begin{equation}\label{eq:l2-tail-unscaled}
\|g^\star-\Pi^F_n g^\star\|_{L^2(\R^d)}^2
=
\sum_{|\alpha|>n}|c_\alpha|^2.
\end{equation}
Using \eqref{eq:coeff-bound-T1-clean} with $m:=|\alpha|$ gives
\[
\sum_{|\alpha|>n}|c_\alpha|^2
\le
C_d^2\sum_{|\alpha|>n} m^{1/2}\Big(\frac{K^2}{m}\Big)^m.
\]
Group by total degree $m=|\alpha|$ and let
\[
N_d(m):=\#\{\alpha\in\N_0^d:\ |\alpha|=m\}=\binom{m+d-1}{d-1}.
\]
Then
\begin{equation}\label{eq:group-unscaled}
\sum_{|\alpha|>n}|c_\alpha|^2
\le
C_d^2\sum_{m=n+1}^\infty N_d(m)\, m^{1/2}\Big(\frac{K^2}{m}\Big)^m.
\end{equation}
Using the standard estimate $N_d(m)\le \widetilde C_d\,m^{d-1}$,
we obtain
\begin{equation}\label{eq:poly-unscaled}
\sum_{|\alpha|>n}|c_\alpha|^2
\le
\widetilde C_d\,C_d^2\sum_{m=n+1}^\infty m^{d-\frac12}\Big(\frac{K^2}{m}\Big)^m.
\end{equation}
Set
\[
q:=\frac{K^2}{n+1}\in(0,1).
\]
For $m\ge n+1$ we have $\frac{K^2}{m}\le q$, hence
\[
\Big(\frac{K^2}{m}\Big)^m \le q^m.
\]
Therefore,
\begin{eqnarray*}
\sum_{m=n+1}^\infty m^{d-\frac12}\Big(\frac{K^2}{m}\Big)^m
&\le &
q^{n+1}(n+1)^{d-1/2}
\sum_{k=0}^\infty \Bigl(1+\frac{k}{n+1}\Bigr)^{d-1/2} q^k
\\
&\le & q^{n+1}(n+1)^{d-1/2}\left(1 + 2^{d-1/2}\,\frac{\Gamma(d+1/2)}{(-\ln q)^{d+1/2}}\right)
\end{eqnarray*}
Insert this into \eqref{eq:poly-unscaled} and take square roots:
\[
\|g^\star-\Pi^F_n g^\star\|_{2}
\le
\widetilde C_d\,C_d\,
(n+1)^{\frac d2-\frac14}\,q^{(n+1)/2}.
\]
Finally, $q^{(n+1)/2}=\Big(\frac{K}{\sqrt{n+1}}\Big)^{n+1}$ and $1-q=1-\frac{K^2}{n+1}$, which gives
\eqref{eq:proj-error-T1-unscaled}.
\end{proof}

\begin{proposition}\label{prop:bargmann-bound}
Let assumptions of Proposition~\ref{prop:hermite-coeff} hold. For every $z\in\C^d$,
\begin{equation}\label{eq:bargmann-exact-T1}
(\mathcal B g^\star)(z)
=
\pi^{-d/4}\,2^{-d/2}\,
\int_{\R^d} w(x)\,
\exp\!\Big(
-\frac{|x|^2}{4}
+\frac{1}{\sqrt2}\,x\cdot z
\Big)\,dx .
\end{equation}
Moreover for every $r>0$,
\begin{equation}\label{eq:bargmann-polydisk-T1}
\sup_{\max_j |z_j|\le r}\,|(\mathcal B g^\star)(z)|
\ \le\
\pi^{-d/4}\,2^{-d/2}\,M_0\,
\exp\!\Big(\frac{R}{\sqrt2}\,\sqrt d\,r\Big).
\end{equation}
\end{proposition}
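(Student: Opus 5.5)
The plan is to compute the Bargmann transform of $g^\star = w*q$ explicitly by Fubini, and then bound it on the polydisk.

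\textbf{Step 1: exact formula.} Since $w\ge0$ has compact support and finite mass $M_0$, and all integrands below are Gaussian in $y$ for fixed $x$, Fubini applies and gives
\[
(\mathcal B g^\star)(z)=\int_{\R^d} w(x)\,J(x,z)\,dx,
\qquad
J(x,z):=\pi^{-d/4}(2\pi)^{-d/2}\int_{\R^d}\exp\!\Big(-\tfrac12|y|^2+\sqrt2\,z\cdot y-\tfrac12 z\cdot z-\tfrac12|x-y|^2\Big)dy .
\]
The exponent is $-|y|^2+(x+\sqrt2 z)\cdot y-\tfrac12|x|^2-\tfrac12 z\cdot z$. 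Complete the square with the complex shift $y\mapsto y+\tfrac12(x+\sqrt2 z)$. This is justified coordinatewise by analyticity, or by the identity $\int e^{-t^2+bt}\,dt=\sqrt\pi\,e^{b^2/4}$ for $b\in\C$. The result is
\[
J(x,z)=\pi^{-d/4}(2\pi)^{-d/2}\pi^{d/2}\exp\!\Big(\tfrac14(x+\sqrt2z)\cdot(x+\sqrt2z)-\tfrac12|x|^2-\tfrac12 z\cdot z\Big).
\]
Expanding, $\tfrac14|x|^2+\tfrac{1}{\sqrt2}x\cdot z+\tfrac12 z\cdot z-\tfrac12|x|^2-\tfrac12 z\cdot z=-\tfrac14|x|^2+\tfrac1{\sqrt2}x\cdot z$. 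The prefactor is $\pi^{-d/4}2^{-d/2}$. This yields \eqref{eq:bargmann-exact-T1}. The key point is that the quadratic term $z\cdot z$ cancels exactly, which is where $T=1$ matters.

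\textbf{Step 2: polydisk bound.} Take absolute values inside the integral. We have $|e^{-|x|^2/4}|\le1$ and $|\exp(x\cdot z/\sqrt2)|=\exp(\Re(x\cdot z)/\sqrt2)\le\exp(|x\cdot z|/\sqrt2)$. For $\|x\|\le R$ and $\max_j|z_j|\le r$, Cauchy--Schwarz gives $|x\cdot z|\le\sum_j|x_j||z_j|\le r\|x\|_1\le r\sqrt d\,\|x\|\le r\sqrt d\,R$. Integrating against $w\ge0$ supplies the factor $M_0$, which gives \eqref{eq:bargmann-polydisk-T1}.

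\textbf{Main obstacle.} The only delicate point is the rigorous justification of the complex Gaussian integral and of Fubini with complex $z$. Absolute integrability holds because $|{\rm integrand}|\le w(x)\,e^{-|y|^2+(|x|+\sqrt2|z|)|y|+\cdots}$, which is integrable over $(x,y)$ since $w$ has compact support. The Gaussian integral with complex linear term follows from the real case by analytic continuation in $b$, since both sides are entire in $b$.
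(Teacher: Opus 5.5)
Your proposal is correct and follows the paper's proof essentially step for step: Fubini, completing the square in $y$ so that the $z\cdot z$ terms cancel, and then bounding $|\exp(x\cdot z/\sqrt2)|$ on the polydisk using $\|x\|\le R$. Where you use $\|x\|_1\le\sqrt d\,\|x\|$, the paper uses $|\Re z|\le\sqrt d\,r$, which gives the same estimate; your explicit justification of the complex Gaussian integral and of Fubini, and your use of the bilinear form $(x+\sqrt2 z)\cdot(x+\sqrt2 z)$ in place of the paper's $|x+\sqrt2 z|^2$, are slightly more careful than the paper's presentation.
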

\begin{proof}
By definition and Fubini,
\[
(\mathcal B g^\star)(z)
=\pi^{-d/4}e^{-\frac12 z\cdot z}
\int_{\R^d} w(x)\,(2\pi)^{-d/2}\!
\int_{\R^d}
\exp\!\Big(-\tfrac12|y|^2+\sqrt2\,z\cdot y-\tfrac12|x-y|^2\Big)\,dy\,dx.
\]
Expand $|x-y|^2=|x|^2-2x\cdot y+|y|^2$ to obtain
\[
-\tfrac12|y|^2-\tfrac12|x-y|^2+\sqrt2\,z\cdot y
=
-|y|^2+(x+\sqrt2\,z)\cdot y-\tfrac12|x|^2.
\]
Complete the square:
\[
-|y|^2+(x+\sqrt2 z)\cdot y
=
-\Big|y-\frac{x+\sqrt2 z}{2}\Big|^2
+\frac{|x+\sqrt2 z|^2}{4}.
\]
Therefore,
\[
\int_{\R^d}\exp\!\Big(-|y|^2+(x+\sqrt2 z)\cdot y\Big)\,dy
=\pi^{d/2}\exp\!\Big(\frac{|x+\sqrt2 z|^2}{4}\Big).
\]
Hence
\begin{align*}
&(2\pi)^{-d/2}\int_{\R^d}
\exp\!\Big(-\tfrac12|y|^2+\sqrt2\,z\cdot y-\tfrac12|x-y|^2\Big)\,dy
\\
&\qquad
=(2\pi)^{-d/2}\,\pi^{d/2}\,
\exp\!\Big(-\frac{|x|^2}{2}+\frac{|x+\sqrt2 z|^2}{4}\Big)
=2^{-d/2}\exp\!\Big(-\frac{|x|^2}{4}+\frac{1}{\sqrt2}x\cdot z+\frac12 z\cdot z\Big),
\end{align*}
where we used $|x+\sqrt2 z|^2=|x|^2+2\sqrt2\,x\cdot z+2\,z\cdot z$.
Multiplying by the prefactor $e^{-\frac12 z\cdot z}$ from the Bargmann kernel cancels
the quadratic term in $z\cdot z$, yielding \eqref{eq:bargmann-exact-T1}.
From \eqref{eq:bargmann-exact-T1} and $e^{-|x|^2/4}\le 1$,
\[
|(\mathcal B g^\star)(z)|
\le
\pi^{-d/4}\,2^{-d/2}\int_{\R^d} |w(x)|\,
\Big|\exp\!\Big(\frac{1}{\sqrt2}x\cdot z\Big)\Big|\,dx
=
\pi^{-d/4}\,2^{-d/2}\int_{\R^d} |w(x)|\,
\exp\!\Big(\frac{1}{\sqrt2}x\cdot \Re z\Big)\,dx.
\]
If $\mathrm{supp}(w)\subseteq\{|x|\le R\}$, then $x\cdot \Re z\le |x|\,|\Re z|\le R\,|\Re z|$,
and thus
\[
|(\mathcal B g^\star)(z)|
\le
\pi^{-d/4}\,2^{-d/2}\,M_0\,
\exp\!\Big(\frac{R}{\sqrt2}|\Re z|\Big).
\]
On the polydisk $\max_j|z_j|\le r$ we have $|\Re z|\le |z|\le \sqrt d\,r$, hence
\[
\sup_{\max_j |z_j|\le r}|(\mathcal B g^\star)(z)|
\le
\pi^{-d/4}\,2^{-d/2}\,M_0\,
\exp\!\Big(\frac{R}{\sqrt2}\sqrt d\,r\Big),
\]
which is \eqref{eq:bargmann-polydisk-T1}. 
\end{proof}

\begin{lemma}
\label{bracketing number}
Let $\mathcal F_B = \{f_c(x) := \sum_{|\alpha| \le n} c_\alpha \psi_\alpha(x), \sum_{|\alpha| \le n} |c_{\alpha}|^2 \le B\}$. 
Then there exists some absolute constant $C > 0$ such that
$$
\log \mathcal N_{[]}(\mathcal F_B, \| \cdot \|_\infty, \varepsilon) \le p \log \bigg (1 + \frac{BC \sqrt{p}}{\varepsilon} \bigg )\, ,
$$
where $p = \binom{n+d}{d}$.
\end{lemma}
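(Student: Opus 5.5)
We would prove Lemma \ref{bracketing number} by reducing bracketing in sup-norm to covering of the coefficient ball in $\R^p$, via a Lipschitz parametrization. The class $\mathcal F_B$ is the image of the Euclidean ball $\{c\in\R^p:\ \|c\|_2\le \sqrt B\}$ under the linear map $c\mapsto f_c=\sum_{|\alpha|\le n}c_\alpha\psi_\alpha$, where $p=\#\{\alpha\in\N_0^d:|\alpha|\le n\}=\binom{n+d}{d}$. The key uniform estimate is that the Hermite functions are uniformly bounded, $\|\psi_\alpha\|_\infty\le \pi^{-d/4}\le 1$ (Cram\'er's inequality applied coordinatewise to the one-dimensional normalized Hermite functions in \eqref{eq:psi-n-prob}). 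By Cauchy--Schwarz,
\[
\|f_c-f_{c'}\|_\infty\le \sum_{|\alpha|\le n}|c_\alpha-c'_\alpha|\,\|\psi_\alpha\|_\infty\le \sqrt p\,\|c-c'\|_2 .
\]
So the map is $\sqrt p$-Lipschitz from $(\R^p,\|\cdot\|_2)$ into $(L^\infty,\|\cdot\|_\infty)$.

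Next we turn a covering of the coefficient ball into brackets. Take a $\delta$-net $\{c^{(1)},\dots,c^{(m)}\}$ of the ball of radius $\sqrt B$ in $\R^p$. The standard volumetric argument gives $m\le (1+2\sqrt B/\delta)^p$. For each net point, define the bracket $l_j:=f_{c^{(j)}}-\sqrt p\,\delta$ and $u_j:=f_{c^{(j)}}+\sqrt p\,\delta$. If $\|c-c^{(j)}\|_2\le\delta$, then $l_j\le f_c\le u_j$ pointwise by the Lipschitz bound, and $\|u_j-l_j\|_\infty=2\sqrt p\,\delta$. We choose $\delta=\varepsilon/(2\sqrt p)$. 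This gives
\[
\log\mathcal N_{[]}(\mathcal F_B,\|\cdot\|_\infty,\varepsilon)\le p\log\Bigl(1+\frac{4\sqrt{Bp}}{\varepsilon}\Bigr).
\]

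The stated bound has $B$ rather than $\sqrt B$ inside the logarithm. For $B\ge1$, $\sqrt B\le B$, so the claim follows with $C=4$. For $B<1$ we would either absorb the difference by noting $\sqrt B\le \max(B,1)$ and adjusting $C$, or (more honestly) we would state the result with $\sqrt B$, which is the stronger form. In the application in Theorem \ref{approx theorem risk}, $B\asymp nd^d\ge1$, so this is harmless. For the scaled basis $\psi^{(\lambda)}_\alpha$, the sup-norm picks up a factor $\lambda^{d/2}$, which is absorbed into $C$.

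The only nontrivial ingredient, and the step we expect to be the main obstacle, is the uniform sup bound on $\psi_\alpha$ independent of $\alpha$. We would cite Cram\'er's inequality $|\psi_k(x)|\le \pi^{-1/4}$ for the one-dimensional normalized Hermite functions. If that is unavailable in the needed normalization, a cruder bound polynomial in $n$ also suffices: it only changes $C\sqrt p$ by a factor $\mathrm{poly}(n)$ inside the logarithm, which is harmless downstream.
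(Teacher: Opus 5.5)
Your proof is correct and is essentially the paper's argument: cover the coefficient ball in $\R^p$, use $\|f_c-f_{c'}\|_\infty\le\sqrt p\,\max_\alpha\|\psi_\alpha\|_\infty\,\|c-c'\|_2$, and widen each net point into a sup-norm bracket of width $\varepsilon$. Your uniform bound $\|\psi_\alpha\|_\infty\le 1$ also makes $C$ genuinely independent of $d$, whereas the paper carries a $C^d$ that it silently drops.

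You are right that the ball has radius $\sqrt B$. The paper's covering bound $(1+2B/\varepsilon)^p$ implicitly uses $\sqrt B\le B$, and for small $B$ the stated form actually fails. Take $d=1$, $n=0$, $\varepsilon=B\to0$: evaluating at $x=0$ forces at least $2\pi^{-1/4}B^{-1/2}$ brackets, while the claimed bound allows at most $1+C$. So your fallback via $\max(B,1)$ does not rescue the statement. The correct fix is your $\sqrt B$ version, or the restriction $B\ge1$, which covers the application $B\asymp nd^d$.
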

\begin{proof}
    Denote the set of coefficients by $\mathcal C:= \{c = (c_\alpha, |\alpha| \le n) \in \R^p \text{ with } \sum_{|\alpha| \le n} |c_{\alpha}|^2 \le B\}$. Note that
    $$
    \mathcal N(\mathcal C, \|\cdot\|_2, \varepsilon) \le \bigg (1 + \frac{2B}{\varepsilon} \bigg)^p,
    $$
    Let $f_{c}, f_{c^\prime} \in \mathcal F_B$. Then
    $$
    \| f_c - f_{c^\prime} \|_\infty \le \sqrt{p} \max_{|\alpha| \le n} \|\psi_\alpha\|_\infty \|c - c^\prime\|_2 \le C^d \sqrt{p} \|c - c^\prime\|_2,
    $$ 
    where $C$ is some absolute constant. 
    Hence,
    $$
    \mathcal N_{[]}(\mathcal F_B, \| \cdot \|_\infty, \varepsilon) \le \bigg (1 + \frac{4BC \sqrt{p}}{\varepsilon} \bigg)^p
    $$
\end{proof}

\begin{proposition}\label{prop:raising-direct}
Let $H_n$ be the \emph{physicists' Hermite polynomials} defined by the generating function
\[
\sum_{n=0}^\infty H_n(x)\,\frac{t^n}{n!}=e^{2xt-t^2},\qquad x,t\in\R.
\]
Define the (normalized) Hermite functions
\begin{equation}\label{eq:psi-phys}
\psi_n(x):=\frac{1}{(2^n n!\sqrt\pi)^{1/2}}\,H_n(x)\,e^{-x^2/2},
\qquad x\in\R,\ n\in\N_0,
\end{equation}
and the creation operator
\[
a^\ast:=\frac{1}{\sqrt2}\Big(x-\frac{d}{dx}\Big).
\]
Then for every $n\ge 1$,
\begin{equation}\label{eq:raising-identity}
a^\ast \psi_{n-1}=\sqrt n\,\psi_n,
\qquad\text{equivalently}\qquad
\psi_n=\frac{1}{\sqrt n}\,a^\ast\psi_{n-1}.
\end{equation}
\end{proposition}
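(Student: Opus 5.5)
The plan is to prove the raising identity $a^\ast\psi_{n-1}=\sqrt n\,\psi_n$ by a direct computation with the physicists' Hermite polynomials. The key input is the recurrence $H_n(x)=2xH_{n-1}(x)-H_{n-1}'(x)$, which I will derive from the generating function. The computation then reduces to bookkeeping of the normalizing constants.

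\textbf{Step 1: the recurrence for $H_n$.} Set $G(x,t):=e^{2xt-t^2}=\sum_{n\ge0}H_n(x)t^n/n!$. Differentiating in $x$ gives $\partial_x G=2tG$, hence $H_n'(x)=2nH_{n-1}(x)$. Differentiating in $t$ gives $\partial_t G=(2x-2t)G$, hence $H_{n+1}(x)=2xH_n(x)-2nH_{n-1}(x)$. Combining the two,
\[
H_n(x)=2xH_{n-1}(x)-2(n-1)H_{n-2}(x)=2xH_{n-1}(x)-H_{n-1}'(x),\qquad n\ge1,
\]
with the convention $H_{-1}\equiv0$ for the case $n=1$. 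Termwise differentiation of the series is justified because $G$ is entire in $(x,t)$.

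\textbf{Step 2: apply $a^\ast$.} Write $\psi_{n-1}=\kappa_{n-1}H_{n-1}e^{-x^2/2}$ with $\kappa_k:=(2^kk!\sqrt\pi)^{-1/2}$. The product rule gives
\[
\frac{d}{dx}\psi_{n-1}=\kappa_{n-1}\big(H_{n-1}'-xH_{n-1}\big)e^{-x^2/2}.
\]
Hence
\[
\Big(x-\frac{d}{dx}\Big)\psi_{n-1}=\kappa_{n-1}\big(2xH_{n-1}-H_{n-1}'\big)e^{-x^2/2}=\kappa_{n-1}H_ne^{-x^2/2},
\]
where the last equality is Step 1. Therefore $a^\ast\psi_{n-1}=\frac{\kappa_{n-1}}{\sqrt2\,\kappa_n}\psi_n$.

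\textbf{Step 3: constants.} The ratio of normalizing constants is $\kappa_{n-1}/\kappa_n=(2^nn!/(2^{n-1}(n-1)!))^{1/2}=\sqrt{2n}$. Dividing by $\sqrt2$ yields the factor $\sqrt n$, which proves \eqref{eq:raising-identity}; the equivalent form follows by dividing by $\sqrt n$.

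There is no real obstacle here. The only point needing care is the index bookkeeping in Step 1 and the $n=1$ edge case ($H_0=1$, $H_1=2x$), which can be checked directly. Note that this proposition uses the physicists' normalization, whereas \eqref{eq:psi-n-prob} was written with a differently defined $\mathrm{He}_n$. The identity I prove is exactly the one used earlier to obtain $(\mathcal B\psi_n)(z)=z^n/\sqrt{n!}$.
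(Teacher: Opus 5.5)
Your proof is correct and is essentially the paper's argument: the two recurrences obtained by differentiating the generating function in $x$ and in $t$, the product-rule identity $\bigl(x-\frac{d}{dx}\bigr)(H_{n-1}e^{-x^2/2})=H_ne^{-x^2/2}$, and the normalization check $\kappa_{n-1}/\sqrt2=\sqrt n\,\kappa_n$. Your explicit handling of $n=1$ via $H_{-1}\equiv0$ tidies up a small point: the paper states the three-term recurrence only for $n\ge1$ but then applies it at index $n-1$. Incidentally, the Rodrigues-defined $\mathrm{He}_n$ in \eqref{eq:psi-n-prob} is the same physicists' $H_n$, so the two definitions of $\psi_n$ coincide.
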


\begin{proof}
From the generating function, differentiate with respect to $x$:
\[
\sum_{n=0}^\infty H_n'(x)\,\frac{t^n}{n!}
=\partial_x\big(e^{2xt-t^2}\big)=2t\,e^{2xt-t^2}
=2t\sum_{n=0}^\infty H_n(x)\,\frac{t^n}{n!}
=\sum_{n=1}^\infty 2n\,H_{n-1}(x)\,\frac{t^n}{n!}.
\]
Comparing coefficients of $t^n$ gives
\begin{equation}\label{eq:H-derivative}
H_n'(x)=2n\,H_{n-1}(x),\qquad n\ge 1.
\end{equation}
Next, differentiate the generating function with respect to $t$:
\[
\sum_{n=0}^\infty H_{n+1}(x)\,\frac{t^n}{n!}
=\partial_t\big(e^{2xt-t^2}\big)=(2x-2t)e^{2xt-t^2}
=2x\sum_{n=0}^\infty H_n(x)\,\frac{t^n}{n!}
-2\sum_{n=0}^\infty H_n(x)\,\frac{t^{n+1}}{n!}.
\]
Rewrite the last term as $\sum_{n=0}^\infty 2n\,H_{n-1}(x)\,\frac{t^n}{n!}$ and compare coefficients:
\begin{equation}\label{eq:H-recurrence}
H_{n+1}(x)=2x\,H_n(x)-2n\,H_{n-1}(x),\qquad n\ge 1.
\end{equation}
Using the product rule,
\[
\frac{d}{dx}\big(H_{n-1}(x)e^{-x^2/2}\big)
=H_{n-1}'(x)e^{-x^2/2}-xH_{n-1}(x)e^{-x^2/2}.
\]
Hence
\begin{align}
\Big(x-\frac{d}{dx}\Big)\big(H_{n-1}e^{-x^2/2}\big)
&=xH_{n-1}e^{-x^2/2}-\Big(H_{n-1}'e^{-x^2/2}-xH_{n-1}e^{-x^2/2}\Big)\notag\\
&=\big(2xH_{n-1}-H_{n-1}'\big)e^{-x^2/2}. \label{eq:key}
\end{align}
Now use \eqref{eq:H-derivative} with $n-1$:
\[
H_{n-1}'(x)=2(n-1)H_{n-2}(x).
\]
Insert this into \eqref{eq:key} and apply \eqref{eq:H-recurrence} with index $n-1$:
\[
2xH_{n-1}-H_{n-1}'
=2xH_{n-1}-2(n-1)H_{n-2}
=H_n.
\]
Therefore we have shown 
\begin{equation}\label{eq:raising-core}
\Big(x-\frac{d}{dx}\Big)\big(H_{n-1}(x)e^{-x^2/2}\big)=H_n(x)e^{-x^2/2}.
\end{equation}
Let
\[
N_n:=(2^n n!\sqrt\pi)^{-1/2},
\quad\text{so that}\quad
\psi_n=N_n\,H_n\,e^{-x^2/2}.
\]
Using \eqref{eq:raising-core},
\[
a^\ast\psi_{n-1}
=\frac{1}{\sqrt2}\Big(x-\frac{d}{dx}\Big)\big(N_{n-1}H_{n-1}e^{-x^2/2}\big)
=\frac{N_{n-1}}{\sqrt2}\,H_n\,e^{-x^2/2}.
\]
It remains to compare $\frac{N_{n-1}}{\sqrt2}$ with $\sqrt n\,N_n$:
\[
\frac{N_{n-1}}{\sqrt2}
=\frac{1}{\sqrt2}\,(2^{n-1}(n-1)!\sqrt\pi)^{-1/2}
=(2^n (n-1)!\sqrt\pi)^{-1/2},
\]
and
\[
\sqrt n\,N_n
=\sqrt n\,(2^n n!\sqrt\pi)^{-1/2}
=(2^n (n-1)!\sqrt\pi)^{-1/2}.
\]
Thus $\frac{N_{n-1}}{\sqrt2}=\sqrt n\,N_n$, and consequently
\[
a^\ast\psi_{n-1}=\sqrt n\,N_n\,H_n\,e^{-x^2/2}=\sqrt n\,\psi_n,
\]
which is \eqref{eq:raising-identity}.
\end{proof}

\section{Further Details of Numerical Experiments}
\label{sec:numerical_details}

In this section, we elaborate on details of numerical experiments presented in Section \ref{sec:numerical}.
The section is organized as follows. In Appendix \ref{sec:general}, we discuss general implementation details. Appendix \ref{sec:metrics} presents information about the metrics used and describes the algorithm. Appendix \ref{sec:toy} provides additional information about the Swiss-Roll to S-Curve two dimensional problem experiment. Appendix \ref{sec:25gauss} provides additional details about the 25 Gaussian mixture Extrapolation experiment. Appendix \ref{sec:Single_Cell} provides additional details about the biological data experiment. Finally, Appendix \ref{sec:hyperparams} presents all final hyperparameter values used for the experiments.

\subsection{General Implementation Details} \label{sec:general}

In our experiments, we paid close attention to hyperparameter selection. Due to the algorithm's specifics, careful hyperparameter tuning is essential for stable training and generation. This was accomplished by using Optuna \citep{Akiba2019}. We used 50 trial optimization for Optuna over the sliced $ \mathbb{W}_1 $ metric in the 25 Gaussian experiment and 100 trial optimization for Optuna over the sliced $ \mathbb{W}_1 $ metric for Single Cell data.

The calculations were fully performed on the Nvidia T4 GPU. For the SinkhornBridge algorithm, we used the hyperparameter values provided in the official repository, as we were unable to conduct extensive testing of the algorithm and select its hyperparameters ourselves.

\subsection{Metrics Used in the Experiments} \label{sec:metrics}

Sliced Wasserstein Distance was used as main metric since it provides quantitative, theoretically-grounded measures of distribution similarity and has been widely adopted in research papers and practical applications.

\[
\text{Sliced} ~ \mathbb{W}_1(X,Y) = \frac{1}{N}\sum_{n=1}^N \mathbb{W}_1(\{\langle x_i, \theta_n \rangle\}_{i=1}^M, \{\langle y_i, \theta_n \rangle\}_{i=1}^M),
\]
where $\{\theta_n\}_{n=1}^N $ are random projections uniformly distributed on $\mathbb{S}^{d-1}$.

For the Sliced Wasserstein Distance, 100 random projections were used.

We also provide a description of the training algorithm for the proposed algorithm training \ref{code:algo_train} and sampling \ref{code:algo_sample}. The pseudocode provided allows a reader to visually evaluate how our approach differs from SinkhornBridge.

\subsection{Details of Evaluation on the Swiss-Roll to S-Curve Experiment} \label{sec:toy} 

\begin{figure*}
    \centering
    \includegraphics[width=0.4\linewidth]{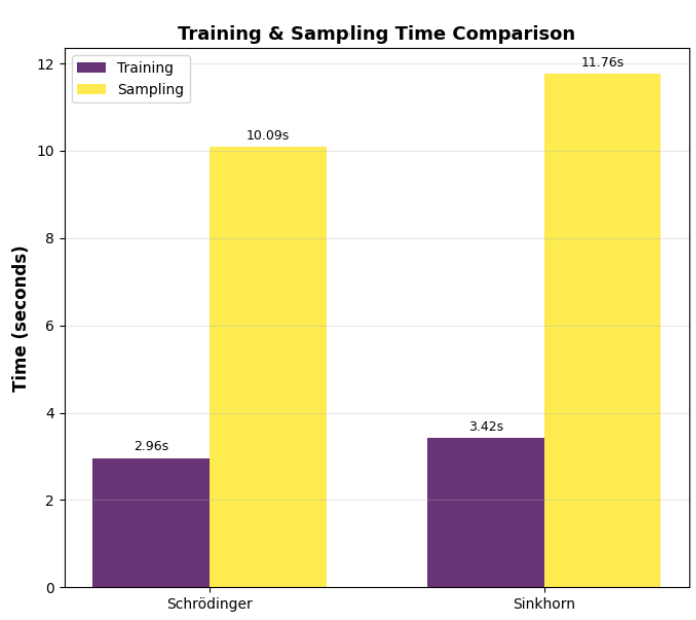}
    \caption{Comparison of training and sampling times for ERM-Bridge and SinkhornBridge on the Swiss-Roll to S-Curve translation task. We used 2000 training points for both algorithms.}
    \label{fig:SwissRoll_time}
\end{figure*}

In this experiment, we validated the algorithm's performance and sampling results at various intermediate time points on a common two dimensional problem in generative modelling. Optuna did not perform hyperparameter optimization for this problem, as it is not complex and both algorithms solve it well for any reasonable choice of training and sampling parameters.

We also compared the running time of our algorithm and SinkhornBridge on this problem, demonstrating significant sampling speedup.

\subsection{Details of Evaluation on the Gaussian Mixture} \label{sec:25gauss} 

%\begin{figure*}
%    \centering
%    \includegraphics[width=0.95\linewidth]{plots/samples_grid.png}
%    \caption{Transport samples from truncated normal on $ [-1, 1], [-2,2], [-3, 3], [-5,5] $ for ERM-Bridge in the top row and for SinkhornBridge in the bottom row. It is clear that the quality of SinkhornBridge degrades faster as the truncation boundaries are narrowed.}
%    \label{fig:25gauss_samples}
%\end{figure*}

In this experiment, we aimed to clearly demonstrate the behavior of our proposed algorithm and SinkhornBridge with explicit data bias in the sampling dataset relative to the train dataset. To this end, we created a synthetic example using a uniform grid of 25 Gaussians in two-dimensional space and a truncated Normal distribution of $ [-10, 10] $ in the train dataset and $ [-1, 1], [-3, 3], [-5, 5], [-10, 10] $ in test.

The problems arising from the discrete nature of SinkhornBridge become apparent upon visual inspection of the plots. The value of the experiment is that similar problems demonstrated in the synthetic example also arise with real data. For example, our algorithm demonstrated higher quality on Single Cell data \ref{sec:Single_Cell}, where both the initial and final distributions are only available as samples.

\subsection{Details of Evaluation on the Single Cell data} \label{sec:Single_Cell} 

In this experiment, we aimed to demonstrate the performance of our algorithm on the data-to-data translation task. The continuous log potential not only yielded the best metric value but was also easier to optimize, requiring less sampling time.

The results for the LightSB model were taken from the paper \citep{KorotinGushchinBurnaev2024}.

\subsection{Final Hyperparameter Values} \label{sec:hyperparams}

Below are the final hyperparameter values for all experiments.

\noindent \textbf{Swiss-Roll to S-Curve Experiment:} 
\begin{itemize}
\item Parameters for experiment: $ \text{batch size} = 1000, ~ ~ \text{lr} = 2e-3, ~ \text{epochs} = 1500, ~ \sigma\text{\_end} = 0.5, ~ \text{loss\_scale} = 1.0, ~ \text{hidden\_dim} = 64. $
\end{itemize}

\noindent \textbf{Gaussian Mixture Experiment:} 
\begin{itemize}
\item Parameters for experiment: $ \text{batch size} = 64, ~ ~ \text{lr} = 5e-4, ~ \text{epochs} = 140, ~ \sigma\text{\_end} = 0.9, ~ \text{loss\_scale} = 0.11 ~ \text{hidden\_dim} = 128. $
\end{itemize}

\noindent \textbf{Single Cell Experiment:} 
\begin{itemize}
\item Parameters for experiment: $ \text{batch size} = 2048, ~ ~ \text{lr} = 1e-4, ~ \text{epochs} = 141, ~ \sigma\text{\_end} = 0.4216, ~ \text{loss\_scale} = 196.5431, ~ \text{hidden\_dim} = 2048. $
\end{itemize}

\begin{algorithm} 
\caption{Training of ERM-Bridge}
\begin{algorithmic}[1]
\STATE  \textbf{Input:} Datasets $\mathcal{X}, \mathcal{Y}$, kernel bandwidth $\sigma$, learning rate $\eta$, batch size $B$, the number of iterations $ N_{steps}$
\STATE  \textbf{Initialize:} Neural network parameters $\theta$ for $\phi_\theta$
\STATE 
\FOR{$i$ from $1$ to $ N_{steps}$}
    \STATE Sample batches $X_b \sim \mathcal{X}$ and $Y_b \sim \mathcal{Y}$
    \STATE
    %\STATE // \textit{1. Forward Potential Evaluation:}
    \FOR{$y \in Y_b$}
        \STATE $V(y) \leftarrow \log \phi_\theta(y)$
    \ENDFOR
    \STATE
    %\STATE // \textit{2. Log-Domain Sinkhorn Iteration:}
    %\STATE // \textit{Compute backward marginal constraint (Log-Dual D):}
    \FOR{$x \in X_b$}
        \STATE $\text{term}(y) \leftarrow -\frac{\|x-y\|^2}{2\sigma^2} - V(y)$ \quad for all $y \in Y_b$
        \STATE $\log D(x) \leftarrow \log \sum_{y \in Y_b} \exp(\text{term}(y))$
    \ENDFOR
    \STATE
    %\STATE // \textit{Compute forward projection (Log-Primal C):}
    \FOR{$y \in Y_b$}
        \STATE $\text{term}(x) \leftarrow -\frac{\|y-x\|^2}{2\sigma^2} - \log D(x)$ \quad for all $x \in X_b$
        \STATE $\log (C\phi)(y) \leftarrow \log \sum_{x \in X_b} \exp(\text{term}(x))$
    \ENDFOR
    \STATE
    %\STATE // \textit{3. Compute Centered Loss:}
    \STATE $\Delta(y) \leftarrow V(y) - \log (C\phi)(y)$
    \STATE $\mathcal{L}(\theta) \leftarrow \frac{1}{|Y_b|} \sum_{y \in Y_b} \left( \Delta(y) - \text{mean}(\Delta) \right)^2$
    \STATE
    %\STATE // \textit{4. Update:}
    \STATE $\theta \leftarrow \theta - \eta \nabla_\theta \mathcal{L}(\theta)$
\ENDFOR
\STATE \textbf{Output:} Learned potential parameters $\theta^*$
\end{algorithmic} \label{code:algo_train}
\end{algorithm}

\begin{algorithm}
\caption{Sampling via Learned Continuous Drift}
\begin{algorithmic}[1]
\STATE \textbf{Input:} Trained model $\phi_{\theta^*}$, Initial sample $x_0$, Reference target samples $\mathcal{Y}_{ref}$, Total time $T$, Steps $K$
\STATE \textbf{Initialize:} $x \leftarrow x_0$, $t \leftarrow 0$, $\Delta t \leftarrow T/K$
\STATE
\FOR{$k = 0$ to $K-1$}
    \STATE Set noise level $\sigma_t$ (e.g., via cosine schedule)
    \STATE $\nu_t \leftarrow \sigma_t^2 (T - t)$
    \STATE
    %\STATE // \textit{1. Estimate Drift via Autodiff on Log-Potential:}
    \FOR{$j = 1 \dots |\mathcal{Y}_{ref}|$}
        \STATE $L_j \leftarrow -\frac{\|x - y_j\|^2}{2\nu_t} - \log \phi_{\theta^*}(y_j)$
    \ENDFOR
    \STATE $h(x) \leftarrow \log \sum_j \exp(L_j)$
    \STATE $g \leftarrow \nabla_x h(x)$
    \STATE $u(x,t) \leftarrow \sigma_t^2 \cdot g$
    \STATE
    %\STATE // \textit{2. Euler-Maruyama Step:}
    \STATE Sample noise $\xi \sim \mathcal{N}(0, I)$
    \STATE $x \leftarrow x + u(x,t)\Delta t + \sigma_t \sqrt{\Delta t} \cdot \xi$
    \STATE $t \leftarrow t + \Delta t$
\ENDFOR
\STATE \textbf{Return:} Transported sample $x_T$
\end{algorithmic} \label{code:algo_sample}
\end{algorithm}

\end{document}